\newcommand{\cL}{{\mathcal L}}
\definecolor{mygray}{gray}{0.4}
\DeclareMathOperator{\id}{id}
\DeclareMathOperator{\sg}{sg}
\DeclareMathOperator*{\argmin}{arg\,min}
\newcommand{\modelname}{{\textsc{DGRL}}}
\definecolor{codegreen}{rgb}{0,0.6,0}
\definecolor{codegray}{rgb}{0.5,0.5,0.5}
\definecolor{codepurple}{rgb}{0.58,0,0.82}
\definecolor{backcolour}{rgb}{0.95,0.95,0.92}
\lstdefinestyle{mystyle}{
    backgroundcolor=\color{backcolour},   
    commentstyle=\color{codegreen},
    keywordstyle=\color{magenta},
    numberstyle=\tiny\color{codegray},
    stringstyle=\color{codepurple},
    basicstyle=\ttfamily\footnotesize,
    breakatwhitespace=false,         
    breaklines=true,                 
    captionpos=b,                    
    keepspaces=true,                 
    numbers=left,                    
    numbersep=5pt,                  
    showspaces=false,                
    showstringspaces=false,
    showtabs=false,                  
    tabsize=2
}
\title{Discrete Factorial Representations as an \\
Abstraction for Goal Conditioned RL}
\author{\parbox{\linewidth}{Riashat Islam$^{1,6,7}$\thanks{Corresponding Author E-mails: riashat.islam@mail.mcgill.ca, airudhgoyal9119@gmail.com}, Hongyu Zang$^3$, Anirudh Goyal$^{2, 5}$,
Alex Lamb$^{7}$, 
Kenji Kawaguchi$^{4}$,\\ Xin Li$^{3}$, Romain Laroche$^{7}$,
Yoshua Bengio$^{2}$,  Remi Tachet Des Combes$^{7}$ } \vspace{0.3em} \\ 
$^1$ McGill University, Mila, Quebec AI Institute 
$^2$ University of Montreal, Mila, Quebec AI Institute\\
$^3$ Beijing Institute of Technology 
$^4$ Harvard University 
$^5$ DeepMind \\
$^6$ Microsoft Research, New York
$^7$ Microsoft Research, Montreal\\
}
\begin{document}
\maketitle
\vspace{-1.4em}
\begin{abstract}
Goal-conditioned reinforcement learning (RL) is a promising direction for training agents that are capable of solving multiple tasks and reach a diverse set of objectives.  How to \textit{specify} and \textit{ground} these goals in such a way that we can both reliably reach goals during training as well as generalize to new goals during evaluation remains an open area of research. Defining goals in the space of noisy and high-dimensional sensory inputs poses a challenge for training goal-conditioned agents, or even for generalization to novel goals. We propose to address this by learning factorial representations of goals and processing the resulting representation via a discretization bottleneck, for coarser goal specification, through an approach we call DGRL. We show that applying a discretizing bottleneck can improve performance in goal-conditioned RL setups, by experimentally evaluating this method on tasks ranging from maze environments to complex robotic navigation and manipulation. Additionally, we prove a theorem lower-bounding the expected return on out-of-distribution goals, while still allowing for specifying goals with expressive combinatorial structure.
\end{abstract}



\section{Introduction}
\vspace{-1mm}
Reinforcement Learning is a popular and highly general framework \citep{kaelbling1996reinforcement, sutton2018reinforcement} focusing on how to select actions for an agent to yield high long-term sum of rewards. An important question is how to control the desired behavior of an RL agent, both during training and evaluation \citep{kaelbling1987architecture}.  One way to control this behavior is by specifying a reward signal \citep{schultz1998predictive, silver2021reward}.  While this approach is very general, the reward signal can be hard to design and may not be the most informative form of feedback.  The credit assignment problem in RL can become difficult when the reward signal is sparse \citep{sutton1984temporal, tesauro1991practical, mnih2013playing, moore1993prioritized, singh1996reinforcement}, such as policy gradients becoming nearly flat in regions where reward is almost never achieved.  Generalization can also suffer if the agent only learns one way to achieve a high reward rather than learning a diverse set of skills for coping with novel challenges \citep{haarnoja2018soft}.

One potential way to flexibly specify and ground the desired behavior of RL agents is by training agents that receive a reward when they reach a goal specified explicitly to them \citep{Kaelbling93}. In this approach, called \textit{Goal-Conditioned RL}, a single agent is trained to reach a diverse set of goals, and is given a reward only when it reaches the goal it was instructed to reach \citep{sutton2011horde, schaul2015universal, pong2018temporal}.  This provides a richer signal for the agent than simply collecting more samples oriented around a single goal, as reaching multiple goals requires the agent to learn a more diverse and robust set of skills.  It also allows for more flexible and tightly constrained control over its desired behavior \citep{dayan1992feudal, barto2003recent, kulkarni2016hierarchical}. Finally, the diversity of goals seen during training should help improve both credit assignment and generalization \citep{schaul2015universal, plappert2018multi}.  

While this framework is promising, it introduces two new challenges: \textit{goal grounding} \citep{chao2011towards, akakzia2020grounding} and \textit{goal specification} \citep{bahdanau2018learning}.  Goal grounding refers to defining the goal space, and goal specification refers to selecting what goal the agent should try to reach in a given context.  The agent is only rewarded in goal-conditioned RL when reaching the reward it was instructed to reach, whereas in goal-free RL a reward is provided regardless of any such specification, which makes the nature of the agent's task fundamentally different.

\begin{wrapfigure}{rt}{0.53\textwidth}
  \begin{center}
    \includegraphics[width=\linewidth]{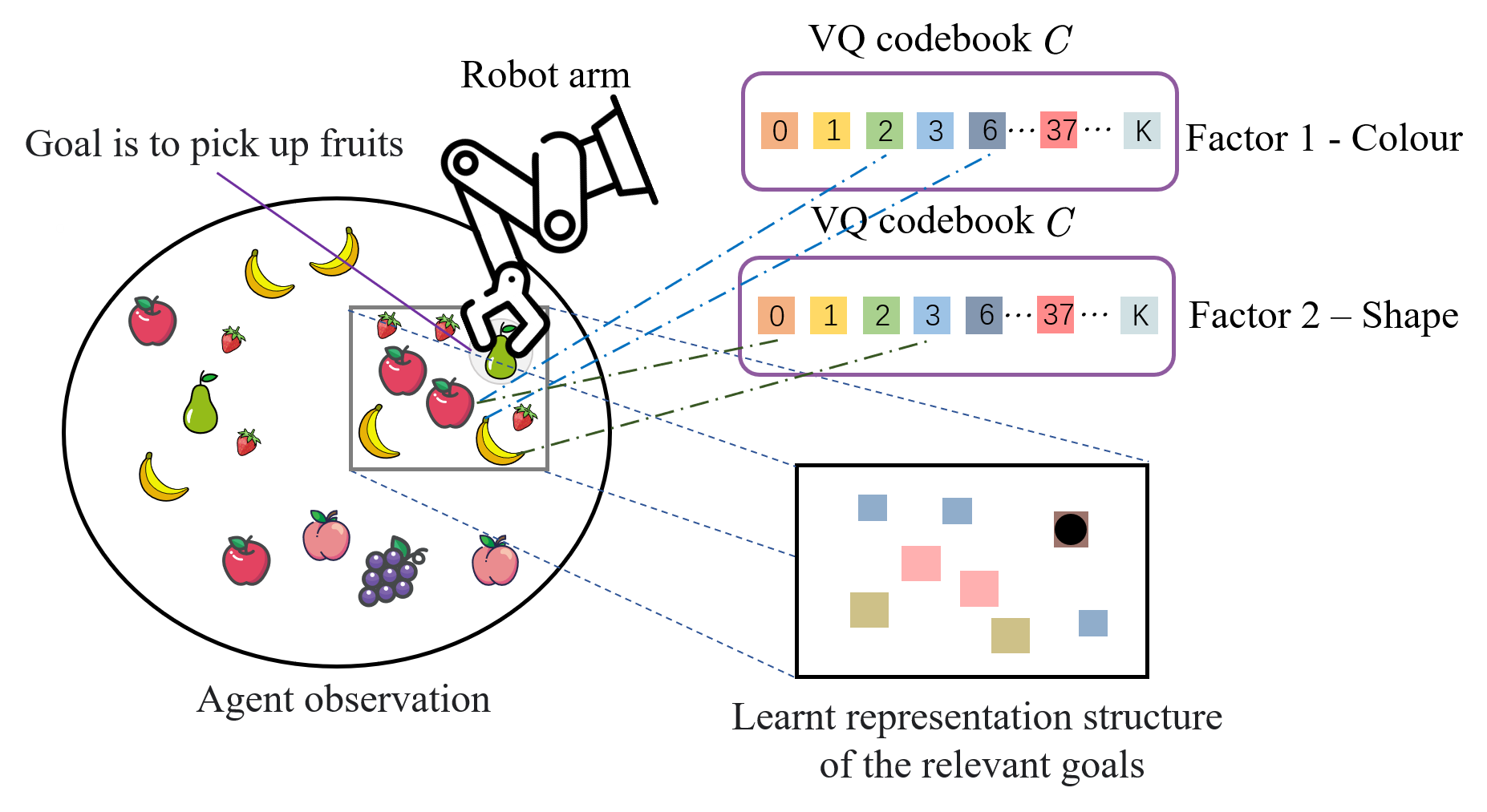}
  \end{center}
  \vspace{-1.5mm}
  \caption{ \small Illustration of learning \textit{discrete} and \textit{factorial} goal representations.}
  \label{fig:illustration}
  \vspace{-3mm}
\end{wrapfigure}

What makes grounding and specifying goals challenging? Consider trying to train a goal-conditioned RL agent to pick up various fruits from a table. For example, we may want it to pick up a red apple or a green pear (illustrated in Figure~\ref{fig:illustration}). The number of possible goals of interests may be fairly small, such as the set of all valid combinations of fruits and their colors, while the number of possible observations of goals is extremely large when working in a rich observation space (\textit{e.g} images from a camera).  \textit{Goal Grounding} refers to this challenge of relating high-dimensional observations and the space of relevant goals. \textit{Goal Specification} refers to picking a suitable goal for the agent to reach and computing an appropriate reward when it is reached. It also implies specifying goals reachable in the agent's current context \citep{nachum2018near, khetarpal2020can}. Goal specification can be done either manually by a developer or by another RL agent, such as a high-level agent which generates goals a lower-level agent then tries to reach \citep{dayan1992feudal, barto2003recent, kulkarni2016hierarchical, jiang2019language}.  Goals specified in language are an excellent fit for these desiderata, as language is a compressed discrete representation which is useful for out-of-distribution generalization, while being compositional and expressive \citep{grice1975logic, dal2002planning, jiang2019language, fu2019language,Adhikari2020,Weir2022}.  At the same time, connecting language feedback for an agent is non-trivial, requiring special assumptions or a labeling framework \citep{chevalier2018babyai}. 

We propose to learn the goal representations with self-supervised learning (either trained on their own, or jointly with the downstream RL objective) while forcing them to be \textit{discrete} and \textit{factorial}. To perform this discretization, we use Vector-Quantization \citep{van2017neural, razavi2019generating, liu2021discrete} which discretizes a continuous representation using a codebook of discrete and learnable codes. The approach proposed here, called \modelname, serves two complementary purposes. First, it provides a structured representation of the raw visual goals. By representing the visual goals as a \textit{composition} of discrete codes from a learned dictionary, it simplifies the grounding of unseen goals, \textit{i.e.}, goals not seen during training, to novel compositions of the trained discrete codes. We show empirically that this improves the generalization performance of goal-reaching policies while remaining expressive enough. Second, the learned discrete codes can be used by another agent (like a higher-level policy in hierarchical RL) to specify sub-goals to a lower-level policy, and eventually complete the task (\textit{i.e.}, reach the final goal). In this case, goal-inference is learned end-to-end.  The effectiveness of goal-conditioned HRL relies on the specification of semantically meaningful sub-goals. Using factorial discrete sub-goals allows the higher-level policy to specify semantically meaningful objectives to the lower-level policy. 



\vspace{-1.5em}
\section{Preliminaries}
\vspace{-1.5mm}
\textbf{Goal-conditioned RL.} We consider a goal-conditioned Markov Decision Process, where the goals $g\in\mathcal{G}$ live in the state space $\mathcal{S}$, \textit{i.e.}, $\mathcal{G} = \mathcal{S}$. We denote a goal-conditioned policy as $\pi(a | s, g)$ (either stochastic or deterministic), and its expected total return as $ J(\pi) =   \mathbb{E} \Big[ \sum_{t=0}^{T} R(s_t, g, a)   \Big]$ where the goal $g$ is either sampled from a distribution $\rho_g$ or provided by another higher level policy $\pi_{\theta_{h}}^{h}(g | s)$. The value function $V^{\pi}$ is additionally conditioned on goals, and is trained to predict the expected sum of future rewards conditioned on states and goals; $V^{\pi}(s,g) = \mathbb{E} \Big[  \sum_{t=0}^{T} R(s_t, g, a) \mid s_0 = s;\pi      \Big]$. As in standard RL, the objective in goal-conditioned RL is to maximize the expected discounted returns induced by the goal-conditioned policy. 

\textbf{Hierarchical Reinforcement Learning.} We consider goal-conditioned settings in which the goals are specified in the observation space.  In the hierarchical reinforcement learning (HRL) setup,  goals are provided by a higher level policy $\pi_{\theta_{h}}^{h}(g | s_{t})$. The higher level policy operates at a coarser time scale and chooses a goal $g_{t} \sim \pi_{\theta_{h}}^{h}(g | s_{t}) $ to reach for the lower level policy every K steps. The lower level policy executes primitive actions $\pi_{\theta_{l}}^{l}(a | s_{t}, g_{t})$ to reach the goals specified by the high-level policy and is trained to maximize the intrinsic reward provided by the high-level policy. The  higher level policy is trained to maximize the external reward \textit{i.e.}, the reward function specified by the MDP. Both the higher and lower level policies can be trained with any standard RL algorithms, such as Deep Q-Learning (DQN) \citep{mnih2013playing} or policy optimization based algorithms \citep{schulman2015trust,schulman2017proximal,Laroche2021}.  Alternately, one can also consider another setup for goal-conditioned RL, where the goals are provided by the environment $g_1, \dots g_L$, and are part of the state or observation space. At each episode of training, one of the goals is sampled from the distribution of goals $\rho_g$ and the policy is trained to reach the sampled goal. At test time, the agent can be evaluated either on its ability to reach goals within the distribution $\rho_g$, or for its out-of-distribution generalization capability to reach new kinds of goals. We consider both the HRL and goal-conditioned setups, and evaluate the significance of learning factorial representation of discrete latent goals in a series of complex goal-conditioned tasks.

\textbf{Vector Quantized Representations.} VQ-VAE \cite{van2017neural, razavi2019generating, liu2021discrete} discretizes the bottleneck representation of an auto-encoder by adding a codebook of discrete learnable codes. The input is passed through an encoder. The output of the encoder is compared to all the vectors in the codebook, and the codebook vector closest to the continuous encoded representation is fed to the decoder. The decoder is then tasked with reconstructing the input from this quantized vector. 

\textbf{Self-supervised learning of representations.} Several papers \citep{laskin2020curl, stooke2021decoupling, schwarzer2020data, mazoure2020deep} have demonstrated the benefits of using a pre-training stage where the representations of raw states are learned using self-supervised objectives in a task-agnostic fashion. After the pre-training stage, the representations can be used for (and potentially also fine-tuned on) downstream tasks. These self-supervised representations have been shown to improve sample efficiency.  

\section{Discrete Goal-Conditioned Reinforcement Learning (DGRL)}
\vspace{-1mm}
\begin{figure}[t]
    \centering
    \includegraphics[width=0.8\linewidth]{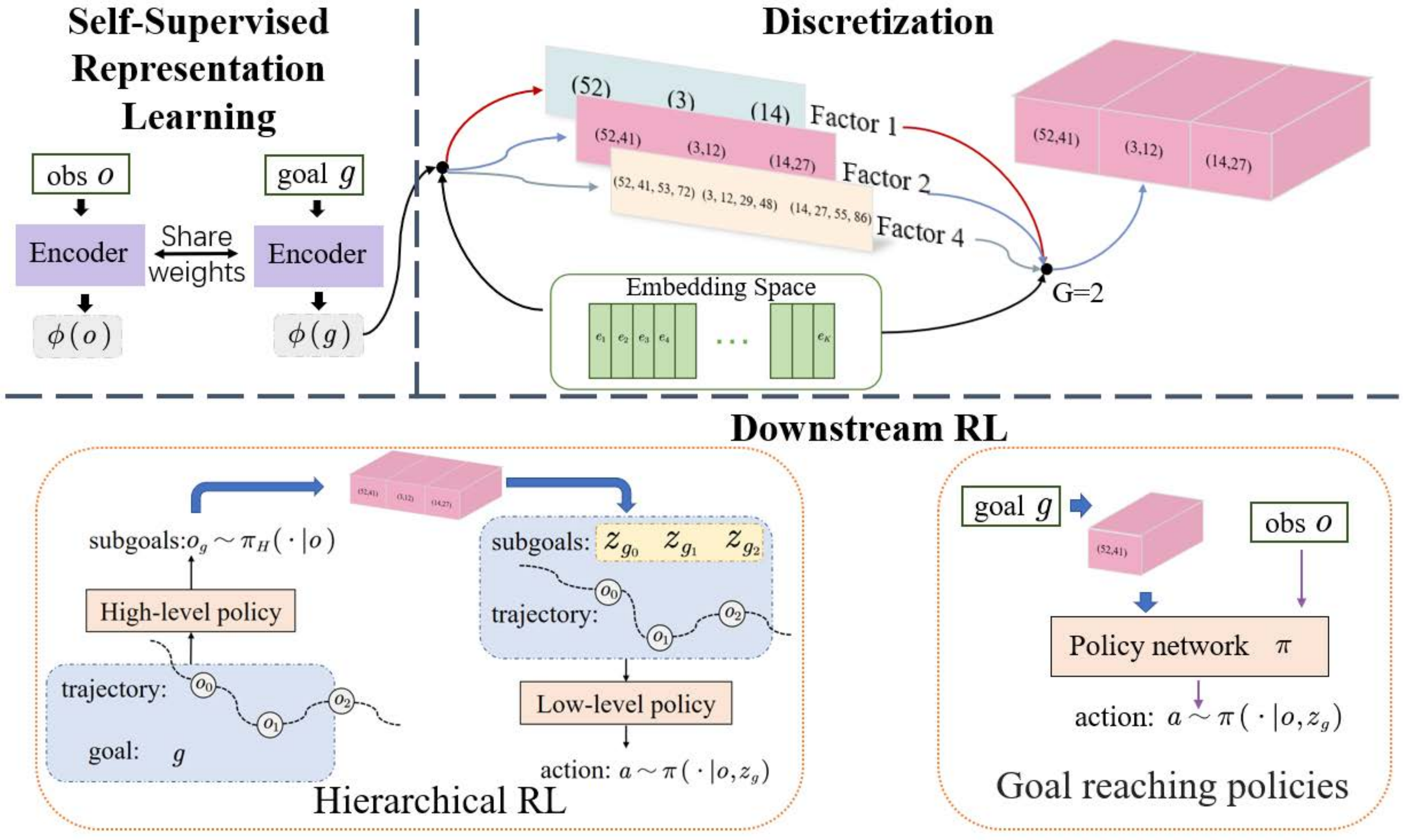}
    \label{fig:archit}%
    \caption{ \textbf{Summary of Proposed $\modelname$ Model} for improving \textit{goal grounding} and \textit{goal specification} by making goal representations \textit{discrete} and \textit{factorial}.  We learn a latent representation  for both observations and goals using a self-supervised learning method (sec. \ref{sec:ssl_learning}). We convert the learnt latent representation into discrete latents based on a VQ-VAE quantization bottleneck with multiple factor outputs  (sec. \ref{sec:learned_codebook}). We use the resulting discrete representations for downstream RL tasks: (i) to train a goal-conditioned policy or value function, and (ii) in the context of goal-conditioned hierarchical reinforcement learning (sec. \ref{sec:downstream_rl}). \vspace{-1.4em}
  }
\end{figure}
In this section, we provide technical details on the proposed framework, $\modelname$,  which consists of three parts: (a) learning representations of raw visual observations through self-supervised representation objectives, (b) processing the resulting representations via a learned dictionary of discrete codes, and (c) using the resulting discrete representations for downstream goal-conditioned and HRL tasks. We later describe, in Section \ref{sec:experiments}, how discrete goal representations can accelerate learning in complex navigation and manipulation tasks.  We emphasize that these representations can be learned at the same time as the downstream-RL objective or pre-trained with self-supervised learning, and then used as a fixed representation for RL.  


\paragraph{Self-Supervised Goal Representation Learning. } \label{sec:ssl_learning}
One can use any off-the shelf self-supervised method for learning representations of the raw state and the goal observations. We denote by $\phi$ the encoder network that takes as input the raw state and maps it to a continuous embedding: $z_e = \phi(s)$. Here, we explore two different self-supervised techniques for learning representations.  For simpler environments, we use a simple autoencoder with its standard reconstruction objective.  For more complex environments, we use the Deep InfoMax approach \citep{mazoure2020deep} which optimizes for a contrastive objective as a proxy to maximizing the mutual information between representations of nearby states in the same trajectory.

\paragraph{Processing continuous representations via a discrete codebook.} \label{sec:learned_codebook}
We learn discrete representations by using the vector-quantization method from the VQ-VAE paper \cite{van2017neural}, and follow the multi-factor setup used in Discrete-Value Neural Communication \cite{liu2021discrete}.  The discretization process for each vector $z_{e}  \in \Hcal\subset \RR^{m}$ is described as follows. First, vector $z_{e}$ is divided into $G$ segments $c_1, c_2,\dots, c_G$, where $z_{e}=\textsc{concatenate}(c_1, c_2, \dots, c_G),$ and each segment $c_i \in \RR^{m/G}$ (implying that $m$ is divisible by $G$). Each continuous segment $c_i$ is mapped independently to a discretized latent  vector $e \in \RR^{L \times (m/G)}$ where $L$ is the size of the discrete latent space (\textit{i.e.}, an $L$-way categorical variable):
$$
e_{o_i}=\textsc{discretize}(c_i), \quad \text{ where } o_i=\argmin_{j \in \{1,\dots,L\}} ||c_{i}-e_j||.
$$
These discrete codes, which we call the factors of the continuous representation $z_{e}$, are concatenated to obtain the final discretized vector $z_{q}$: 
\begin{equation}
\label{discrete_codes}
\begin{aligned}
z_{q}=\textsc{concatenate}(\textsc{discretize}(c_1), \textsc{discretize}(c_2),...,\textsc{discretize}(c_G)).
\end{aligned}
\end{equation}
The loss for vector quantization is: $\mathcal{L}_{\mathrm{discretization}} = \frac{\beta}{G}  \sum^{G}_{i}||c_{i}-\sg(e_{o_i})||^2_2$.

The training procedure closely follows both \cite{liu2021discrete} and \cite{van2017neural}.  Here, $\sg$ refers to a stop-gradient operation that blocks gradients from flowing into $e_{o_i}$, and $\beta$ is a hyperparameter which controls how strongly we move the codes toward the encoded values.  Unlike \cite{liu2021discrete}, we used a moving average to update the code embeddings rather than learning them directly as parameters. We update $e_{o_i}$ with an exponential moving average to encourage it to become close to the selected output segment $c_i$.  This update sets the new value of $e_{o_i}$ to be equal to $\eta e_{o_i} + (1-\eta) c_i$, where the value of $\eta$ is a fixed hyperparameter controlling how quickly the moving average updates.\footnote{Note that this could also be thought of as a gradient step on $e_{o_i}$ taken in the direction $c_i - e_{o_i}$.} The term $\sum^{G}_{i}||c_{i}-\sg(e_{o_i})||^2_2$ is often called the commitment loss.  We trained the VQ-quantization process together with other parts of the model by gradient descent. When there were multiple $z_e$ vectors to discretize in a model, the mean of the commitment loss across all $z_e$ vectors was used. 

\textbf{Summary.} The multiple steps described above can be summarized by $z_{q}=q(z_{e},L,G)$,  where $L$ is the codebook size, $G$ the number of factors per vector, and $q(\cdot)$ the whole discretization process.  We train the representations for both the state and goal observations with this discretization bottleneck applied to the  continuous representations resulting from the self-supervised training. The number of  factors $G$ is a hyper-parameter. In our experiments, we explored different values: $G=1, 2, 4, 8, 16$, and found that $G=16$ worked the best.  Discretizing with more factors slightly increases computation but reduces the number of model parameters due to the codebook embeddings being reused across the different factors.

\subsection{Using representations for downstream RL}
\label{sec:downstream_rl}

We use the discrete representations for downstream RL tasks: (i) to train a goal-conditioned policy, and (ii) in the context of hierarchical reinforcement learning. 

\textbf{Goal-conditioned RL.} Defining goals in the space of noisy, high-dimensional sensory inputs poses a challenge for generalization to novel goals because the encoder that maps the goal observations to the low dimensional latent representation may fail to generalize. One way to address this is to embed the continuous latent representation into a discrete representation such that the representation of the novel goal is mapped to the fixed set of latent discrete codes. This facilitate generalization to new combinations of these codes while making it easy for downstream learning to figure out the meaning of each discrete code. In this setup, instead of feeding the continuous state and goal embeddings to the agent, we use their discretized versions, thus grounding goal representations in the input space. 

We use the resulting representations to train a goal-conditioned policy $a_t \sim \pi_{\theta_l}^{l}(a| s_t, g_t)$ or a goal-conditioned action value function $Q(s_t, a_t, g_t)$. At each training episode, a goal is sampled from the goal distribution $\rho_g$, and the agent gets rewarded for reaching it. This reward can either be \textit{extrinsic}, \textit{i.e.}, part of the environment, or \textit{intrinsic}, \textit{i.e.}, part of the algorithm.  In $\modelname$, we define the intrinsic reward as the fraction of discrete factors which match in the respective representations of the goal observation and of the state observation.  At test time, the agent can either be evaluated on reaching goals within the distribution $\rho_g$, or for its generalization capability to goals not seen during training.


\textbf{Hierarchical RL.} The higher level policy $g_{t} \sim \pi_{\theta_{h}}^{h}(g \mid s_{t})$ outputs a continuous representation of goals $g$ by conditioning on the states every $K$ time-steps, it can also  output a sub-goal $s_g$ by conditioning on both states $s$ and environment goals $g$,\textit{ i.e.}, $\pi_{\theta_{h}}^{h}(s_g \mid s_{t}, g_{t})$. The effectiveness of goal-conditioned HRL relies on the specification of semantically meaningful sub-goals. Learned codebooks (Section \ref{sec:learned_codebook}) consisting of a set of discrete codes can be used by a higher level policy to \textit{specify} which goal to reach to a lower level policy. The use of learned codebooks ensures that the goal specified by the higher level policy is grounded in the space of raw-observations. 

In Section \ref{sec:experiments}, we empirically show the benefits of the proposed approach for training goal-reaching policies or goal-conditioned value functions, as well as in a goal-conditioned hierarchical RL setup. 

\vspace{-3mm}
\section{Theoretical Analysis}
\vspace{-2mm}
\label{sec:theory_analysis}
In this section, discretization is shown to improve generalization to novel goals by enhancing the concentration of the goal distribution within each neighborhood of discretized goal values; \textit{i.e.}, by decomposing the goal probability $p(g)$ into $p(g)=\sum_k p(g|g\in \Gcal_k)p(g \in \Gcal_k)$ with the neighborhood set $\{\Gcal_k\}_k$, it improves the overall performance in $p(g)$ by increasing the concentration in $p(g|g\in \Gcal_k)$. Intuitively, this is because the discretization removes varieties of possible goal values $g\in \Gcal_k$ for each neighborhood $\Gcal_k$.  To state our result, we define $\varphi_\theta(g)=\EE_{s_0}[V^{\pi}(s_0,g)]$, where $\theta \in \RR^m$ is the vector containing model parameters learned through $n$ goals observed during training phase, $g_1,\dots,g_n$. We denote the discretization of $g$ by $q(g)$, and the identity function by $\id$ as $\id(g)=g$. Let $\Qcal=\{q(g) : g \in \Gcal \}$ and $\hat d$ be a distance function. We use $\Qcal_i$ to denote the $i$-th element of $\Qcal$ (by ordering elements of $\Qcal$ with an arbitrary ordering). We also define $[n]=\{1,\dots,n\}$,  $
\Gcal_k=\{g\in \Gcal  :k= \argmin_{i \in [|\Qcal|]}  \hat d(q(g),\Qcal_i) \}
$,
 $
\Ical_{k}^{}=\{i\in[n]: g_i\in \Gcal_{k}\}, 
$ and $\Ical_Q=\{k \in [|\Qcal|]:|\Ical_{k}| \ge 1 \}$. We denote by $c$ a constant in  $(n,\theta,\Theta,\delta,S)$. 

The following theorem (proof in Appendix~\ref{sec:app_theory}) shows that the goal discretization improves the lower bound of  the expected sum of rewards for unseen goals $\EE_{g \sim \rho_g}[ (\varphi_\theta \circ \varsigma)(g))]$ by the margin of $\omega(\theta)$: 
\begin{theorem} \label{thm:1}
For any $\delta>0$,  with  probability at least $1-\delta$, the following holds for any $\theta \in \RR^m$ and $\varsigma \in \{\id, q\}$: 
\begin{align*}
\EE_{g \sim \rho_g}[ (\varphi_\theta \circ \varsigma)(g))]  \ge \frac{1}{n} \sum_{i=1}^n  (\varphi_\theta \circ \varsigma)(g_{i})-  c  \sqrt{\frac{2\ln(2/\delta)}{n}} - \one\{\varsigma=\id\}\omega(\theta)
\end{align*} 
where $\omega(\theta)=\frac{1}{n}\sum_{k\in \Ical_Q}|\Ical_{k}^{}|\left(\frac{1}{|\Ical_{k}|}\sum_{i \in \Ical_{k}}\varphi_\theta(g_{i}) - \EE_{g \sim \rho_g}[\varphi_\theta (g)|g\in  \Gcal_{k}] \right)$. Moreover,  for any compact  $\Theta \subset \RR^m$, if   $\varphi_\theta(g)$ is continuous at each $\theta \in \Theta$ for almost all $g$ and is dominated by a function  $\chi$ as  $|\varphi_\theta(g)|\le \chi(g)$ for all $\theta \in \Theta$ with  $\EE_{g}[\chi(g)]<\infty$,  then the following holds:  
$$
\sup_{\theta \in \Theta}|\omega(\theta)| {\xrightarrow  {P}}\ 0 \ \ \ \text{ when } \ \ \ n \rightarrow \infty. 
$$
\end{theorem}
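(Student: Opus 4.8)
The plan is to reduce both halves of the theorem to elementary facts about the fixed partition $\{\Gcal_k\}_{k=1}^{|\Qcal|}$ of $\Gcal$ induced by $q$ and $\hat d$ — which depends on neither the sample nor $\theta$ — together with the empirical and population cell masses $\hat p_k=|\Ical_{k}|/n$ and $p_k=\Proba_{g\sim\rho_g}(g\in\Gcal_k)$. Since $q$ uses a codebook of size $L$ and $G$ factors, $|\Qcal|\le L^G<\infty$, so every sum over $k$ is finite. I will write $k(g)$ for the index with $g\in\Gcal_{k(g)}$ and $\bar\varphi_{\theta,k}:=\EE_{g\sim\rho_g}[\varphi_\theta(g)\mid g\in\Gcal_k]$ for cells with $p_k>0$, and I use the a priori bound $|\varphi_\theta(g)|\le c$ (bounded per-step reward, finite horizon $T$), valid for every $\theta$.

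\textbf{The inequality.} For $\varsigma=q$ I would note $q(g)=\Qcal_{k(g)}$, so $\frac1n\sum_i(\varphi_\theta\circ q)(g_i)-\EE_g[(\varphi_\theta\circ q)(g)]=\sum_k\varphi_\theta(\Qcal_k)(\hat p_k-p_k)$, of modulus at most $c\|\hat p-p\|_1$ uniformly in $\theta$ (and with no $\omega$ term, matching the indicator). For $\varsigma=\id$, a direct manipulation using $\hat p_k=0$ for $k\notin\Ical_Q$ (and, a.s., whenever $p_k=0$) together with the law of total expectation $\EE_g[\varphi_\theta(g)]=\sum_{k}p_k\bar\varphi_{\theta,k}$ gives $\frac1n\sum_i\varphi_\theta(g_i)-\omega(\theta)=\sum_{k\in\Ical_Q}\hat p_k\bar\varphi_{\theta,k}$, hence $\frac1n\sum_i\varphi_\theta(g_i)-\omega(\theta)-\EE_g[\varphi_\theta(g)]=\sum_k(\hat p_k-p_k)\bar\varphi_{\theta,k}$, again of modulus at most $c\|\hat p-p\|_1$ uniformly in $\theta$. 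So for either choice of $\varsigma$ and all $\theta$ simultaneously the deviation is dominated by the single $\theta$-free random variable $\|\hat p-p\|_1$, which sidesteps the data-dependence of $\theta$; I would then finish with a bounded-differences estimate (changing one $g_i$ moves $\|\hat p-p\|_1$ by at most $2/n$) plus $\EE\|\hat p-p\|_1\le\sqrt{|\Qcal|/n}$, so that $\|\hat p-p\|_1\le\sqrt{2\ln(2/\delta)/n}$ with probability $\ge1-\delta$ after folding the $|\Qcal|$-dependent factor into $c$, and rearrange.

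\textbf{Uniform vanishing of $\omega$.} Here the crucial step I would use is the identity
$$\omega(\theta)=\frac1n\sum_{k\in\Ical_Q}\sum_{i\in\Ical_k}\big(\varphi_\theta(g_i)-\bar\varphi_{\theta,k}\big)=\frac1n\sum_{i=1}^n h_\theta(g_i),\qquad h_\theta(g):=\varphi_\theta(g)-\bar\varphi_{\theta,k(g)},$$
valid because $\{\Ical_k\}_{k\in\Ical_Q}$ partitions $[n]$ and $\bar\varphi_{\theta,k}=\bar\varphi_{\theta,k(g_i)}$ for $i\in\Ical_k$; thus $\omega(\theta)$ is literally the empirical mean over $g_1,\dots,g_n$ of the fixed function $h_\theta:\Gcal\to\RR$, and by the law of total expectation $\EE_{g\sim\rho_g}[h_\theta(g)]=\EE_g[\varphi_\theta(g)]-\sum_k p_k\bar\varphi_{\theta,k}=0$. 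So the claim becomes exactly a uniform-over-$\Theta$ law of large numbers for the centered i.i.d.\ average $\frac1n\sum_i h_\theta(g_i)$. I would then check that $h_\theta$ inherits the stated hypotheses: for each of the finitely many cells with $p_k>0$, $\theta\mapsto\bar\varphi_{\theta,k}=\EE_g[\varphi_\theta(g)\mathbf{1}\{g\in\Gcal_k\}]/p_k$ is continuous by dominated convergence (domination by $\chi$, $\EE_g[\chi]<\infty$), so $h_\theta(\cdot)$ is continuous in $\theta$ at almost every $g$; and $|h_\theta(g)|\le\chi(g)+\EE_g[\chi(g')\mid g'\in\Gcal_{k(g)}]=:\tilde\chi(g)$ with $\EE_g[\tilde\chi(g)]=2\EE_g[\chi(g)]<\infty$. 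With $\Theta$ compact, the classical uniform LLN then yields $\sup_{\theta\in\Theta}|\frac1n\sum_i h_\theta(g_i)-\EE_g[h_\theta(g)]|\to0$ almost surely, hence in probability, which is $\sup_{\theta\in\Theta}|\omega(\theta)|\xrightarrow{P}0$ since $\EE_g[h_\theta(g)]\equiv0$.

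\textbf{Anticipated main obstacle.} I expect the delicate part to be the bookkeeping around the conditioning rather than the probability: making sure the conditional means $\bar\varphi_{\theta,k}$ are well-defined and continuous in $\theta$ (only cells with $p_k>0$ matter, of which there are finitely many, and probability-zero cells are negligible a.s.), that $h_\theta$ is jointly measurable and dominated so the uniform LLN applies, and that rewriting $\omega$'s definition over $\Ical_Q$ as the sample average of the \emph{sample-independent} function $h_\theta$ is legitimate — which it is, precisely because $\{\Gcal_k\}$ does not depend on $g_1,\dots,g_n$. If one insists on proving the uniform LLN from scratch instead of citing it, that covering-plus-equicontinuity argument is the longest routine component; everything else is the two short linear computations above and an off-the-shelf concentration bound.
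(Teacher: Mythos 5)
Your proposal is correct, and for the high-probability inequality it is essentially the paper's argument: the same decomposition of $\frac1n\sum_i(\varphi_\theta\circ\varsigma)(g_i)-\EE_{g\sim\rho_g}[(\varphi_\theta\circ\varsigma)(g)]$ into a cell-mass deviation term, bounded uniformly in $\theta$ by a constant times $\sum_k\left|\frac{|\Ical_k|}{n}-\Pr(g\in\Gcal_k)\right|$, plus a within-cell term that vanishes for $\varsigma=q$ and equals $\omega(\theta)$ for $\varsigma=\id$; the only difference is the concentration tool --- you use bounded differences together with $\EE\|\hat p-p\|_1\le\sqrt{|\Qcal|/n}$, whereas the paper invokes the Bretagnolle--Huber--Carol multinomial inequality --- and both routes fold the $\sqrt{|\Qcal|}$ factor into $c$ (your uniform bound $|\varphi_\theta(g)|\le c$ from bounded rewards and finite horizon is in fact cleaner than the paper's $M=\sup_g\varphi_\theta(g)$, which depends on $\theta$ even though $c$ must not). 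Where you genuinely diverge is the convergence claim: you rewrite $\omega(\theta)=\frac1n\sum_{i=1}^n h_\theta(g_i)$ with the fixed, centered function $h_\theta(g)=\varphi_\theta(g)-\EE[\varphi_\theta(g')\mid g'\in\Gcal_{k(g)}]$, verify continuity in $\theta$ of the conditional means by dominated convergence and domination by an integrable $\tilde\chi$, and apply the uniform law of large numbers once over the compact $\Theta$; the paper instead bounds $\omega(\theta)$ by $u_n(S,\theta)=\max_{k\in\Ical_{\Qcal}}\bigl(\frac{1}{|\Ical_k|}\sum_{i\in\Ical_k}\varphi_\theta(g_i)-\EE[\varphi_\theta(g)\mid g\in\Gcal_k]\bigr)$ and applies Jennrich's theorem cell by cell, with union bounds and the observation that $|\Ical_k|\to\infty$ for positive-mass cells. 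Your route buys a real simplification: it avoids per-cell uniform LLNs with random within-cell sample sizes (a point the paper treats loosely), and it controls the weighted sum and $|\omega(\theta)|$ exactly rather than through a max that drops the weights and the sign; the price is only the bookkeeping you already flag (well-definedness of conditional means on positive-mass cells, defining $h_\theta$ arbitrarily on the $\rho_g$-null union of zero-mass cells, and joint measurability for the ULLN), all of which is routine.
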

\begin{proof}
Detailed proof provided in the Appendix \ref{sec:app_theory}
\end{proof}

Without the goal discretization, we incur an extra cost of $\omega(\theta)$, which is expected to be strictly positive since $\frac{1}{|\Ical_{k}|}\sum_{i \in \Ical_{k}}\varphi_\theta(g_{i})$ is maximized during training while $ \EE_{g \sim \rho_g}[\varphi_\theta (g)|g\in  \Gcal_{k}]$ is not.
Thus, the goal discretization can improve the expected sum of rewards for unseen goals by the degree of $\omega(\theta)$, which measures the concentration of the goal distribution in each neighborhood. This extra cost $\omega(\theta)$ goes to zero when the number of goal observations $n$ approaches infinity. 

\section{Related Work}
\label{ref:related_work}
Learning with multiple hierarchies has long been proposed in the RL literature, where goal conditioned HRL implements high level planning and low level control using sub-goals. Often in goal conditioned HRL, the higher level policy specifies goals which may not have good \textit{specification} and \textit{grounding}. Several prior works focus on goal-conditioned RL to improve sample efficiency in deep RL tasks \cite{NachumGoal,NachumGoal2}. These build on ideas that were proposed years back to solve long horizon tasks by hierarchical RL specifying goals \cite{Kaelbling93,DayanH92,Dietterich00,WieringS97}. The goal is to learn to solve sub-goals provided to the policy, by learning to predict a sequence of actions that can reach each of the sub-goals \cite{Veeriah_ManyGoals,SchaulHGS15,NairF20}. Additionally, in existing HRL literature, distance measures are often used based on goal-conditioned value functions, allowing to measure distances between states and the sequence of sub-goals to reach \cite{EysenbachSL21, Zhang0S21a}, for planning \cite{LEAP}, or exploration~\cite{Laroche2017}. Since the set of goals specified in the state space can be arbitrary, an additional constraint is often also learnt to tie the distribution of selected goals to those the lower level policy can reach \cite{HRAC}. We tackle this problem by proposing $\modelname$, for better \textit{grounding} and \textit{specification} of sub-goal representations.

In previous works, mutual information based objectives have been proposed for goal conditioned RL. They perform goal-based representation learning, in order to improve stability of training goal-conditioned value functions \cite{NachumGoal, NachumGoal2}, or to provide goal representations allowing to identify decision states for better exploration \cite{GoyalISALBBL19}. However, for most of these settings, the sub-goals are based on an external reward and are lacking in terms of specification, which can lead to inefficient training. \cite{LevyKPS19, HRAC} have proposed approaches that penalize the high level controller for generating sub-goals that are too difficult for the lower level policies, through the use of additional constrained objectives \cite{HRAC}. We highlight that our proposed $\modelname$ can be generically applied to any goal conditioned RL literature for better \textit{grounding} and \textit{specification} of the sub-goals. Furthermore, recent work has shown significance of learning representations through self supervised objectives in RL \cite{AnandROBCH19, SchwarzerAGHCB21}, often as a pre-training phase \cite{SchwarzerRNACHB21, YangN21}, which can help for both exploration \cite{MisraHK020} and control \cite{YaratsFLP21}.

In the context of goal conditioned RL, it can be a challenging problem as it additionally requires learning reliable representations of goals in parallel, purely from high dimensional observations \cite{Dwiel}. Previous works \cite{LevyKPS19} have often used the entire observation space as goals, which is not scalable for complex tasks. Other works have used a pre-defined space of sub-goals as domain knowledge \cite{NachumGoal}, or self-play for sub-goal representations \cite{Sukhbaatar} to reduce the complexity of goal space design. Most recently, \cite{LEAP, NairF20} utilized unsupervised representation learning to learn a goal representation space, which can further be used for planning and control. In this work, we show additionally that using a bottleneck can further lead to factorial representation of goals, while helping with goal specification via learning a latent space of discrete goals usable for planning and control. We emphasize that $\modelname$ can be integrated on any existing goal conditioned approach that utilizes learning a sub-goal representation.

\section{Experiments}
\label{sec:experiments}
The main goal of our experiments is to show that goal discretization can lead to sample efficient learning and generalization to novel goals, in goal-conditioned RL. 
First, we directly study this by training on environments with a set of goals (such as 8 positions within a gridworld) and then evaluating the agent's ability to reach a position within the gridworld which it was not trained to reach.  Second, we consider hierarchical goal-conditioned RL, in which a higher-level agent generates goals that a lower-level agent is tasked with reaching.  In this case, the task of reaching novel goals occurs \textit{organically} as the higher-level model selects new goals.  This setup also shows the advantages of $\modelname$ for \textit{goal specification}.  A secondary goal of our experiments is to show that using many discrete factors is often critical for optimal performance, which proves the value of \textit{factorization} in \textit{grounding goals}.

We evaluate our proposed method $\modelname$ by integrating it into existing state-of-the-art goal-conditioned and hierarchical RL tasks. Experimentally, we analyse $\modelname$ on several challenging testbeds that have previously been used in the RL community. $\modelname$ in principle can be applied to any existing downstream goal-conditioned RL tasks. We demonstrate improvements on five such tasks.  We consider maze navigation where images are used as observations and we show improved generalization to novel goals. We integrate $\modelname$ to an existing goal-conditioned baseline for navigating procedurally-generated hard exploration Minigrid environments \cite{gym_minigrid} and find that it outperforms state-of-the-art exploration baselines. We also show improvements with $\modelname$ on continuous control (Ant) navigation and manipulation tasks, where goals come from a high-level controller.  Finally, we show that discrete representations also significantly improve sample efficient learning on a challenging vision-based robotic manipulation environment. 

\textbf{Demonstrating Factorized Representation Learning} We first pick a color-mnist supervised learning example to support the idea that $\modelname$ can learn factorized or compositional representations. 
\begin{figure}[!htbp]
\centering
\subfigure[Original Image]{
\includegraphics[
width=0.22\textwidth]{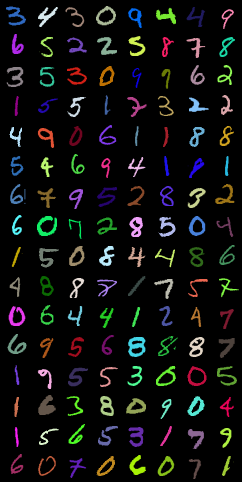}
}
\subfigure[Both factors recon.]{
\includegraphics[
width=0.22\textwidth]{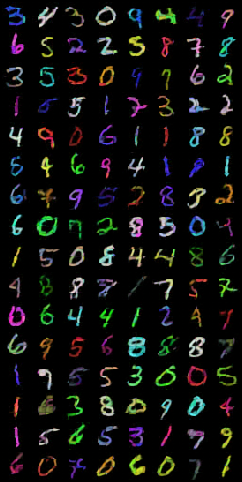}
}
\subfigure[Factor 1 recon.]{
\includegraphics[
width=0.22\textwidth]{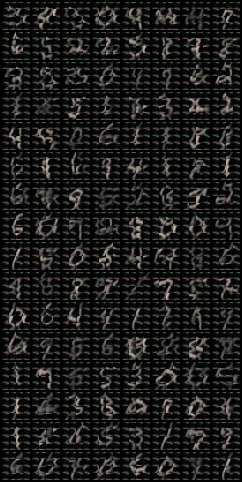}
}
\subfigure[Factor 2 recon.]{
\includegraphics[
width=0.22\textwidth]{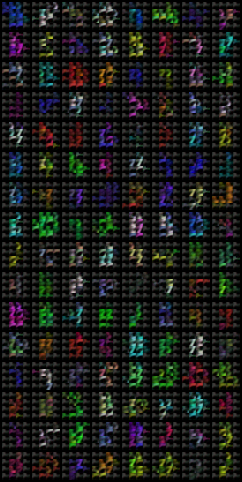}
}
\caption{Color-MNIST example to demonstrate factorized representations; reconstructing the original images with two factors. Leftmost : Original Image; Left-Middle : Reconstructed Image without substitution; Right-Middle : Reconstructed Image with one groups of discrete codes substituted by zero vectors; Rightmost : Reconstructed Image with the other groups of discrete codes substituted by zero vectors}
\label{fig:color_mnist_compositionality}
\end{figure}
Figure~\ref{fig:color_mnist_compositionality} displays reconstructed images from a trained decoder operating on a discretized 2-factor representation. We find that different factors capture information of different semantic nature. More precisely, factor 1 tends to encode the shape of the digit, while factor 2 specialized in its color. This empirically suggests the emergence of ``factorization'' in the learnt representations. Further experimental details are provided in section \ref{app:rebuttal_details_mnist}.

\textbf{Learning to Reach Diverse and Novel Goals.} We study a gridworld navigation task in which an agent is trained to reach a goal from a small finite set of training goals, and during evaluation is tasked with reaching a novel goal unseen during training.  This is a navigation task with a pixel-level observation space showing the position of the agent and the goal in a gridworld. We consider two mazes spiral and single-loop topology. Experiment setup is given in Appendix~\ref{sec:app_vis_maze_details}. 

For this task, we train a goal-conditioned Deep Q-Learning (DQN) agent, and use a pre-trained representation $\phi(\cdot)$ where the encoder is trained using data from a random rollout policy. Because the gridworld is small the random rollout policy achieves good coverage of the state space, so we found this was sufficient for learning a good goal representation.  At each episode, a specific goal is randomly sampled from a distribution of goals, and the DQN agent is trained to reach the specified goal for that episode. During evaluation, we test the learned agent on goals either from the training distribution, or not seen during training.

Furthermore, for this task, we additionally use an intrinsic reward to promote exploration of the goal-DQN agent. Since we learn a discrete factorial representation of the goal, we compute an exploration bonus based on the discrete latent codebooks; \textit{i.e.}, we embed the states and goals using the learned codes and then compute an intrinsic exploration bonus based on the fraction of learned factors that match. For the baseline goal-DQN agent, we provide an additional reward bonus based on the cosine distance between continuous embeddings of the state observation and goal. Figure \ref{fig:4_vis_maze} shows that $\modelname$ significantly outperforms a continuous baseline goal DQN agent, when trained on either four goals or eight goals. We evaluate generalization to 4 novel goals unseen during training (Figure \ref{fig:8_ood_vis_maze}) and demonstrate improved generalization.

\begin{figure}[!htbp]
\centering
\subfigure{
\includegraphics[
width=0.35\textwidth]{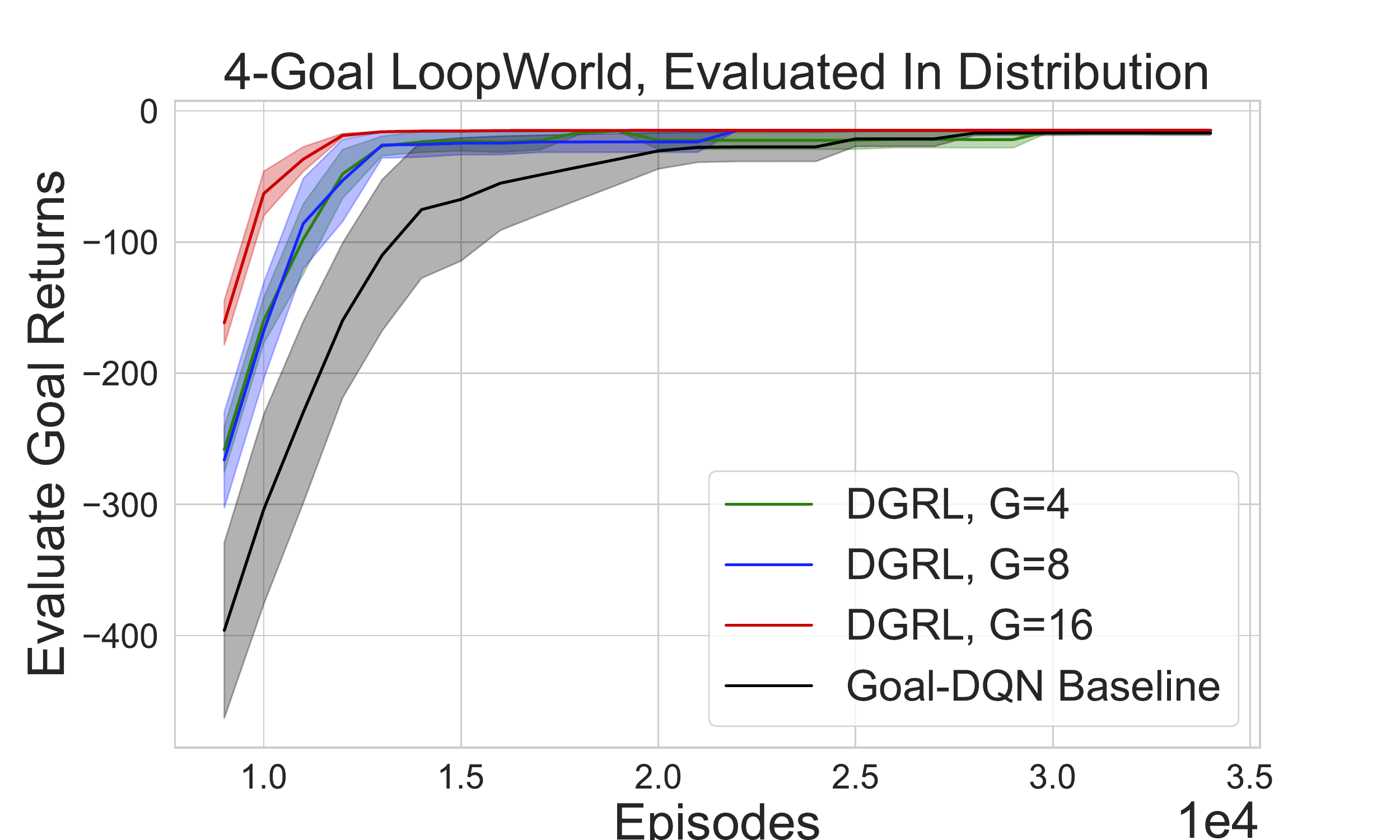}
}
\hspace{-0.8cm}
\subfigure{
\includegraphics[
width=0.35\textwidth]{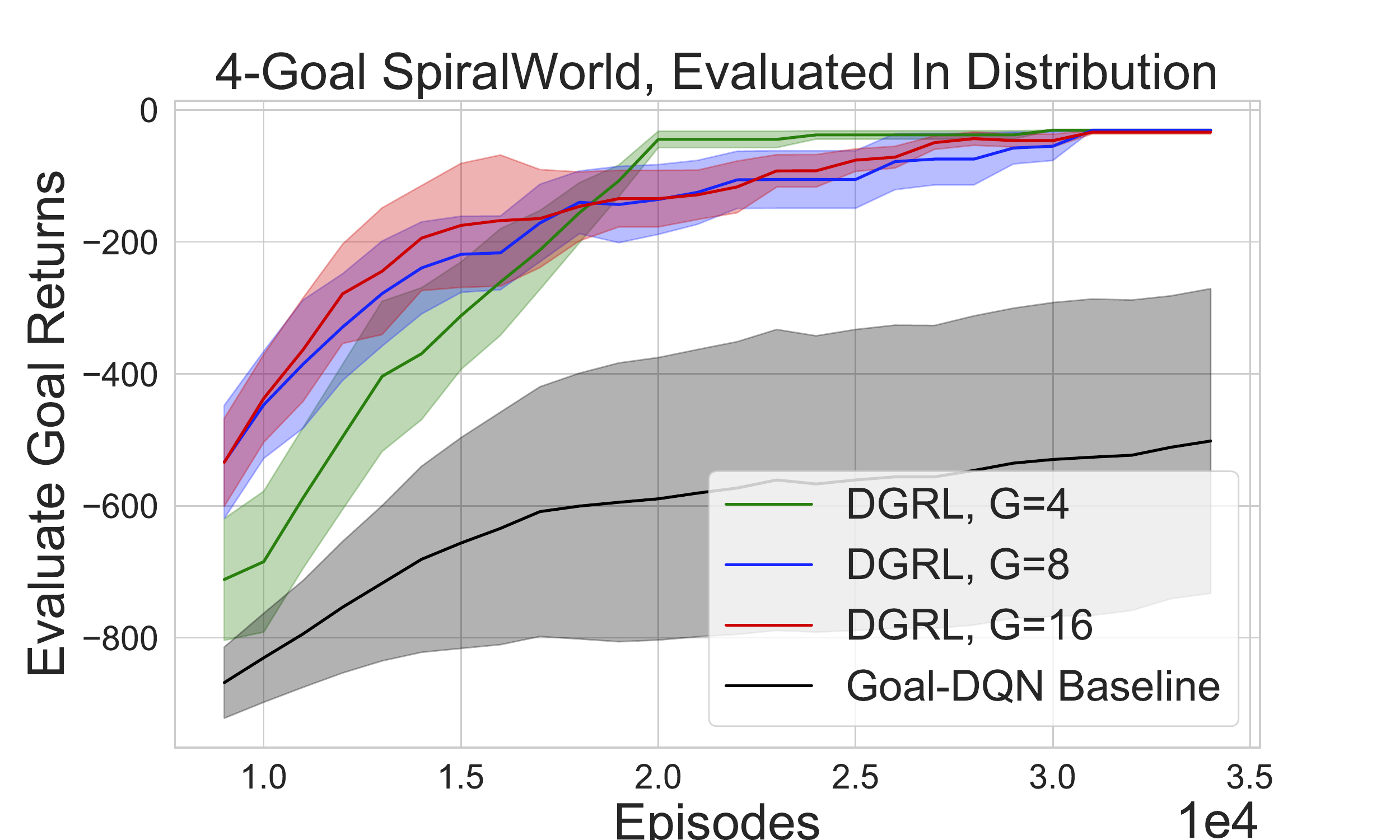}
}
\hspace{-0.8cm}
\subfigure{
\includegraphics[
width=0.35\textwidth]{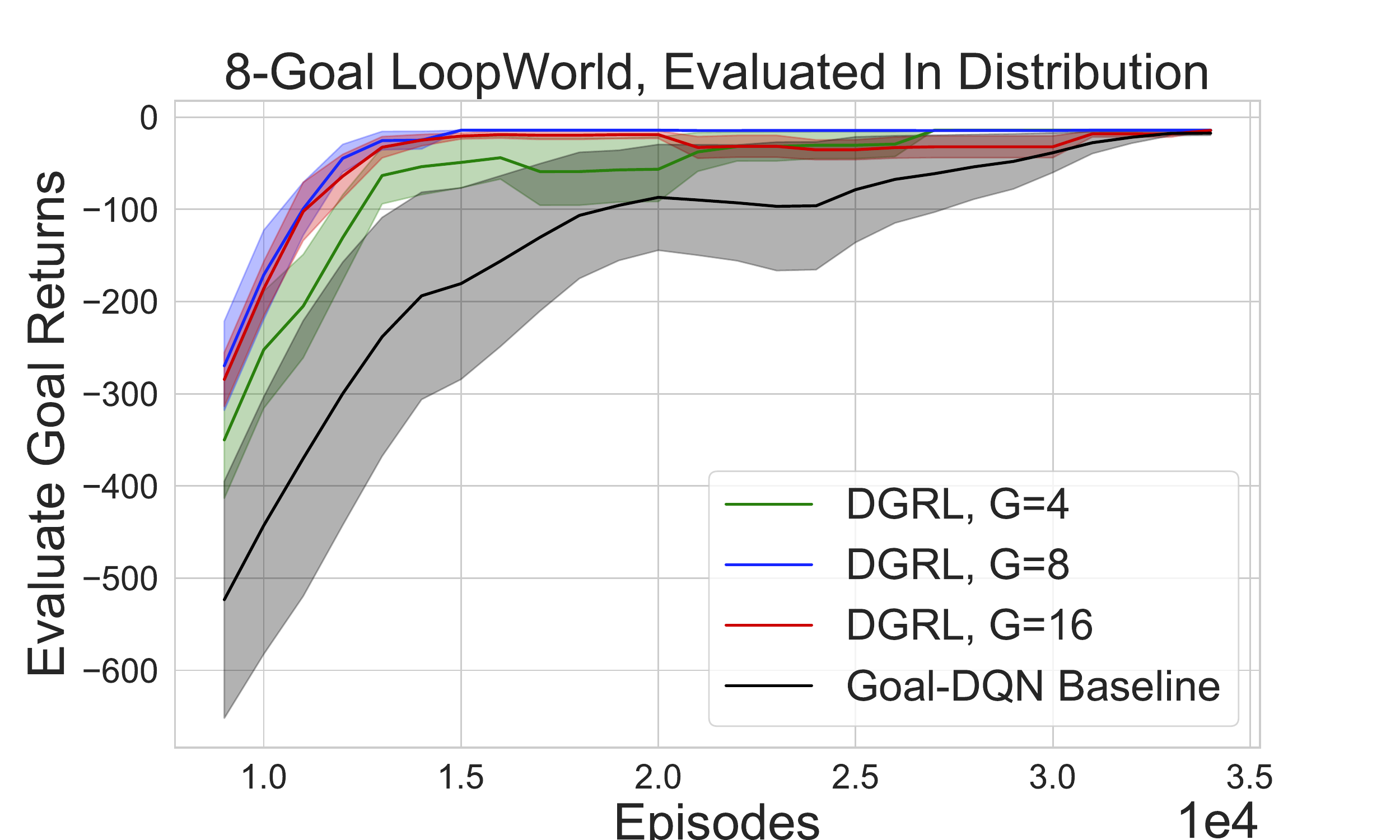}
}
\caption{Loopworld maze environment. We show that for $4, 8$ and $16$ different discrete factors, $\modelname$ outperforms a goal-DQN baseline agent with continuous goal representations. As we increase the number of factors $G$ to 16, the expressivity of the discrete goal representation increases, lowering the odds of the factors being the same. This provides a better intrinsic reward signal for exploration, resulting in faster convergence for $\modelname$ integrated on a goal-DQN agent.}
\label{fig:4_vis_maze}
\end{figure}

\begin{figure}[!htbp]
    \centering
    {{\includegraphics[width=3.2cm]{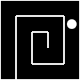} }}%
    \hspace{-0.2cm}
    {{\includegraphics[width=5.6cm]{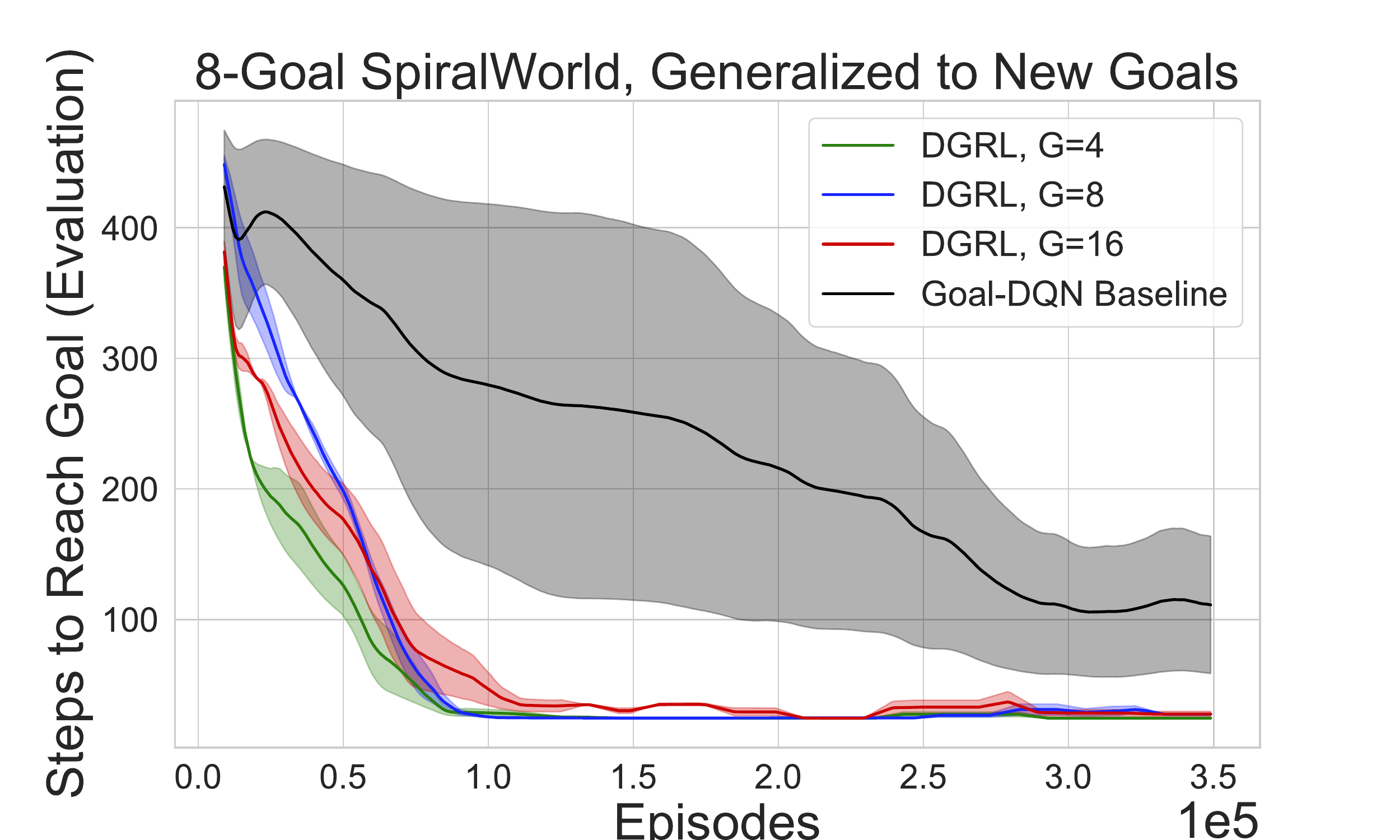} }}%
    \hspace{-0.7cm}
    {{\includegraphics[width=5.6cm]{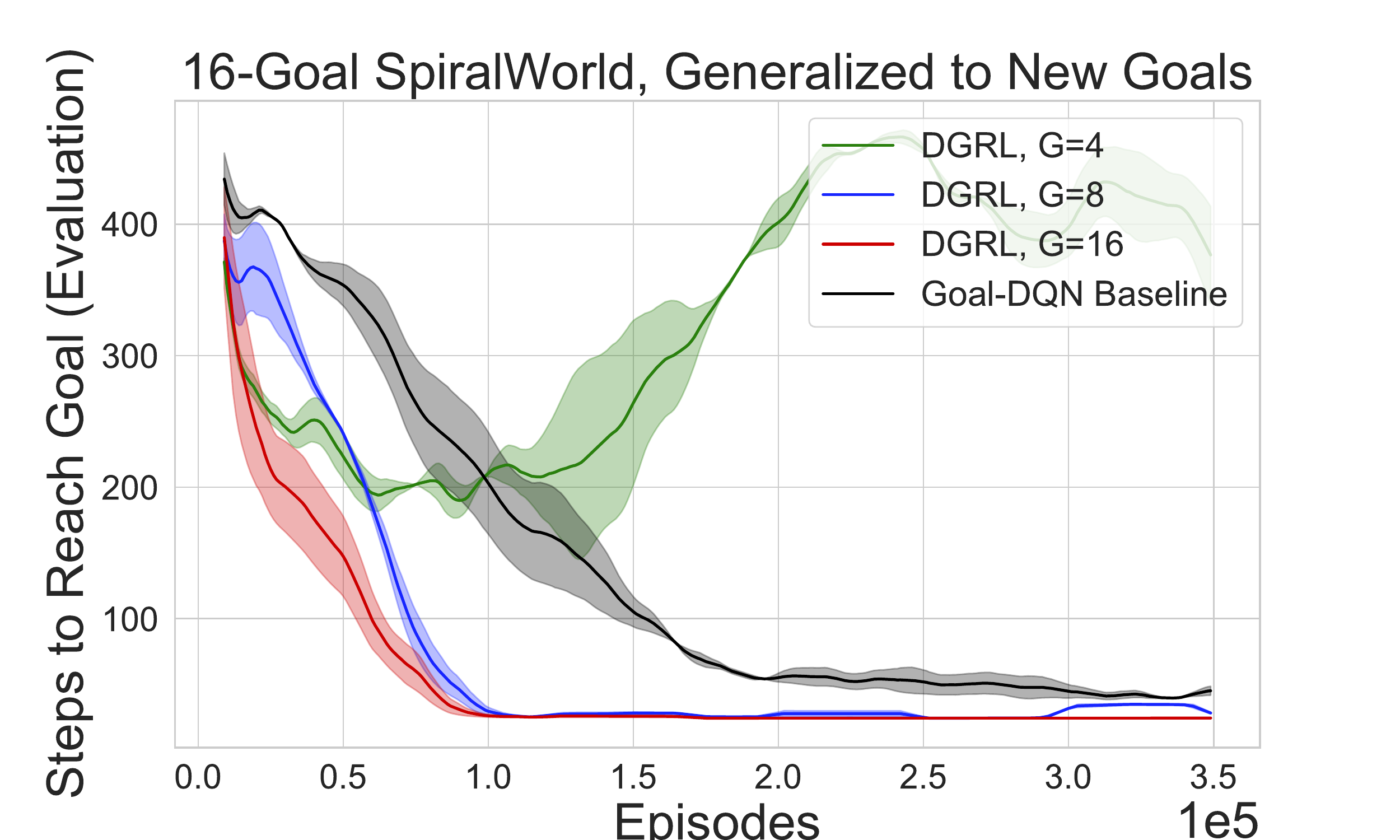} }}%
    \caption{SpiralWorld environment (left). Generalization to a test distribution containing 4-goals in a SpiralWorld environment (left). We show the total number of steps to solve all test set goals, when trained on either an 8-goal or 16-goal training distribution.}%
    \label{fig:8_ood_vis_maze}%
\end{figure}

In the previous experiment, we evaluated the generalization ability of \modelname\ by showing that learning discrete factorial representations of goals can improve generalization to novel goals. Now, we consider various setups in which a goal generating agent specifies goals using the learned codebook and a goal-conditioned agent is tasked with reaching the goals specified by the goal generating agent. We test various settings, where the goal generating agents is parameterized as an adversarial teacher \citep{AMIGO}, or as a higher-level policy in the case of hierarchical RL. 

\textbf{Procedurally Generated MiniGrid Exploration Task.}   We follow the experimental setup of \cite{AMIGO} and \cite{RIDE} and evaluate $\modelname$ on procedurally generated MiniGrid environments \cite{gym_minigrid}. In \cite{AMIGO}, a goal-generating teacher proposes goals to train a goal-conditioned ``student'' policy.  We integrate $\modelname$ on top of AMIGO \cite{AMIGO} and compare \modelname\ on a hard exploration task with state-of-the-art exploration baselines.  Experimental results are summarized in Table 1 and more details provided in Appendix~\ref{sec:app_amigo_details}. Note that unlike RIDE and RND, we do not provide an additional exploration bonus to $\modelname$, and find that $\modelname$ can still solve this hard exploration task more efficiently.

\begin{figure}
\begin{floatrow}
\ffigbox{%
    {{\includegraphics[width=0.42\textwidth]{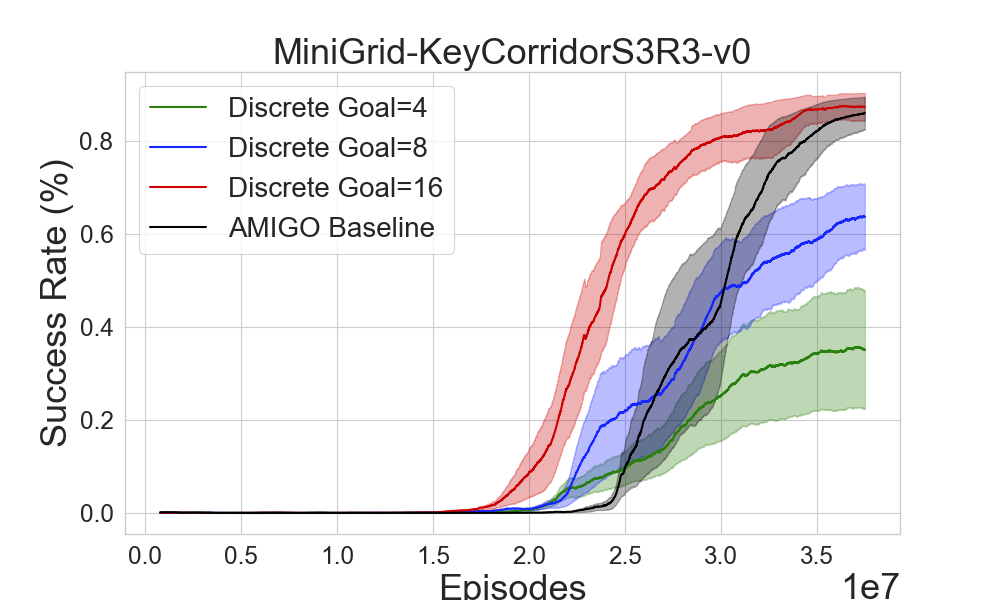} }}%
}{%
  \caption{Performance comparison of the Amigo baseline \cite[adversarially intrinsic goals]{AMIGO} with and without $\modelname$ for goal discretization.}%
}
\capbtabbox{%
    \begin{tabular}{lllllll}
    \toprule
    \textbf{Model} & \textbf{KCmedium}\\
    \midrule
    AMIGO + DGRL, G=16 & \textbf{\textcolor{red}{$.96 \pm .01$}} \\
    AMIGO + DGRL, G=8 & $.70 \pm .16$ \\
    \midrule
      \textsc{AMIGO} & \textcolor{blue}{$.93 \pm .06$}\\ 
    \textsc{RIDE} & $.90 \pm .00$\\
     \textsc{RND} & $.89 \pm .00$\\
    \textsc{ICM} & $.42 \pm .21$\\
    \bottomrule
\end{tabular}
}{%
  \caption{We added $\modelname$ on top of the Amigo baseline implementation provided by the authors. }%
}
\end{floatrow}
\end{figure}


\textbf{Goal Grounding in KeyChest Maze Navigation Domain.}
\label{sec:maze_tasks_keychest}
We consider a simple discrete state action KeyChest maze navigation task, following \cite{HRAC}, where discrete goals in the state space are provided by a higher level policy. For this task, to integrate $\modelname$, we learn an embedding $\phi(\cdot)$ of the goals, then discretize the representation with a learned codebook. We compare with a baseline HRAC \cite{HRAC} agent (details in Appendix~\ref{sec:app_keychest_details}). Figure~\ref{fig:keychest} shows an illustration of the KeyChest environment and a performance comparison of $\modelname$ with different group factors $G$. Using fewer factors ($G=4$) performs worse than the HRAC baseline, whereas using a larger number of factors ($G=8$ or $G=16$) improves the sample efficiency of the goal reaching agent, providing evidence for the benefits of factorization. 

\begin{SCfigure}[0.6]
    \centering
    {{\includegraphics[trim=0.2cm 0.2cm 0.2cm 0.2cm, clip=true, width=4cm]{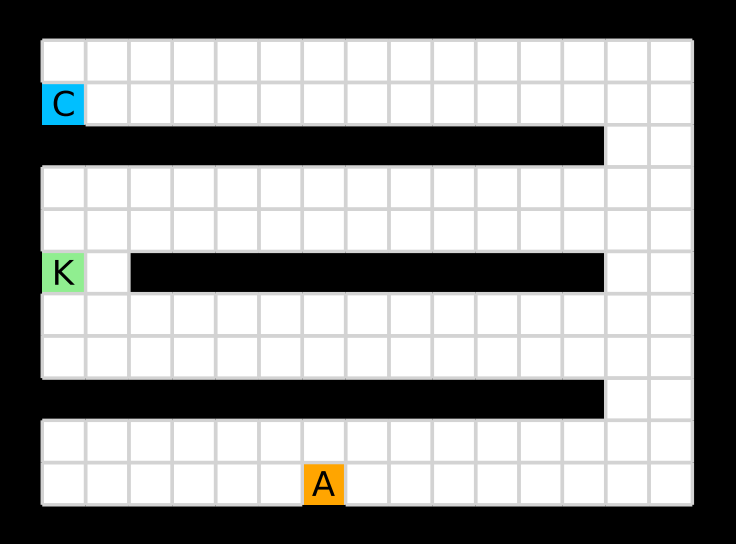} }}%
    {{\includegraphics[trim=1cm 0cm 1cm 1cm, clip=true,width=5cm]{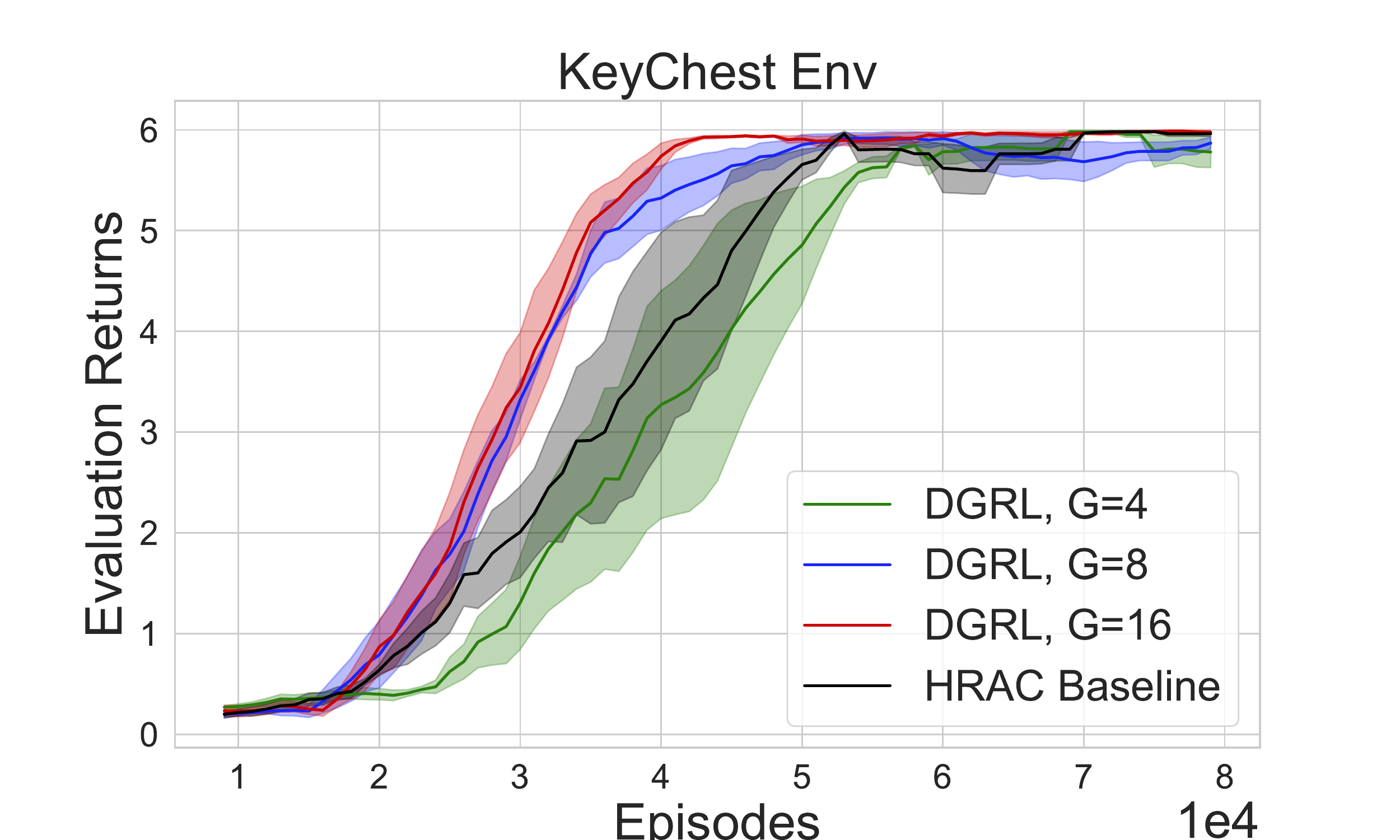} }}%
    \caption{In KeyChest, the agent (A) starts from a random stochastic position, picks up the key (K) and then uses the key to open the chest (C). We find \modelname\ improves sample efficiency over the HRAC baseline.}%
    \label{fig:keychest}%
\end{SCfigure}



\textbf{Ant Manipulation Control Domains.}
\label{sec:ant_manipulation}
We employed $\modelname$ on three different continuous control  tasks:  AntMazeSparse, AntFall and AntPush. We emphasize that these tasks are the more challenging counterparts of AntGather and AntMaze tasks, typically used in the hierarchical RL community \cite{NachumGoal, NachumGoal2}. Figure \ref{fig:8_vis_maze} provides an illustration. We evaluate goal discretization by integrating $\modelname$ to the state-of-the-art HRAC baseline. Details of the experimental setup are provided in Appendix~\ref{sec:ant_manipulation}. Figure ~\ref{fig:8_vis_maze} shows that specifying the goals using the learned codebook helps \modelname\ achieve a higher success rate compared to the HRAC baseline. 

\begin{figure}
    \centering
    {{\includegraphics[trim=1cm 0cm 1cm 1cm, clip=true, width=4.8cm]{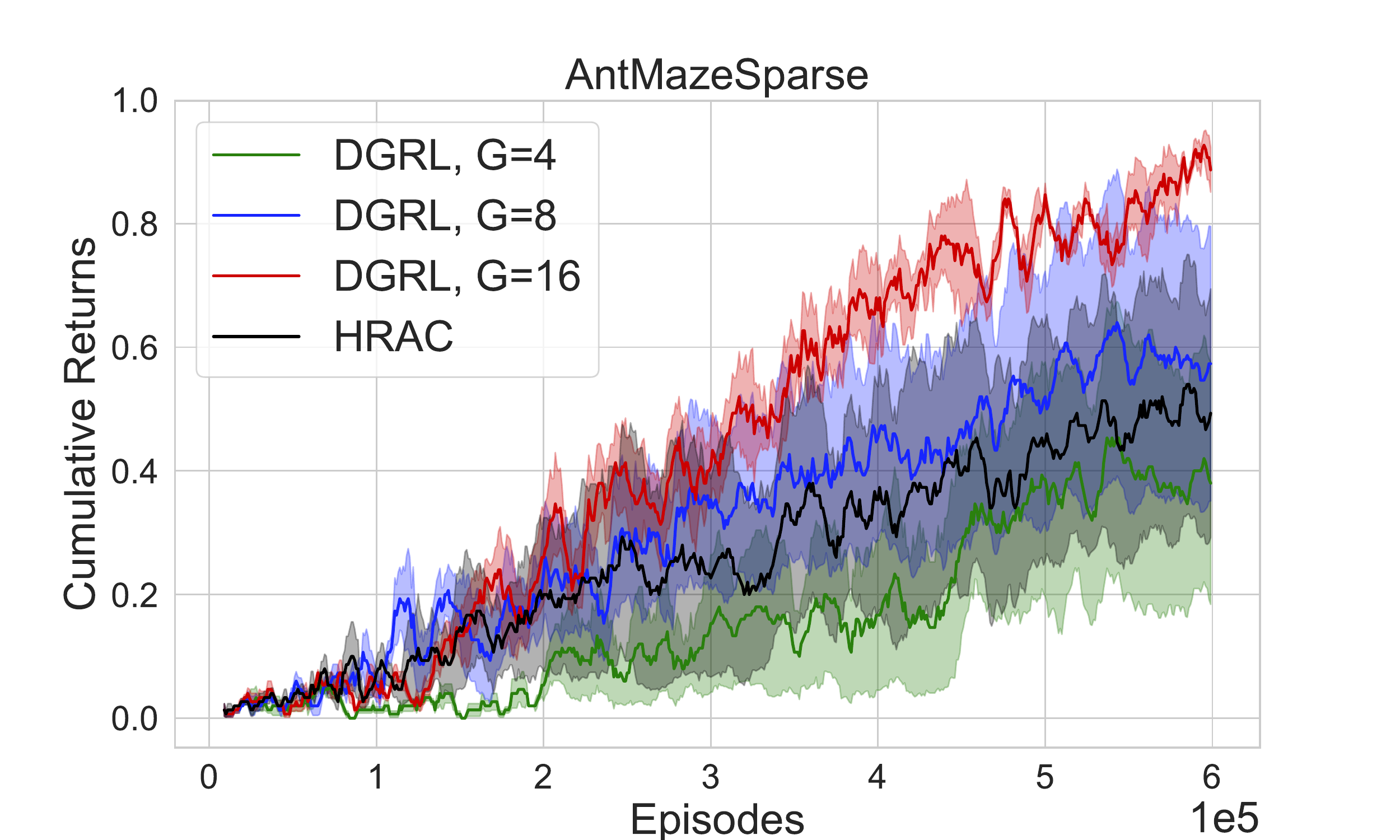} }}%
    \hspace{-0.4cm}
    {{\includegraphics[trim=1cm 0cm 1cm 1cm, clip=true,width=4.8cm]{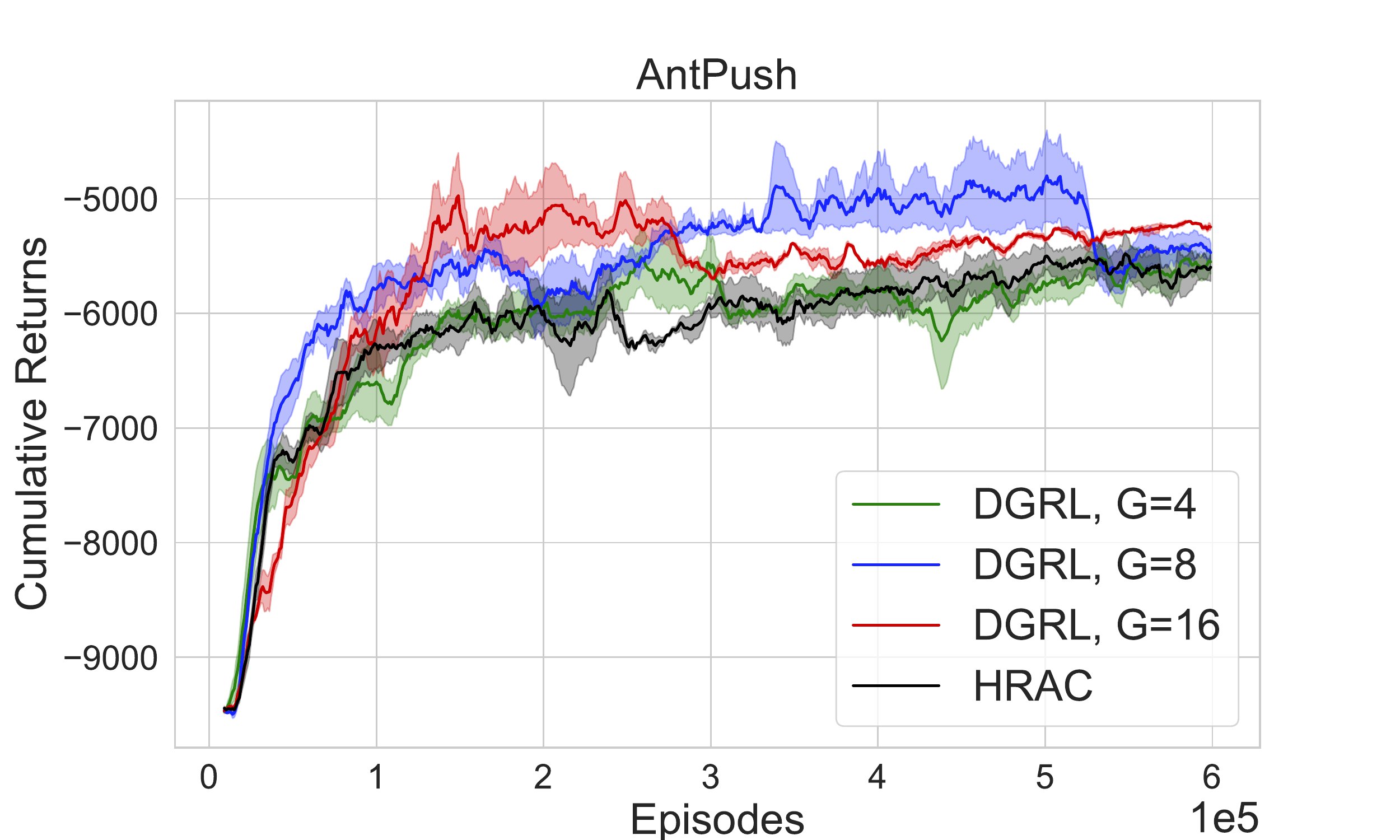} }}%
    \hspace{-0.4cm}
    {{\includegraphics[trim=1cm 0cm 1cm 1cm, clip=true,width=4.8cm]{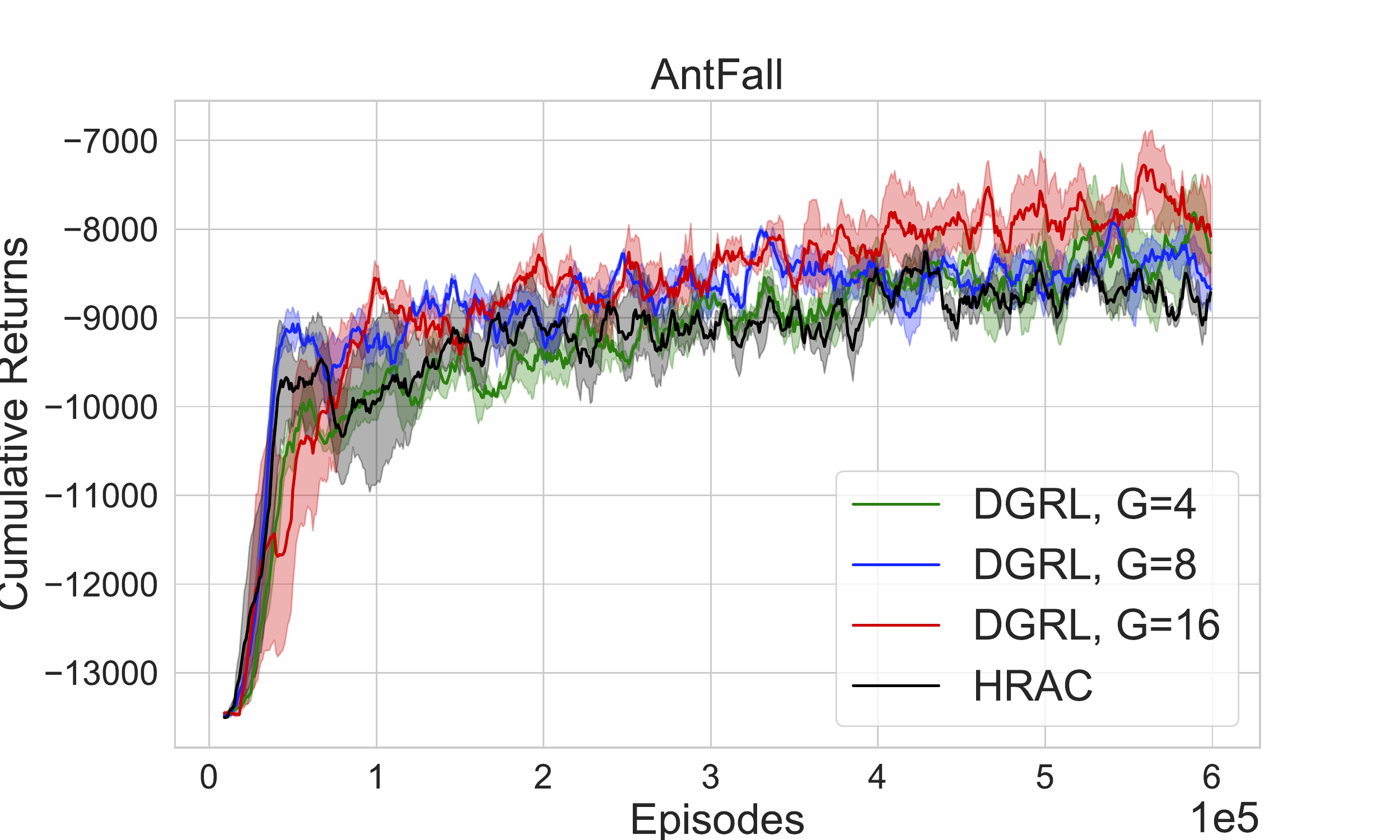} }}%
    {{\includegraphics[width=0.95\linewidth]{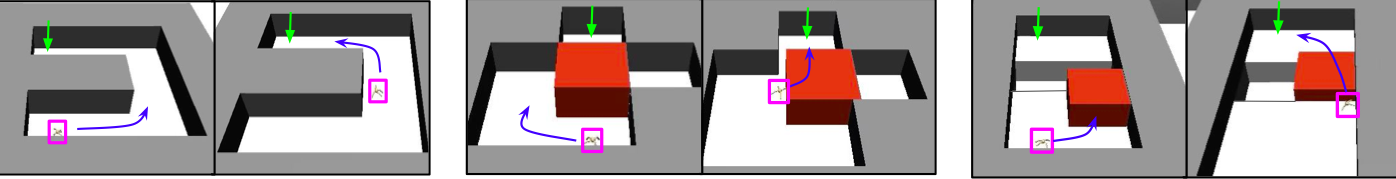} }}%
    \caption{ Comparison of $\modelname$ with baseline HRAC \cite{HRAC} on 3 different navigation tasks.}%
    \label{fig:8_vis_maze}%
\end{figure}

\textbf{Ant Navigation Maze Tasks.}
\label{sec:ant_navigation}
We consider Ant navigation tasks that require extended temporal reasoning, following the setup in Reinforcement learning with Imagined Subgoals \cite[RIS]{RIS}: a U-shaped maze, and an S-shaped maze (the S-shaped maze is shown in Figure~\ref{fig:ris_ant_maze}). The ant navigating in the maze is trained to reach any goal in the environment. The agent is evaluated for generalization in an extended temporal setting with a difficult configuration, we compare the success rate of $\modelname$ integrated on top of RIS with several baselines. We emphasize the difficulty of these tasks, where existing baselines like soft actor critic \cite[SAC]{haarnoja2018soft} and temporal difference models  \cite[TDM]{PongGDL18} fail completely. Results in Figure~\ref{fig:ris_ant_maze} show that \modelname\ improves the sample efficiency over the RIS baseline. Additional experimental setup and environment configurations are provided in Appendix~\ref{sec:app_ant_navig_details}.
\begin{figure}
    \vspace{-10mm}
    \centering
    {{\includegraphics[trim=1cm 1cm 1cm 1cm, clip=true, width=0.22\linewidth]{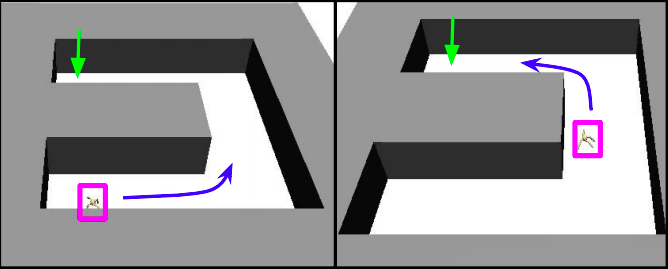} }}%
    \hspace{3mm}
    {{\includegraphics[trim=0.5cm 0cm 1.5cm 0.5cm, clip=true, width=0.35\linewidth]{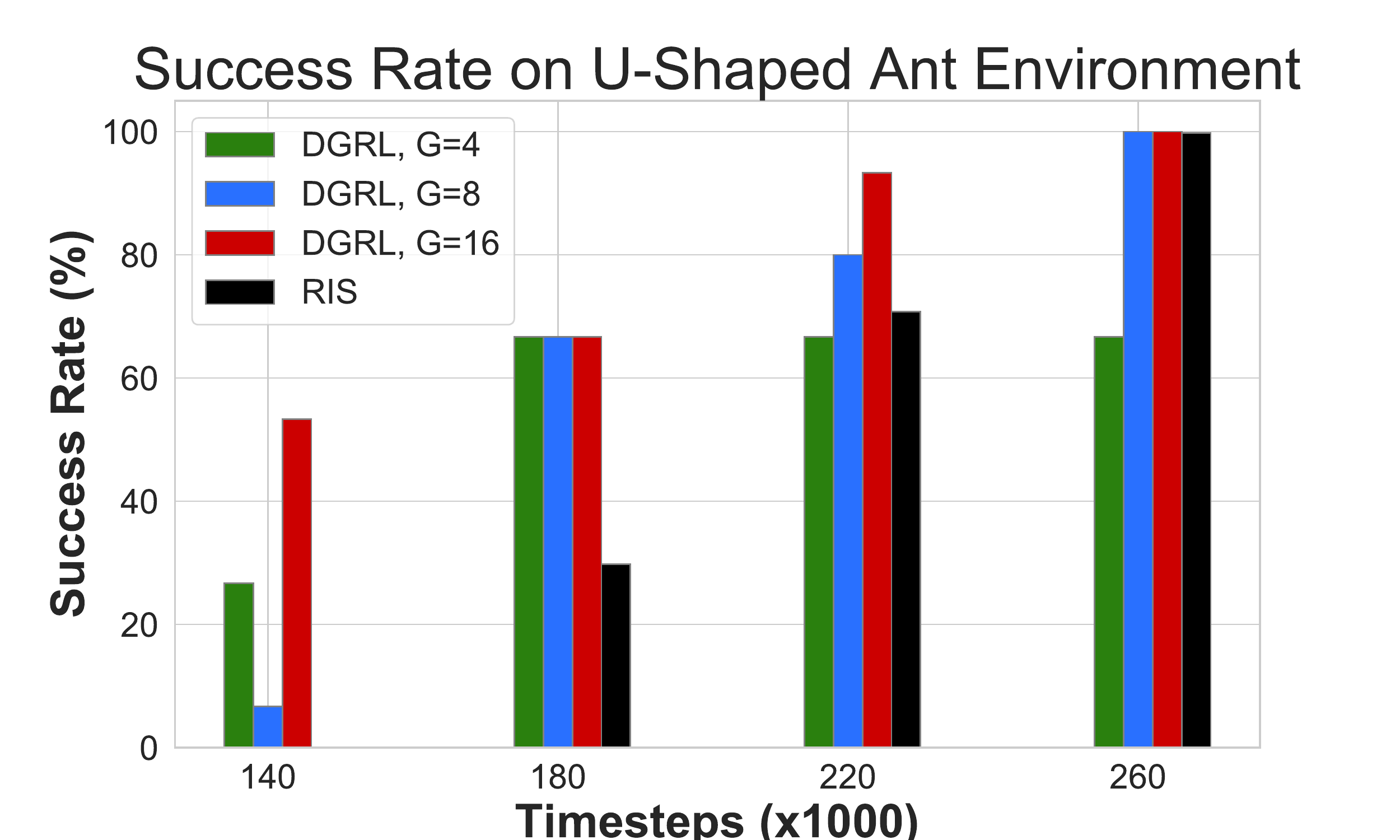} }}%
    {{\includegraphics[trim=0.0cm 0cm 1cm 0.5cm, clip=true, width=0.37\linewidth]{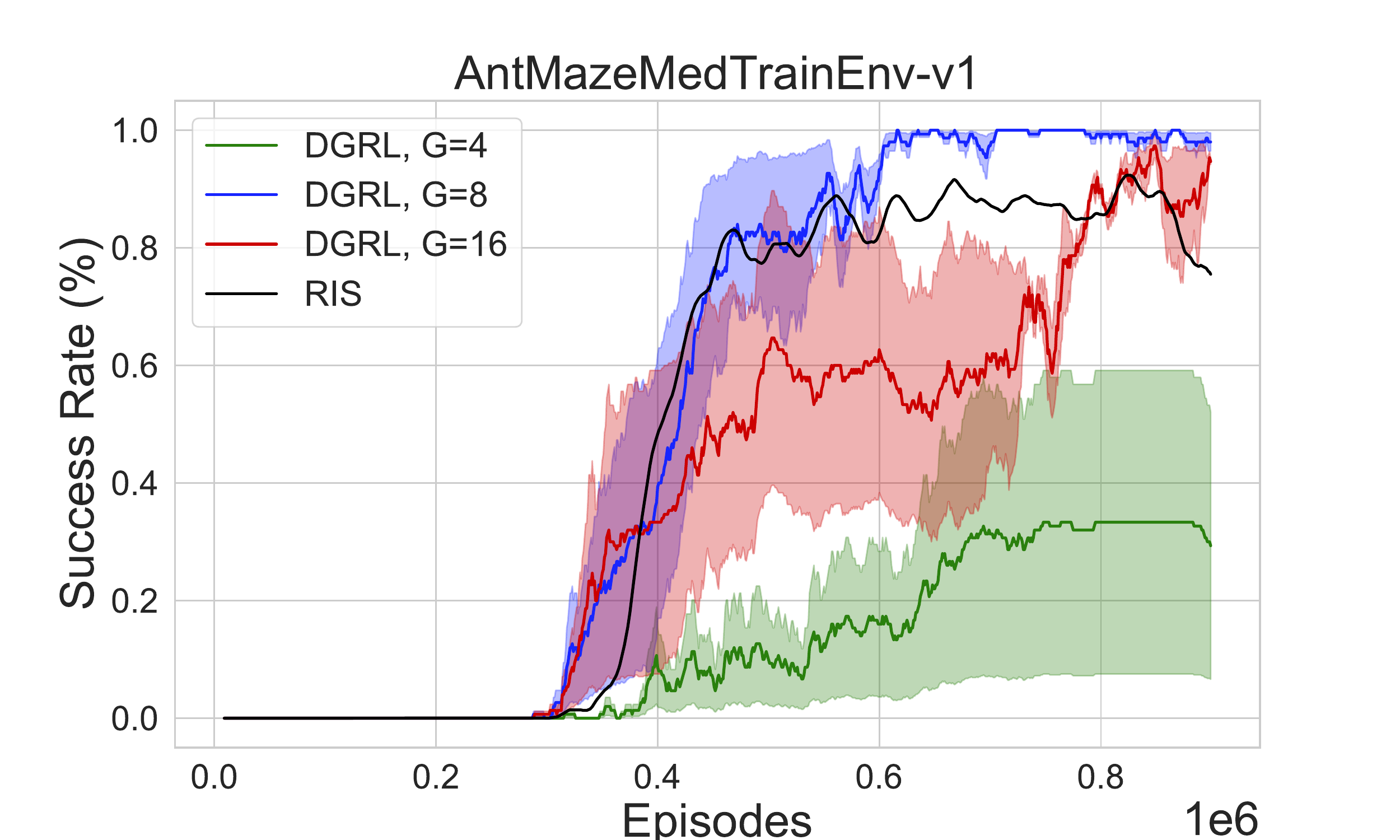} }}%
    \caption{Performance comparison with the success rate of reaching goal positions during evaluation in an extended temporal configuration of the U-shaped and maze-shaped Ant navigation tasks. We find that integrating $\modelname$ with RIS can lead to more sample efficient convergence on these tasks, while baselines such as SAC and TDM (not shown) fail completely on both AntU and AntMaze as reported by \cite{RIS}. The RIS baseline is based on raw data provided by the authors. }%
    \label{fig:ris_ant_maze}%
    \vspace*{-2mm}
\end{figure}


\textbf{Vision Based Robotic Manipulation.} Finally, we assess $\modelname$ on a hard vision-based robotic manipulation task, and use the same setup as in Section \ref{sec:ant_navigation} to integrate $\modelname$ with the state-of-the-art RIS baseline on the Sawyer task in Figure~\ref{fig:sawer_result}. This manipulation task is adapted from \cite{LEAP}, where the baseline RIS is already shown to be superior to previous goal conditioning methods. The task of the agent is to control a 2-DoF robotic arm from image input and move a puck positioned on the table. The Sawyer task is designed for training and generalization. At test time, it evaluates the agent's success at placing the puck in desired positions in a temporally extended configuration. This is a challenging vision-based complex motor task, since test time generalization requires temporally extended reasoning. Results in Figure~\ref{fig:sawer_result} show that \modelname\ improves the sample efficiency over the RIS baseline. Details of the experimental setup are provided in Appendix~\ref{sec:app_sawyer_details}.

\begin{SCfigure}[.3][h]
    \centering
    \includegraphics[trim=0.2cm 0.1cm 0.2cm 0.1cm,clip=true,width=3.8cm]{neurips2022/figures/sawyer_image.pdf}
    \includegraphics[trim=0.2cm 0.cm 0.2cm 0.1cm,clip=true, width=0.44\linewidth]{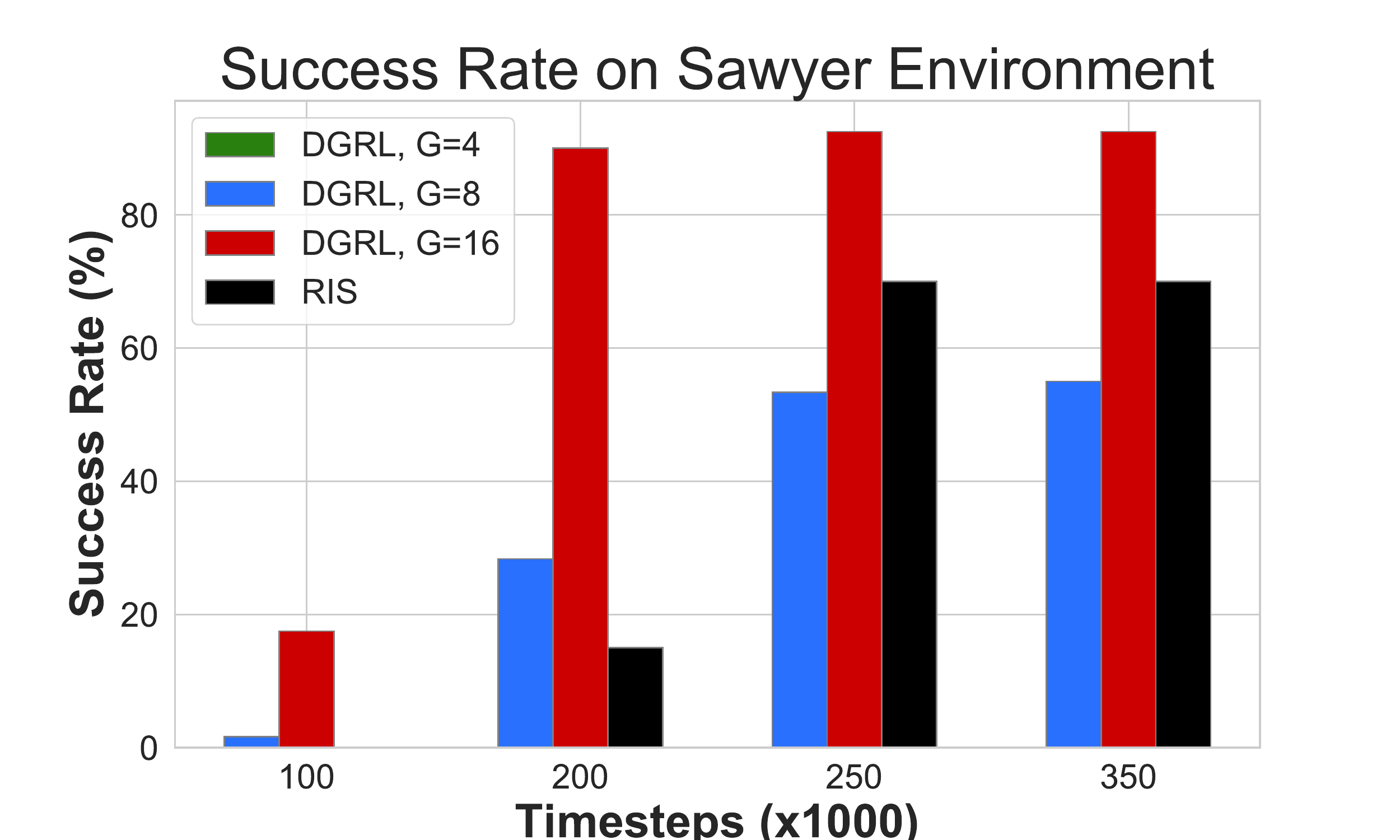} 
    \caption{Sawyer Robotic Manipulation Task. Integrating $\modelname$ with RIS (with a larger number of factors, G=8 and G=16, while G=4 fails), improves over the RIS baseline }%
    \label{fig:sawer_result}%
\end{SCfigure}

\section{Discussion}
 \vspace{-2mm}
\paragraph{Conclusion }
Our work provides direct evidence that performance of goal-conditioned RL can be improved when the representations of the goals are both \textit{discrete} and \textit{factorial}.  We show that an instantiation of this idea using multi-factor discretization significantly improves performance on a diverse set of benchmarks.

\paragraph{Limitations and Future Work}  An interesting question that arises from our work is how to theoretically \textit{ground} and \textit{specify} goals, which might be helpful for efficient structured exploration in tasks where goal seeking is crucial. Additionally, while we demonstrate that the factorial representations learnt by \modelname\ can be beneficial, it would be interesting to explore whether we can also enforce \textit{compositionality} in the latent embeddings. Finally, exploring in more details the structure of the discrete factors forming the goals, their coverage, and to which extent they semantically capture the underlying factors of the environment is a promising research avenue.

\section*{Acknowledgement}

The authors would like to thank  Nicolas Heess, John Langford, Yonathan Efroni, Manan Tomar, Dipendra Misra, Akshay Krishnamurthy, Pierre-Yves Oudeyer, Harm Van Seijen and Doina Precup for valuable discussions and insightful comments related to this work. Hongyu Zang and Xin Li were partially supported by NSFC under Grant 62276024.

\bibliography{neurips_2022}

\begin{thebibliography}{10}

\bibitem{Adhikari2020}
Ashutosh Adhikari, Xingdi Yuan, Marc-Alexandre C\^{o}t\'{e}, Mikul\'{a}\v{s}
  Zelinka, Marc-Antoine Rondeau, Romain Laroche, Pascal Poupart, Jian Tang,
  Adam Trischler, and Will Hamilton.
\newblock Learning dynamic belief graphs to generalize on text-based games.
\newblock In H.~Larochelle, M.~Ranzato, R.~Hadsell, M.F. Balcan, and H.~Lin,
  editors, {\em Advances in Neural Information Processing Systems}, volume~33,
  pages 3045--3057. Curran Associates, Inc., 2020.

\bibitem{akakzia2020grounding}
Ahmed Akakzia, C{\'e}dric Colas, Pierre-Yves Oudeyer, Mohamed Chetouani, and
  Olivier Sigaud.
\newblock Grounding language to autonomously-acquired skills via goal
  generation.
\newblock {\em arXiv preprint arXiv:2006.07185}, 2020.

\bibitem{AnandROBCH19}
Ankesh Anand, Evan Racah, Sherjil Ozair, Yoshua Bengio, Marc{-}Alexandre
  C{\^{o}}t{\'{e}}, and R.~Devon Hjelm.
\newblock Unsupervised state representation learning in atari.
\newblock In Hanna~M. Wallach, Hugo Larochelle, Alina Beygelzimer, Florence
  d'Alch{\'{e}}{-}Buc, Emily~B. Fox, and Roman Garnett, editors, {\em Advances
  in Neural Information Processing Systems 32: Annual Conference on Neural
  Information Processing Systems 2019, NeurIPS 2019, December 8-14, 2019,
  Vancouver, BC, Canada}, pages 8766--8779, 2019.

\bibitem{bahdanau2018learning}
Dzmitry Bahdanau, Felix Hill, Jan Leike, Edward Hughes, Arian Hosseini,
  Pushmeet Kohli, and Edward Grefenstette.
\newblock Learning to understand goal specifications by modelling reward.
\newblock {\em arXiv preprint arXiv:1806.01946}, 2018.

\bibitem{barto2003recent}
Andrew~G Barto and Sridhar Mahadevan.
\newblock Recent advances in hierarchical reinforcement learning.
\newblock {\em Discrete event dynamic systems}, 13(1):41--77, 2003.

\bibitem{AMIGO}
Andres Campero, Roberta Raileanu, Heinrich K{\"{u}}ttler, Joshua~B. Tenenbaum,
  Tim Rockt{\"{a}}schel, and Edward Grefenstette.
\newblock Learning with amigo: Adversarially motivated intrinsic goals.
\newblock In {\em 9th International Conference on Learning Representations,
  {ICLR} 2021, Virtual Event, Austria, May 3-7, 2021}. OpenReview.net, 2021.

\bibitem{RIS}
Elliot Chane{-}Sane, Cordelia Schmid, and Ivan Laptev.
\newblock Goal-conditioned reinforcement learning with imagined subgoals.
\newblock In Marina Meila and Tong Zhang, editors, {\em Proceedings of the 38th
  International Conference on Machine Learning, {ICML} 2021, 18-24 July 2021,
  Virtual Event}, volume 139 of {\em Proceedings of Machine Learning Research},
  pages 1430--1440. {PMLR}, 2021.

\bibitem{chao2011towards}
Crystal Chao, Maya Cakmak, and Andrea~L Thomaz.
\newblock Towards grounding concepts for transfer in goal learning from
  demonstration.
\newblock In {\em 2011 IEEE International Conference on Development and
  Learning (ICDL)}, volume~2, pages 1--6. IEEE, 2011.

\bibitem{chevalier2018babyai}
Maxime Chevalier-Boisvert, Dzmitry Bahdanau, Salem Lahlou, Lucas Willems,
  Chitwan Saharia, Thien~Huu Nguyen, and Yoshua Bengio.
\newblock Babyai: A platform to study the sample efficiency of grounded
  language learning.
\newblock {\em arXiv preprint arXiv:1810.08272}, 2018.

\bibitem{gym_minigrid}
Maxime Chevalier-Boisvert, Lucas Willems, and Suman Pal.
\newblock Minimalistic gridworld environment for openai gym.
\newblock \url{https://github.com/maximecb/gym-minigrid}, 2018.

\bibitem{dal2002planning}
Ugo Dal~Lago, Marco Pistore, and Paolo Traverso.
\newblock Planning with a language for extended goals.
\newblock In {\em AAAI/IAAI}, pages 447--454, 2002.

\bibitem{dayan1992feudal}
Peter Dayan and Geoffrey~E Hinton.
\newblock Feudal reinforcement learning.
\newblock {\em Advances in neural information processing systems}, 5, 1992.

\bibitem{DayanH92}
Peter Dayan and Geoffrey~E. Hinton.
\newblock Feudal reinforcement learning.
\newblock In Stephen~Jose Hanson, Jack~D. Cowan, and C.~Lee Giles, editors,
  {\em Advances in Neural Information Processing Systems 5, {[NIPS} Conference,
  Denver, Colorado, USA, November 30 - December 3, 1992]}, pages 271--278.
  Morgan Kaufmann, 1992.

\bibitem{Dietterich00}
Thomas~G. Dietterich.
\newblock Hierarchical reinforcement learning with the {MAXQ} value function
  decomposition.
\newblock {\em J. Artif. Intell. Res.}, 13:227--303, 2000.

\bibitem{Dwiel}
Zach Dwiel, Madhavun Candadai, Mariano~J. Phielipp, and Arjun~K. Bansal.
\newblock Hierarchical policy learning is sensitive to goal space design.
\newblock {\em CoRR}, abs/1905.01537, 2019.

\bibitem{Efroni2021ppe}
Yonathan Efroni, Dipendra Misra, Akshay Krishnamurthy, Alekh Agarwal, and John
  Langford.
\newblock Provable {RL} with exogenous distractors via multistep inverse
  dynamics.
\newblock {\em CoRR}, abs/2110.08847, 2021.

\bibitem{EysenbachSL21}
Benjamin Eysenbach, Ruslan Salakhutdinov, and Sergey Levine.
\newblock C-learning: Learning to achieve goals via recursive classification.
\newblock In {\em 9th International Conference on Learning Representations,
  {ICLR} 2021, Virtual Event, Austria, May 3-7, 2021}. OpenReview.net, 2021.

\bibitem{fu2019language}
Justin Fu, Anoop Korattikara, Sergey Levine, and Sergio Guadarrama.
\newblock From language to goals: Inverse reinforcement learning for
  vision-based instruction following.
\newblock {\em arXiv preprint arXiv:1902.07742}, 2019.

\bibitem{GoyalISALBBL19}
Anirudh Goyal, Riashat Islam, Daniel Strouse, Zafarali Ahmed, Hugo Larochelle,
  Matthew~M. Botvinick, Yoshua Bengio, and Sergey Levine.
\newblock Infobot: Transfer and exploration via the information bottleneck.
\newblock In {\em 7th International Conference on Learning Representations,
  {ICLR} 2019, New Orleans, LA, USA, May 6-9, 2019}. OpenReview.net, 2019.

\bibitem{grice1975logic}
Herbert~P Grice.
\newblock Logic and conversation.
\newblock In {\em Speech acts}, pages 41--58. Brill, 1975.

\bibitem{haarnoja2018soft}
Tuomas Haarnoja, Aurick Zhou, Pieter Abbeel, and Sergey Levine.
\newblock Soft actor-critic: Off-policy maximum entropy deep reinforcement
  learning with a stochastic actor.
\newblock In {\em International conference on machine learning}, pages
  1861--1870. PMLR, 2018.

\bibitem{jennrich1969asymptotic}
Robert~I Jennrich.
\newblock Asymptotic properties of non-linear least squares estimators.
\newblock {\em The Annals of Mathematical Statistics}, 40(2):633--643, 1969.

\bibitem{jiang2019language}
Yiding Jiang, Shixiang~Shane Gu, Kevin~P Murphy, and Chelsea Finn.
\newblock Language as an abstraction for hierarchical deep reinforcement
  learning.
\newblock {\em Advances in Neural Information Processing Systems}, 32, 2019.

\bibitem{Kaelbling93}
Leslie~Pack Kaelbling.
\newblock Learning to achieve goals.
\newblock In Ruzena Bajcsy, editor, {\em Proceedings of the 13th International
  Joint Conference on Artificial Intelligence. Chamb{\'{e}}ry, France, August
  28 - September 3, 1993}, pages 1094--1099. Morgan Kaufmann, 1993.

\bibitem{kaelbling1987architecture}
Leslie~Pack Kaelbling et~al.
\newblock An architecture for intelligent reactive systems.
\newblock {\em Reasoning about actions and plans}, pages 395--410, 1987.

\bibitem{kaelbling1996reinforcement}
Leslie~Pack Kaelbling, Michael~L Littman, and Andrew~W Moore.
\newblock Reinforcement learning: A survey.
\newblock {\em Journal of artificial intelligence research}, 4:237--285, 1996.

\bibitem{khetarpal2020can}
Khimya Khetarpal, Zafarali Ahmed, Gheorghe Comanici, David Abel, and Doina
  Precup.
\newblock What can i do here? a theory of affordances in reinforcement
  learning.
\newblock In {\em International Conference on Machine Learning}, pages
  5243--5253. PMLR, 2020.

\bibitem{kulkarni2016hierarchical}
Tejas~D Kulkarni, Karthik Narasimhan, Ardavan Saeedi, and Josh Tenenbaum.
\newblock Hierarchical deep reinforcement learning: Integrating temporal
  abstraction and intrinsic motivation.
\newblock {\em Advances in neural information processing systems}, 29, 2016.

\bibitem{lamb2022guaranteed}
Alex Lamb, Riashat Islam, Yonathan Efroni, Aniket Didolkar, Dipendra Misra,
  Dylan Foster, Lekan Molu, Rajan Chari, Akshay Krishnamurthy, and John
  Langford.
\newblock Guaranteed discovery of controllable latent states with multi-step
  inverse models.
\newblock {\em arXiv preprint arXiv:2207.08229}, 2022.

\bibitem{Laroche2017}
Romain Laroche and Merwan Barlier.
\newblock Transfer reinforcement learning with shared dynamics.
\newblock In {\em Thirty-First AAAI Conference on Artificial Intelligence},
  2017.

\bibitem{Laroche2021}
Romain Laroche and Remi Tachet~des Combes.
\newblock Dr jekyll \& mr hyde: the strange case of off-policy policy updates.
\newblock {\em Advances in Neural Information Processing Systems},
  34:24442--24454, 2021.

\bibitem{laskin2020curl}
Michael Laskin, Aravind Srinivas, and Pieter Abbeel.
\newblock Curl: Contrastive unsupervised representations for reinforcement
  learning.
\newblock In {\em International Conference on Machine Learning}, pages
  5639--5650. PMLR, 2020.

\bibitem{LevyKPS19}
Andrew Levy, George~Dimitri Konidaris, Robert~Platt Jr., and Kate Saenko.
\newblock Learning multi-level hierarchies with hindsight.
\newblock In {\em 7th International Conference on Learning Representations,
  {ICLR} 2019, New Orleans, LA, USA, May 6-9, 2019}. OpenReview.net, 2019.

\bibitem{liu2021discrete}
Dianbo Liu, Alex~M Lamb, Kenji Kawaguchi, Anirudh Goyal, Chen Sun, Michael~C
  Mozer, and Yoshua Bengio.
\newblock Discrete-valued neural communication.
\newblock {\em Advances in Neural Information Processing Systems}, 34, 2021.

\bibitem{mazoure2020deep}
Bogdan Mazoure, Remi Tachet~des Combes, Thang~Long Doan, Philip Bachman, and
  R~Devon Hjelm.
\newblock Deep reinforcement and infomax learning.
\newblock {\em Advances in Neural Information Processing Systems},
  33:3686--3698, 2020.

\bibitem{MisraHK020}
Dipendra Misra, Mikael Henaff, Akshay Krishnamurthy, and John Langford.
\newblock Kinematic state abstraction and provably efficient rich-observation
  reinforcement learning.
\newblock In {\em Proceedings of the 37th International Conference on Machine
  Learning, {ICML} 2020, 13-18 July 2020, Virtual Event}, volume 119 of {\em
  Proceedings of Machine Learning Research}, pages 6961--6971. {PMLR}, 2020.

\bibitem{mnih2013playing}
Volodymyr Mnih, Koray Kavukcuoglu, David Silver, Alex Graves, Ioannis
  Antonoglou, Daan Wierstra, and Martin Riedmiller.
\newblock Playing atari with deep reinforcement learning.
\newblock {\em arXiv preprint arXiv:1312.5602}, 2013.

\bibitem{moore1993prioritized}
Andrew~W Moore and Christopher~G Atkeson.
\newblock Prioritized sweeping: Reinforcement learning with less data and less
  time.
\newblock {\em Machine learning}, 13(1):103--130, 1993.

\bibitem{NachumGoal}
Ofir Nachum, Shixiang Gu, Honglak Lee, and Sergey Levine.
\newblock Data-efficient hierarchical reinforcement learning.
\newblock In Samy Bengio, Hanna~M. Wallach, Hugo Larochelle, Kristen Grauman,
  Nicol{\`{o}} Cesa{-}Bianchi, and Roman Garnett, editors, {\em Advances in
  Neural Information Processing Systems 31: Annual Conference on Neural
  Information Processing Systems 2018, NeurIPS 2018, December 3-8, 2018,
  Montr{\'{e}}al, Canada}, pages 3307--3317, 2018.

\bibitem{nachum2018near}
Ofir Nachum, Shixiang Gu, Honglak Lee, and Sergey Levine.
\newblock Near-optimal representation learning for hierarchical reinforcement
  learning.
\newblock {\em arXiv preprint arXiv:1810.01257}, 2018.

\bibitem{NachumGoal2}
Ofir Nachum, Haoran Tang, Xingyu Lu, Shixiang Gu, Honglak Lee, and Sergey
  Levine.
\newblock Why does hierarchy (sometimes) work so well in reinforcement
  learning?
\newblock {\em CoRR}, abs/1909.10618, 2019.

\bibitem{NairF20}
Suraj Nair and Chelsea Finn.
\newblock Hierarchical foresight: Self-supervised learning of long-horizon
  tasks via visual subgoal generation.
\newblock In {\em 8th International Conference on Learning Representations,
  {ICLR} 2020, Addis Ababa, Ethiopia, April 26-30, 2020}. OpenReview.net, 2020.

\bibitem{LEAP}
Soroush Nasiriany, Vitchyr Pong, Steven Lin, and Sergey Levine.
\newblock Planning with goal-conditioned policies.
\newblock In Hanna~M. Wallach, Hugo Larochelle, Alina Beygelzimer, Florence
  d'Alch{\'{e}}{-}Buc, Emily~B. Fox, and Roman Garnett, editors, {\em Advances
  in Neural Information Processing Systems 32: Annual Conference on Neural
  Information Processing Systems 2019, NeurIPS 2019, December 8-14, 2019,
  Vancouver, BC, Canada}, pages 14814--14825, 2019.

\bibitem{plappert2018multi}
Matthias Plappert, Marcin Andrychowicz, Alex Ray, Bob McGrew, Bowen Baker,
  Glenn Powell, Jonas Schneider, Josh Tobin, Maciek Chociej, Peter Welinder,
  et~al.
\newblock Multi-goal reinforcement learning: Challenging robotics environments
  and request for research.
\newblock {\em arXiv preprint arXiv:1802.09464}, 2018.

\bibitem{pong2018temporal}
Vitchyr Pong, Shixiang Gu, Murtaza Dalal, and Sergey Levine.
\newblock Temporal difference models: Model-free deep rl for model-based
  control.
\newblock {\em arXiv preprint arXiv:1802.09081}, 2018.

\bibitem{PongGDL18}
Vitchyr Pong, Shixiang Gu, Murtaza Dalal, and Sergey Levine.
\newblock Temporal difference models: Model-free deep {RL} for model-based
  control.
\newblock In {\em 6th International Conference on Learning Representations,
  {ICLR} 2018, Vancouver, BC, Canada, April 30 - May 3, 2018, Conference Track
  Proceedings}. OpenReview.net, 2018.

\bibitem{RIDE}
Roberta Raileanu and Tim Rockt{\"{a}}schel.
\newblock {RIDE:} rewarding impact-driven exploration for
  procedurally-generated environments.
\newblock In {\em 8th International Conference on Learning Representations,
  {ICLR} 2020, Addis Ababa, Ethiopia, April 26-30, 2020}. OpenReview.net, 2020.

\bibitem{razavi2019generating}
Ali Razavi, Aaron Van~den Oord, and Oriol Vinyals.
\newblock Generating diverse high-fidelity images with vq-vae-2.
\newblock {\em Advances in neural information processing systems}, 32, 2019.

\bibitem{schaul2015universal}
Tom Schaul, Daniel Horgan, Karol Gregor, and David Silver.
\newblock Universal value function approximators.
\newblock In {\em International conference on machine learning}, pages
  1312--1320. PMLR, 2015.

\bibitem{SchaulHGS15}
Tom Schaul, Daniel Horgan, Karol Gregor, and David Silver.
\newblock Universal value function approximators.
\newblock In Francis~R. Bach and David~M. Blei, editors, {\em Proceedings of
  the 32nd International Conference on Machine Learning, {ICML} 2015, Lille,
  France, 6-11 July 2015}, volume~37 of {\em {JMLR} Workshop and Conference
  Proceedings}, pages 1312--1320. JMLR.org, 2015.

\bibitem{schulman2015trust}
John Schulman, Sergey Levine, Pieter Abbeel, Michael Jordan, and Philipp
  Moritz.
\newblock Trust region policy optimization.
\newblock In {\em International conference on machine learning}, pages
  1889--1897. PMLR, 2015.

\bibitem{schulman2017proximal}
John Schulman, Filip Wolski, Prafulla Dhariwal, Alec Radford, and Oleg Klimov.
\newblock Proximal policy optimization algorithms.
\newblock {\em arXiv preprint arXiv:1707.06347}, 2017.

\bibitem{schultz1998predictive}
Wolfram Schultz.
\newblock Predictive reward signal of dopamine neurons.
\newblock {\em Journal of neurophysiology}, 80(1):1--27, 1998.

\bibitem{schwarzer2020data}
Max Schwarzer, Ankesh Anand, Rishab Goel, R~Devon Hjelm, Aaron Courville, and
  Philip Bachman.
\newblock Data-efficient reinforcement learning with self-predictive
  representations.
\newblock {\em arXiv preprint arXiv:2007.05929}, 2020.

\bibitem{SchwarzerAGHCB21}
Max Schwarzer, Ankesh Anand, Rishab Goel, R.~Devon Hjelm, Aaron~C. Courville,
  and Philip Bachman.
\newblock Data-efficient reinforcement learning with self-predictive
  representations.
\newblock In {\em 9th International Conference on Learning Representations,
  {ICLR} 2021, Virtual Event, Austria, May 3-7, 2021}. OpenReview.net, 2021.

\bibitem{SchwarzerRNACHB21}
Max Schwarzer, Nitarshan Rajkumar, Michael Noukhovitch, Ankesh Anand, Laurent
  Charlin, R.~Devon Hjelm, Philip Bachman, and Aaron~C. Courville.
\newblock Pretraining representations for data-efficient reinforcement
  learning.
\newblock In Marc'Aurelio Ranzato, Alina Beygelzimer, Yann~N. Dauphin, Percy
  Liang, and Jennifer~Wortman Vaughan, editors, {\em Advances in Neural
  Information Processing Systems 34: Annual Conference on Neural Information
  Processing Systems 2021, NeurIPS 2021, December 6-14, 2021, virtual}, pages
  12686--12699, 2021.

\bibitem{silver2021reward}
David Silver, Satinder Singh, Doina Precup, and Richard~S Sutton.
\newblock Reward is enough.
\newblock {\em Artificial Intelligence}, 299:103535, 2021.

\bibitem{singh1996reinforcement}
Satinder~P Singh and Richard~S Sutton.
\newblock Reinforcement learning with replacing eligibility traces.
\newblock {\em Machine learning}, 22(1):123--158, 1996.

\bibitem{stooke2021decoupling}
Adam Stooke, Kimin Lee, Pieter Abbeel, and Michael Laskin.
\newblock Decoupling representation learning from reinforcement learning.
\newblock In {\em International Conference on Machine Learning}, pages
  9870--9879. PMLR, 2021.

\bibitem{Sukhbaatar}
Sainbayar Sukhbaatar, Emily Denton, Arthur Szlam, and Rob Fergus.
\newblock Learning goal embeddings via self-play for hierarchical reinforcement
  learning.
\newblock {\em CoRR}, abs/1811.09083, 2018.

\bibitem{sutton2018reinforcement}
Richard~S Sutton and Andrew~G Barto.
\newblock {\em Reinforcement learning: An introduction}.
\newblock MIT press, 2018.

\bibitem{sutton2011horde}
Richard~S Sutton, Joseph Modayil, Michael Delp, Thomas Degris, Patrick~M
  Pilarski, Adam White, and Doina Precup.
\newblock Horde: A scalable real-time architecture for learning knowledge from
  unsupervised sensorimotor interaction.
\newblock In {\em The 10th International Conference on Autonomous Agents and
  Multiagent Systems-Volume 2}, pages 761--768, 2011.

\bibitem{sutton1984temporal}
Richard~Stuart Sutton.
\newblock {\em Temporal credit assignment in reinforcement learning}.
\newblock PhD thesis, University of Massachusetts Amherst, 1984.

\bibitem{tesauro1991practical}
Gerald Tesauro.
\newblock Practical issues in temporal difference learning.
\newblock {\em Advances in neural information processing systems}, 4, 1991.

\bibitem{van2017neural}
Aaron Van Den~Oord, Oriol Vinyals, et~al.
\newblock Neural discrete representation learning.
\newblock {\em Advances in neural information processing systems}, 30, 2017.

\bibitem{van1996}
Aad~W. {van der Vaart} and Jon~A. Wellner.
\newblock {\em Weak Convergence and Empirical Processes}.
\newblock Springer New York, 1996.

\bibitem{Veeriah_ManyGoals}
Vivek Veeriah, Junhyuk Oh, and Satinder Singh.
\newblock Many-goals reinforcement learning.
\newblock {\em CoRR}, abs/1806.09605, 2018.

\bibitem{Weir2022}
Nathaniel Weir, Xingdi Yuan, Marc-Alexandre Côté, Matthew Hausknecht, Romain
  Laroche, Ida Momennejad, Harm Van~Seijen, and Benjamin Van~Durme.
\newblock One-shot learning from a demonstration with hierarchical latent
  language, 2022.

\bibitem{WieringS97}
Marco~A. Wiering and J{\"{u}}rgen Schmidhuber.
\newblock Hq-learning.
\newblock {\em Adapt. Behav.}, 6(2):219--246, 1997.

\bibitem{YangN21}
Mengjiao Yang and Ofir Nachum.
\newblock Representation matters: Offline pretraining for sequential decision
  making.
\newblock In Marina Meila and Tong Zhang, editors, {\em Proceedings of the 38th
  International Conference on Machine Learning, {ICML} 2021, 18-24 July 2021,
  Virtual Event}, volume 139 of {\em Proceedings of Machine Learning Research},
  pages 11784--11794. {PMLR}, 2021.

\bibitem{YaratsFLP21}
Denis Yarats, Rob Fergus, Alessandro Lazaric, and Lerrel Pinto.
\newblock Reinforcement learning with prototypical representations.
\newblock In Marina Meila and Tong Zhang, editors, {\em Proceedings of the 38th
  International Conference on Machine Learning, {ICML} 2021, 18-24 July 2021,
  Virtual Event}, volume 139 of {\em Proceedings of Machine Learning Research},
  pages 11920--11931. {PMLR}, 2021.

\bibitem{Zhang0S21a}
Lunjun Zhang, Ge~Yang, and Bradly~C. Stadie.
\newblock World model as a graph: Learning latent landmarks for planning.
\newblock In Marina Meila and Tong Zhang, editors, {\em Proceedings of the 38th
  International Conference on Machine Learning, {ICML} 2021, 18-24 July 2021,
  Virtual Event}, volume 139 of {\em Proceedings of Machine Learning Research},
  pages 12611--12620. {PMLR}, 2021.

\bibitem{HRAC}
Tianren Zhang, Shangqi Guo, Tian Tan, Xiaolin Hu, and Feng Chen.
\newblock Generating adjacency-constrained subgoals in hierarchical
  reinforcement learning.
\newblock In Hugo Larochelle, Marc'Aurelio Ranzato, Raia Hadsell,
  Maria{-}Florina Balcan, and Hsuan{-}Tien Lin, editors, {\em Advances in
  Neural Information Processing Systems 33: Annual Conference on Neural
  Information Processing Systems 2020, NeurIPS 2020, December 6-12, 2020,
  virtual}, 2020.

\end{thebibliography}

\bibliographystyle{plain}

\newpage
\appendix

\section{Appendix}
In section \ref{ref:related_work}, we discuss extended related work. In section \ref{sec:app_add_exp_results} we include details of our experiment setup and additional ablation analysis evaluating $\modelname$. Section \ref{sec:app_theory} contains detailed proof for theorem \ref{thm:1}. Finally, we provide our algorithm, $\modelname$ that requires minimal changes to existing goal conditioned RL setup, in section \ref{sec:app_algo}, with sample code snippet provided in \ref{sec:app_code}

\section{Demonstrating Factorized Representations in a Robot Experiment with Visual Background Distractors}

We  demonstrate the ability of $\modelname$ to learn factorial representations on real world robot data, where the robot arm moves in presence of background video distractors \cite{lamb2022guaranteed}. Further details on the robot arm data collection are provided below. The data contains rich temporal background noise. We first learn a representation with a simple auto-encoder, following by the discretization bottleneck of $\modelname$, and then reconstruct the image with different discrete factors. Figure \ref{fig:robot_data_compositionality} demonstrates factorization in the learnt representation tweaking the different factors used in the discretization bottleneck. In particular, we observe some form of ``compositionality'' emerging as the decoder was never trained on some of the combinations of factors, for instance (person in the background + orange lamp) and (person in the background + arm to the left).

\begin{figure}[!htbp]
\centering
\subfigure[Normal Reconstructions]{
\includegraphics[trim=0cm 27.5cm 36.4cm 0cm, clip=true,width=0.9\textwidth]{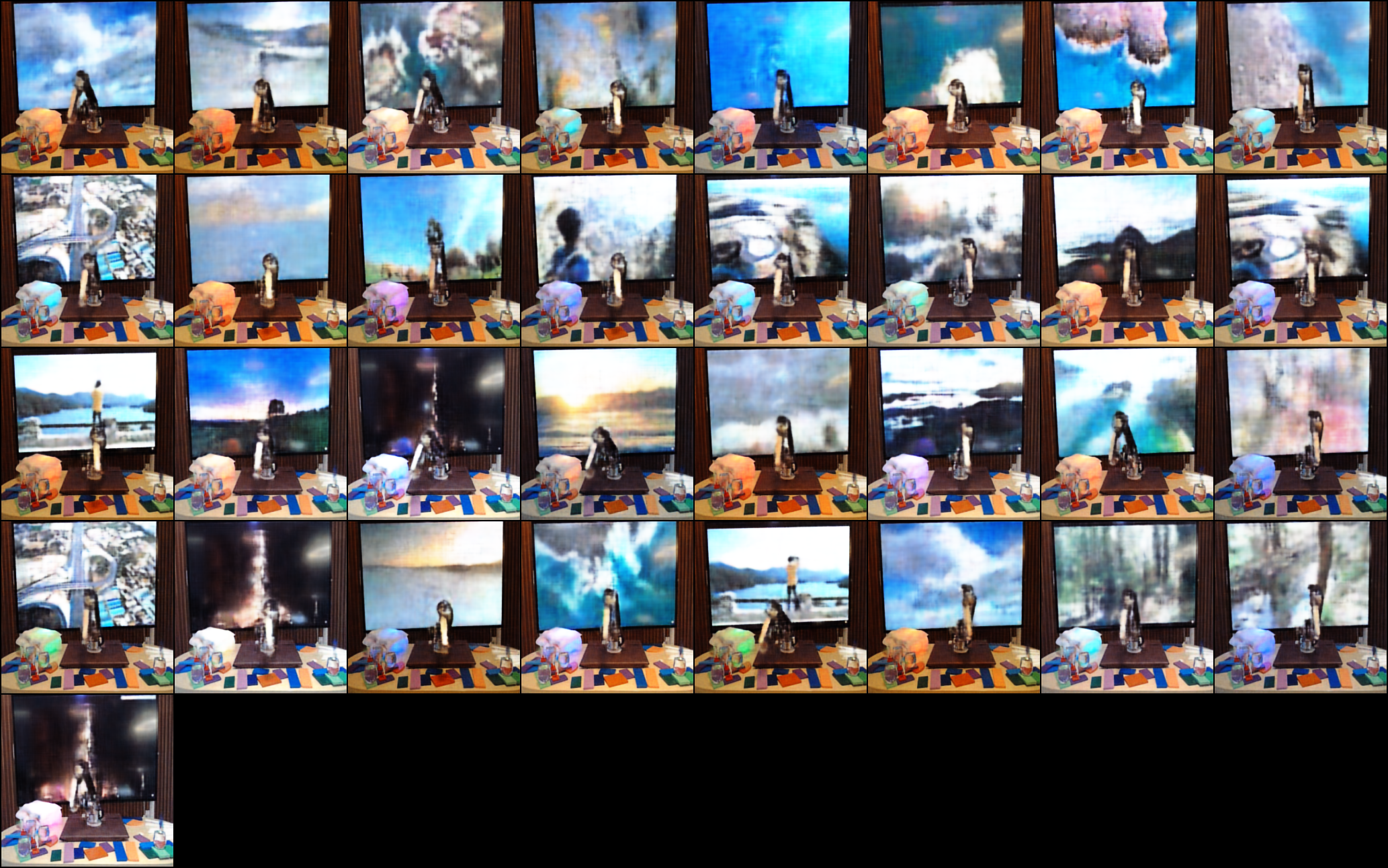}}
\subfigure[Change Lamp to Orange (changing a single factor)]{
\includegraphics[trim=0cm 27.5cm 36.4cm 0cm, clip=true,width=0.9\textwidth]{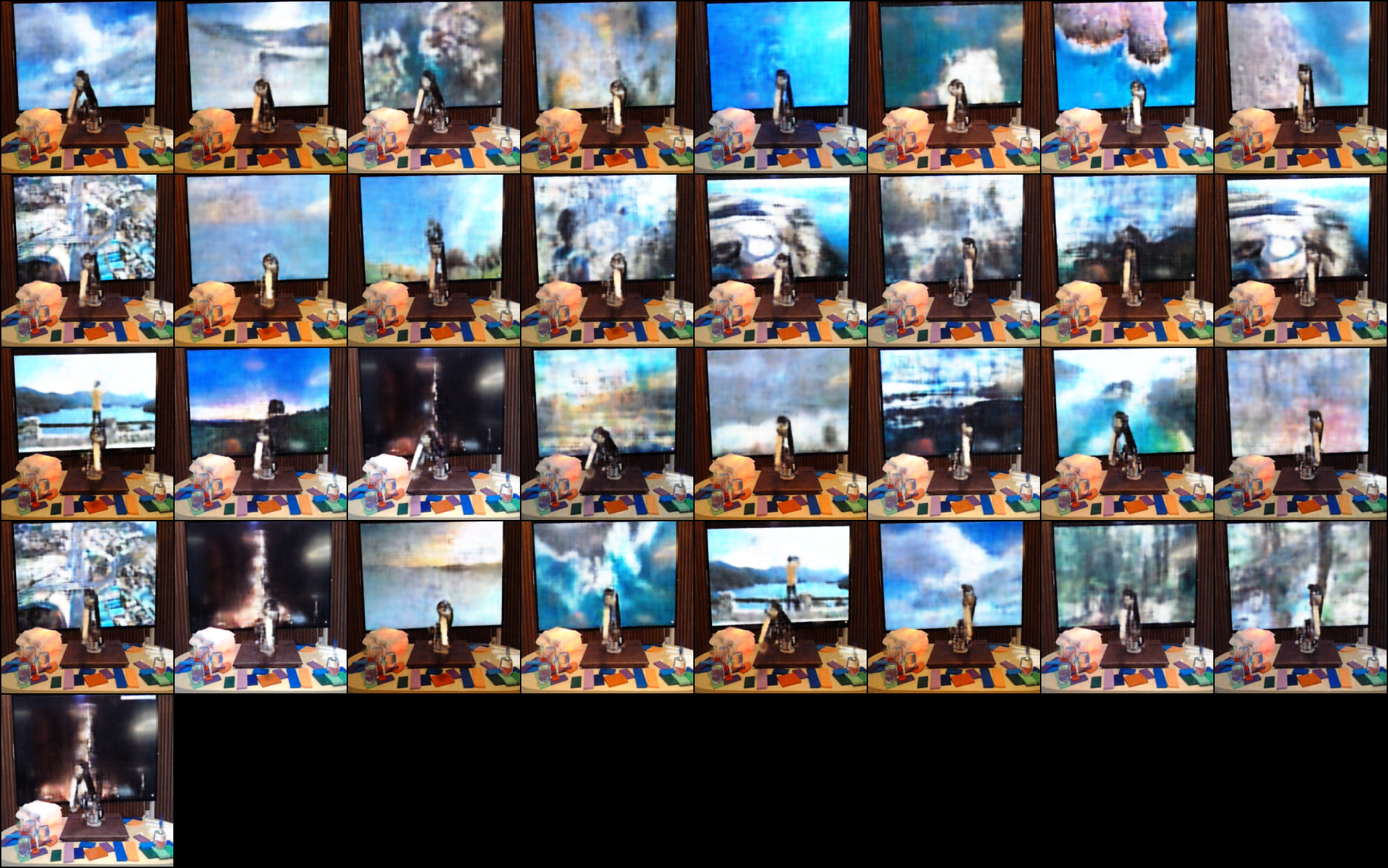}}
\subfigure[Make arm point left (changing a single factor)]{
\includegraphics[trim=0cm 27.5cm 36.4cm 0cm, clip=true,width=0.9\textwidth]{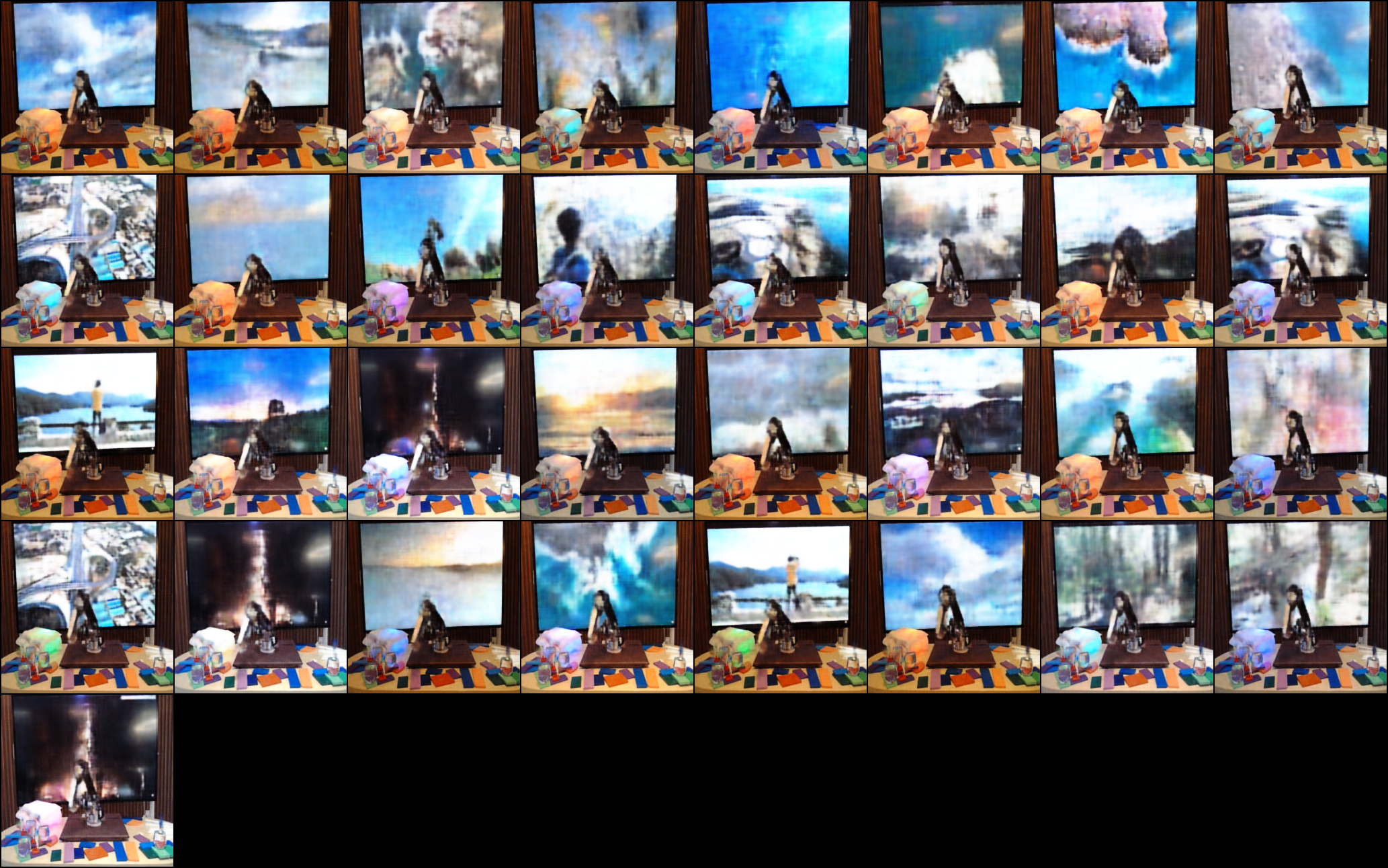}}

\caption{An autoencoder trained with 8 discrete codes to reconstruct images of a real-robotic arm with independently changing distractors (TV and lamp). The first two rows show the reconstruction of the initial images. In the middle two rows, we change one of the discrete factors to match its value in the first image, in the resulting reconstruction, the lamp has turned orange. On the bottom two rows, we change another discrete factor, to match its value from the first image, and observe that it made the robotic arm point left}
\label{fig:robot_data_compositionality}
\end{figure}

\textbf{Experiment Details:} In this task, the robot arm moves on top of a grid layout, containing $9$ different positions. We denote these as the \textit{true states}. We use two cameras to take images, for the dataset, one from the front side of the robot and the other with a top down view from above. We collect a dataset containing pixel based observations only, where the images consist of the robot arm along with the background distractors. Inspired by the exogenous noise information setup \cite{Efroni2021ppe}, we setup the robot task while there is a TV playing a video in the background, with other flashing lights nearby. The offline dataset consists of $6$ hours of robot data, with $14000$ samples from the arm, taking high level actions of move left, right, up and down. A sample point image is collected after each action, and the background distractors changes significantly, due to video and lighting in the background. The goal of the experiment is to predict accurately the ground truth state position by learning latent representations with $\modelname$.

\textbf{Experiment Results:} We evaluate the ability of $\modelname$ to accurately reconstruct the image, by learning the latent state representation while also ignoring the background distractors. This is denoted as the \textit{Image Noise}, where we compare $\modelname$ with and without VIB, alongside a baseline agent which only learns a representation. For learning latent representations, we use a multi-step inverse dynamics model \cite{lamb2022guaranteed}. In addition, we compare the ability of $\modelname$ to accurately predict the ground truth states, denoted by \textit{State Accuracy} solely from the observations, as a classification task. This is challenging since the learnt representation needs to predict ground states while ignoring the irrelevant background information. Furthermore with the learnt model, we predict the time-step for each observation as an additional metric to determine effectiveness of $\modelname$. The time-step is an indicator of the background noise that appeared in each sample; and with \textit{Temporal Noise}, we evaluate $\modelname$ to predict the time step while ignoring irrelevant information from observations. Experiment results in Figure~\ref{fig:robot_data_compositionality} shows that the use of VIB helps improve the ability of $\modelname$ to remove noise from the representation, while being able to almost perfectly predict the ground truth state of the robot.

\section{Experiment Details and Additional Results}
\label{sec:app_add_exp_results}

For all our experiments, we use existing open-source implementations of the baseline algorithms. We mostly integrate $\modelname$ with HRAC \cite{HRAC} and RIS \cite{RIS}, which are state of the art goal conditioned algorithms on the complex Ant navigation and robotic manipulation tasks. We use the same hyperparameters and default configurations as used in the baselines. For the simpler maze tasks such as from MiniGrid and KeyChest, we also use existing setups used by previous algorithms \cite{AMIGO, HRAC}. All our experiment results are based on $3$ random seeds, and we provide our implementation for reproducibility.  

\subsection{Experiment Details on Color-MNIST Dataset}
\label{app:rebuttal_details_mnist}

We include a brief description of the experiment details, used for the color-mnist example to demonstrate factorization. The pixel-based input is first passed through an encoder (a two-layer neural network) to obtain its latent representation with the dimension of 30, we then quantize the continuous representation into two groups of discrete codes, where the codebook size is 256. In the training procedure, two groups of the discrete codes are then concatenated to obtain the discretized representation, and finally passed through a decoder (another two-layer neural network), where we used reconstruction loss (MSE loss) combining with the loss for vector quantization to train the network. While in the testing procedure, we used zero vector to substitute one group of the discrete codes, and then obtain the reconstructed image by concatenating it with the other group and passing through the decoder.

\subsection{Visual MazeWorlds}
\label{sec:app_vis_maze_details}

\begin{figure}[!htbp]
    \centering
    {{\includegraphics[width=2.5cm]{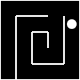} }}%
    \qquad
    {{\includegraphics[width=2.5cm]{neurips2022/figures/spiral_maze.png} }}%
    {{\includegraphics[width=4.5cm]{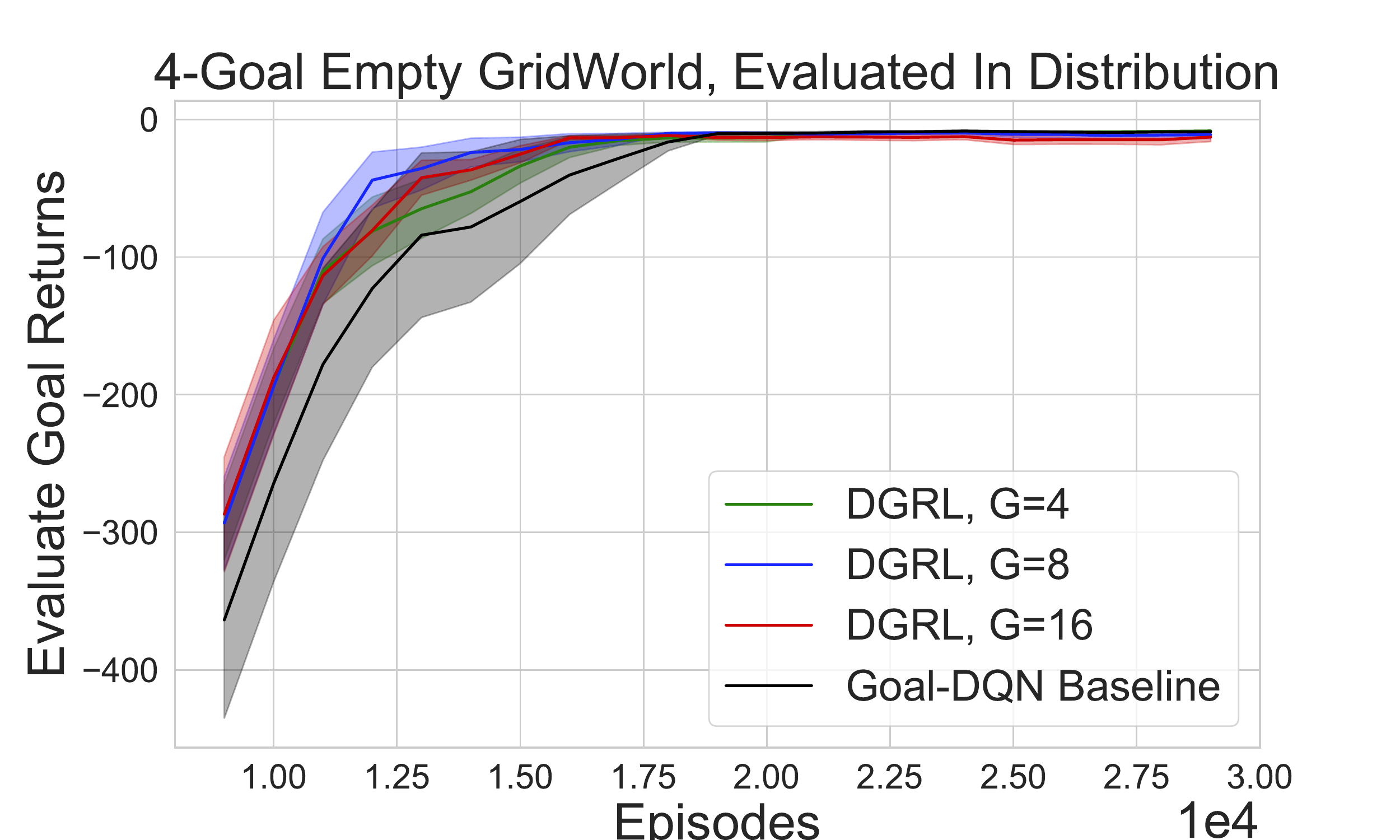}}}
    \qquad
    {{\includegraphics[width=4.5cm]{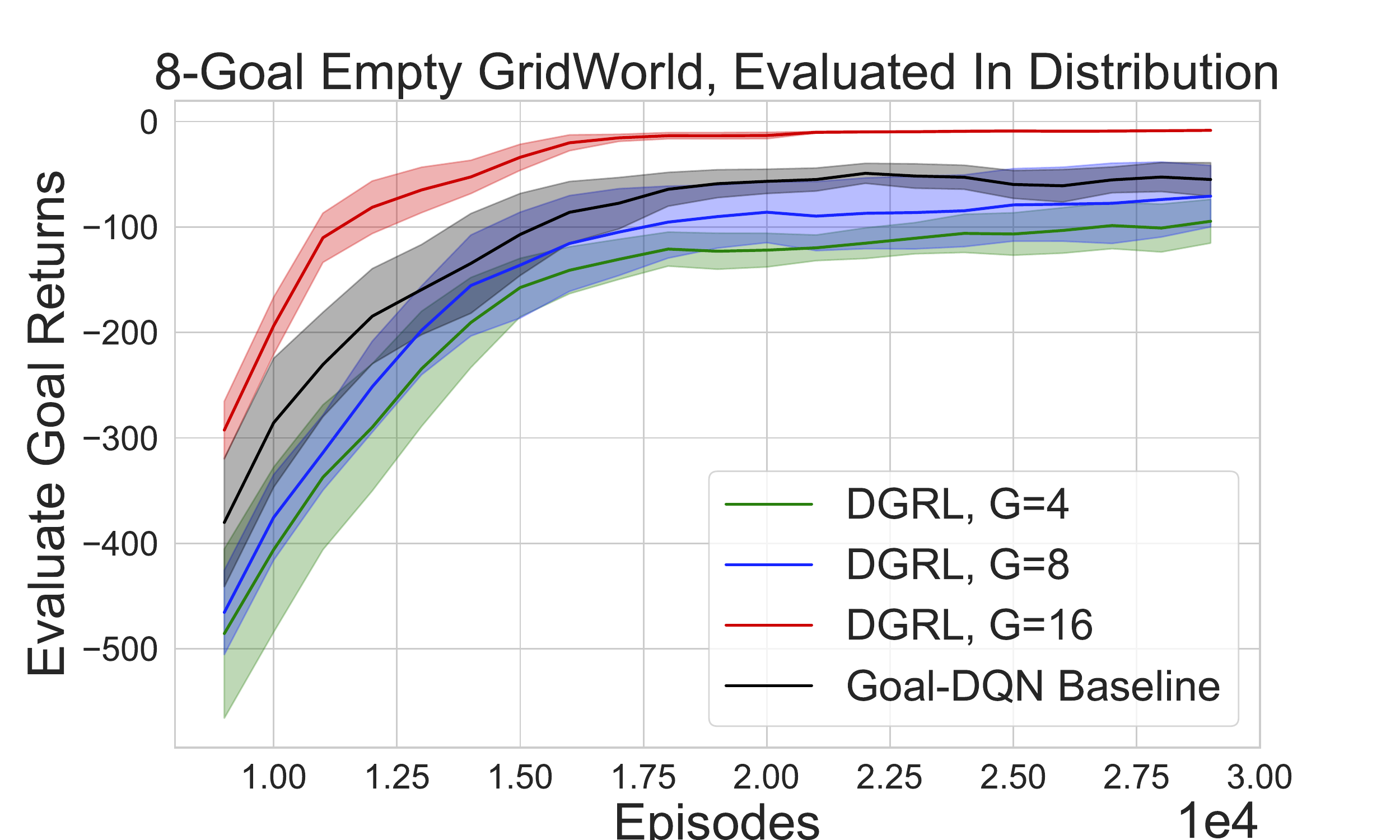} }}%
    {{\includegraphics[width=4.5cm]{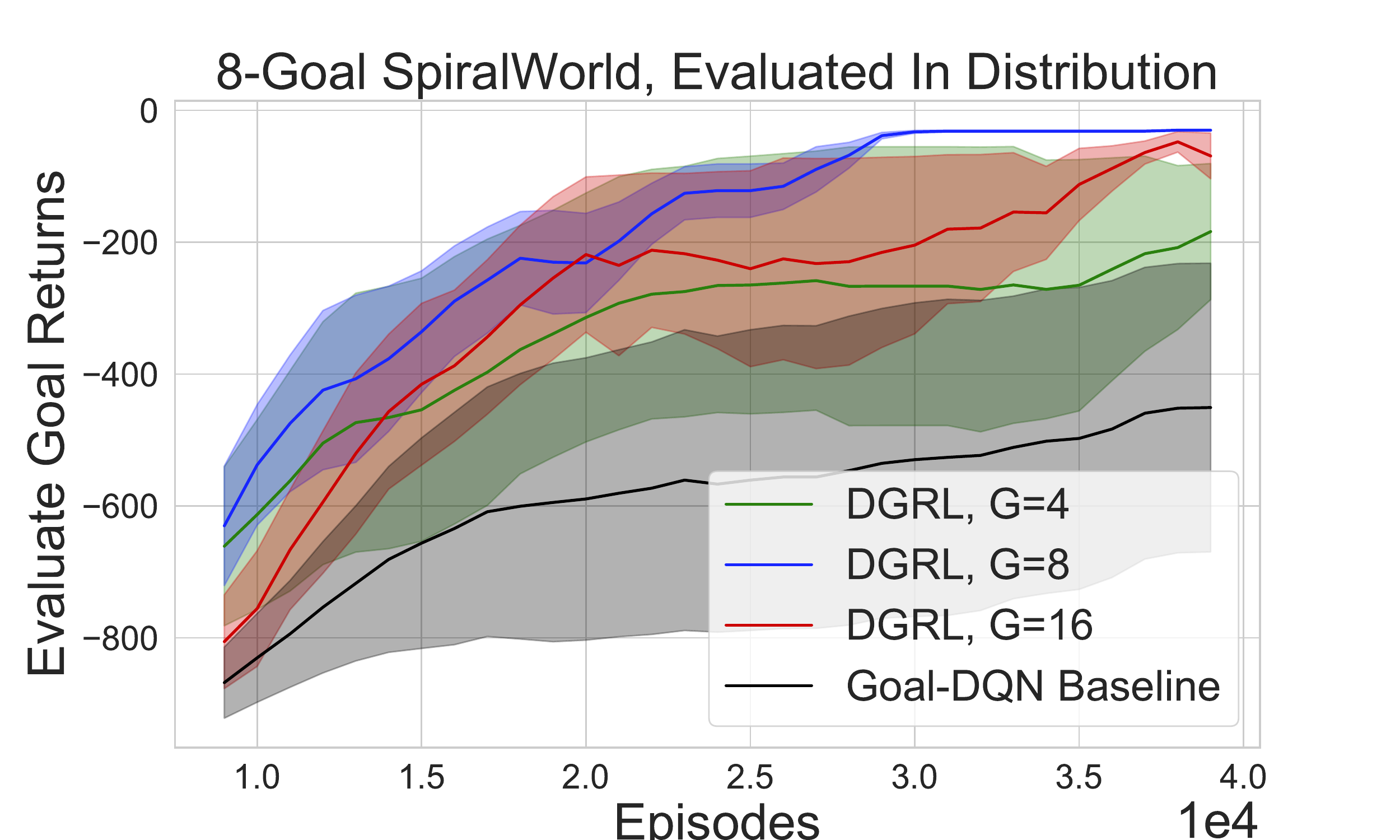} }}%

    \caption{Visualization of the SpiralWorld and LoopWorld environments. Additional experimental results in an empty 4-goal and 8-goal environment across different factors $G$.}%
    \label{fig:app_4_vis_maze}%
\end{figure}
In this section, we provide additional details for the visual maze tasks. We provide environment visualizations, for the spiralworld and loopworld environments in figure \ref{fig:app_4_vis_maze}. For this task, we use $6 \times 6$ gridworlds, where the agent receives pixel based full observations, of size $80 \times 80 \times 3$, of the environment. The agent starts deterministically in the environment from one corner of the loop, and the goals are provided in the state space, where agent receives observations of goals. The agent receives a negative reward of $-1$ at every time step, and a reward of $+5$ when reaching any of the goals. 

We train a goal conditioned Q learning agent, with epsilon greedy exploration, where the agent receives goal observations in addition to state observations. We use a simple 2 layer architecture for the value function, which additionally conditions on the goals.
In our experiments, we use 4-goals and 8-goals environment, where the agent is trained to reach each of the goals, sampled at every episode. For pixel based observations, we learn a representation $\phi$, which can be trained via any self-supervised representation learning objective. For our experiments, we use the Deep InfoMax algorithm \cite{mazoure2020deep} for training the representation learner $\phi$. Since the environment is quite simple, we found that even pre-training $\phi$ with random rollout policies is often good enough for training the value functions, as suggested by our experimental results. For $\modelname$, we additionally use a discrete bottleneck on top of the embedding, and experimentally show that $\modelname$ can significantly outperform a baseline goal DQN agent.

\subsection{KeyChest Domain}
\label{sec:app_keychest_details}
For prototype and motivation of $\modelname$, we also used a simple discrete state and action KeyChest environment, as shown in figure \ref{fig:keychest}. This task is inspired from the HRAC algorithm \cite{HRAC}, where the agent starts stochastically in the environment, and the goal is to pick up a key and open a chest. We follow a HRL setup in this task, where the higher level policy provides discrete goals in the state space for the lower level policy to reach. Since the environment has injected stochasticity, such a task requires both low-level control conditioned on the goals, provided by a higher level planner. 

The environment has a 3-dimensional state space where the first two represent the position of the agent, while the third dimension represents whether the agent has picked up the key or not. The reward function is sparse, such that the agent only receives a reward $+1$ for picking up the key, and a reward of $+5$ if it can open the chest; otherwise zero rewards elsewhere. This hints to a hard exploration task, where specification of goal  plays a key role. The lower level policy receives a goal reaching reward only, upon reaching the goal state. The higher level policy receives rewards from the environment directly. For the goal reaching reward, we use a standard Euclidean distance between states and goals, and the goal is achieved by the lower level policy if this distance is below a threshold of $0.5$.

For $\modelname$, we apply a discrete bottleneck by first learning a representation of the discrete goal in state space, using an encoder. We then apply a discrete bottleneck on the learnt embedding, followed by a decoder that maps the learnt discrete goal embedding back to the original state space. For baseline \cite{HRAC}, the algorithm uses the raw goal states provided by the higher level policy. Both the higher and lower level policies are learnt with actor-critic algorithms. The high level policy is trained based on the task reward, whereas the lower level policy is rewarded for reaching the goals, or nearby regions of the goals, provided by the higher level policy.

\subsection{MiniGrid Environments}
\label{sec:app_amigo_details}

For our experiments, we follow the setup of \cite{AMIGO} and \cite{RIDE} and evaluate $\modelname$ on a simple procedurally generated MiniGrid environment \cite{gym_minigrid}. The minigrid environments are a suite of hard exploration testbeds in RL where the task is designed such that exploration and representation of the visual observations can be disentangled. Following \cite{AMIGO}, we use the \textbf{KCharder}, KeyCorrS4R3 environment, which requires finding a key that can unlock a door which blocks a room. This door needs to be opened by the agent so as to reach the goal. In our experiments, we use the same learning rates, network architecture and other hyperparameters as in the AMIGO paper \cite{AMIGO}. 
We use the open-sourced implementation provided by the authors (for more details, see \cite{AMIGO}), and simply integrate $\modelname$ on top of the AMIGO baseline, learning a factorial representation for the goal observations.

\subsection{Ant Manipulation tasks}

We employ an ant robot with a continuous 8-dimensional action space for all three Ant manipulation tasks. Each episode terminates at 500 time steps in all three tasks.

\paragraph{AntMazeSparse} AntMazeSparse is a challenging navigation task with sparse rewards. This environment has a continuous state space including current position, velocity, target location and the current timestep $t$. The agent is provided with a sparse reward by $+1$ only if the Euclidean distance between the agent and the target location is smaller than 1, where the target location is set at $(2.0, 9.0)$ in the center corridor. 

\paragraph{AntPush} This environment has the same state space as AntMazeSparse task. The difference is that this environment has a movable block which the agent can interact with. To successfully reach the target position $(0.0, 19.0)$, the agent must push the large block to the side to clear the path to the target location. The success of the agent is defined as having the Euclidean distance of 5 from the target position. 

\paragraph{AntFall} This environment extends the navigation to three dimensions. Similar to the AntPush task, the environment still has a movable block, while the agent must move the block into a chasm instead of pushing it aside, so that it may walk over it without falling to reach the target position. The target position is fixed to $(0.0, 27.0, 4.5)$ in this environment.

For our experiments, we use the setup provided by the HRAC baseline \cite{HRAC}. In HRAC, an additional adjacency constrained is trained, along with the lower and higher level policies, such that the goals provided by higher controller are within a constrained region of the state space, that can be reachable by the lower level policy. The higher and lower level policies are both trained based on actor-critic algorithms, with separate replay buffers for each policy. The replay buffer for the higher level policy stores one every K transitions. 

We integrate $\modelname$ on top of the HRAC baseline setup, where we apply the discrete goal bottleneck based on the output embeddings from higher level policies. Compared to HRAC, $\modelname$ with HRAC would condition the lower level policies on the discrete embeddings of goals.

Figure \ref{fig:supp_hrac_ant} provides results for all the Ant manipulation tasks. Here we provide the results obtained for all the 5 different Ant environments, and we experiment for a range of discrete factors from 2 to 32. We find that while the best performing factor $G$ is not consistent for all tasks, in general factors of 4, 8 and 16 typically outperform 2 and 32 factors.

\begin{figure}[!htbp]
    \centering
    {{\includegraphics[trim=1cm 0cm 1cm 1cm, clip=true, width=5cm]{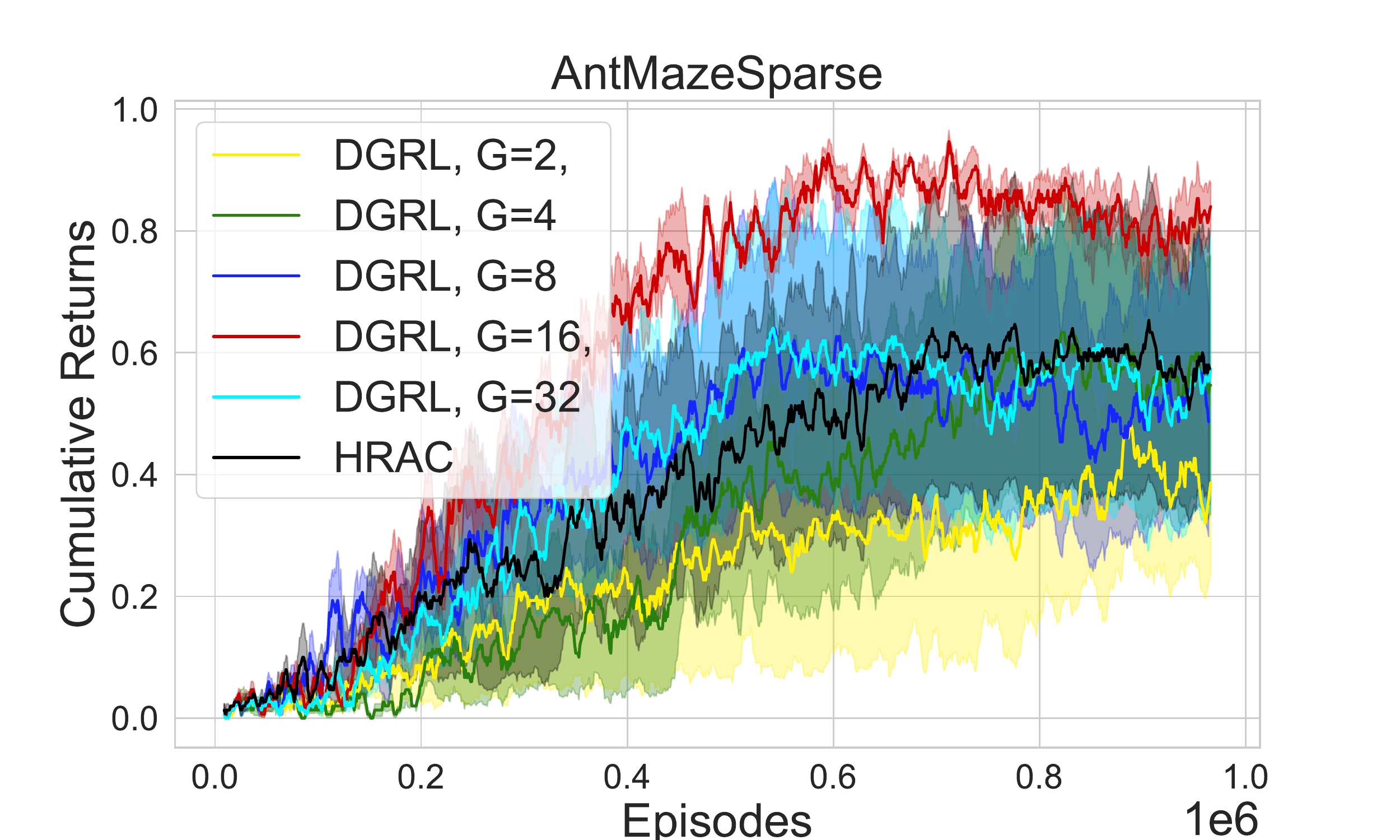} }}%
    \hspace{-0.4cm}
    {{\includegraphics[trim=1cm 0cm 1cm 1cm, clip=true,width=5cm]{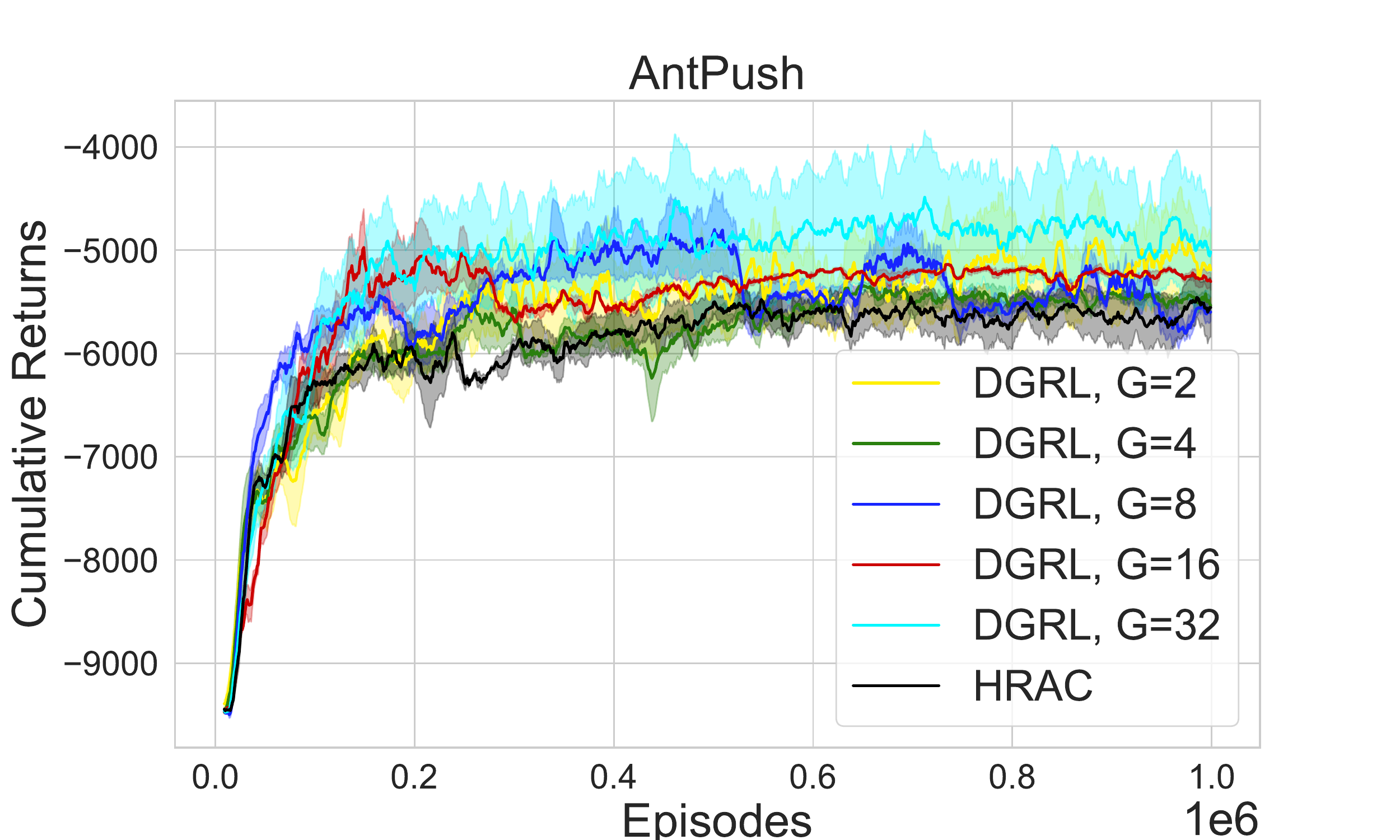} }}%
    {{\includegraphics[trim=1cm 0cm 1cm 1cm, clip=true,width=5cm]{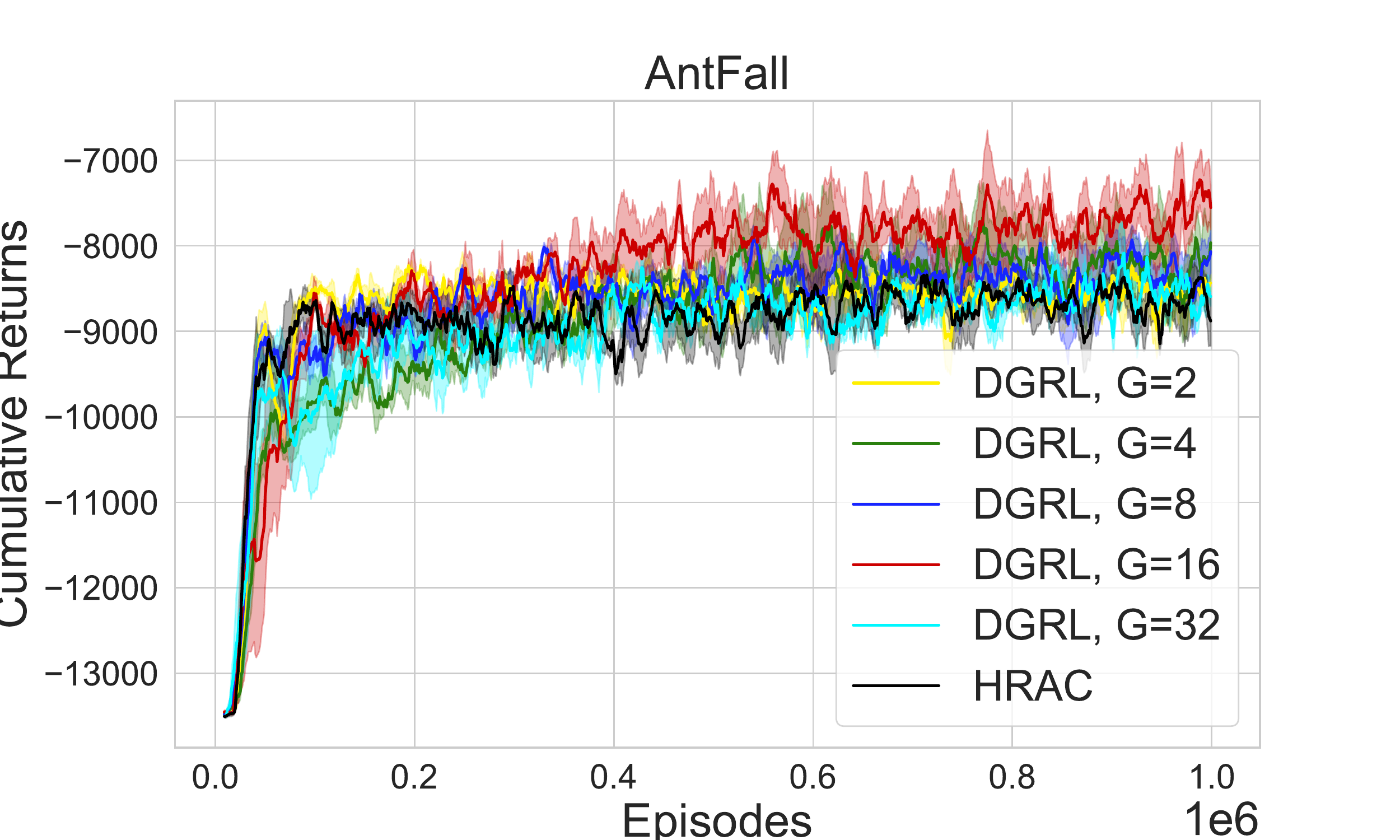} }}
    {{\includegraphics[trim=1cm 0cm 1cm 1cm, clip=true,width=5cm]{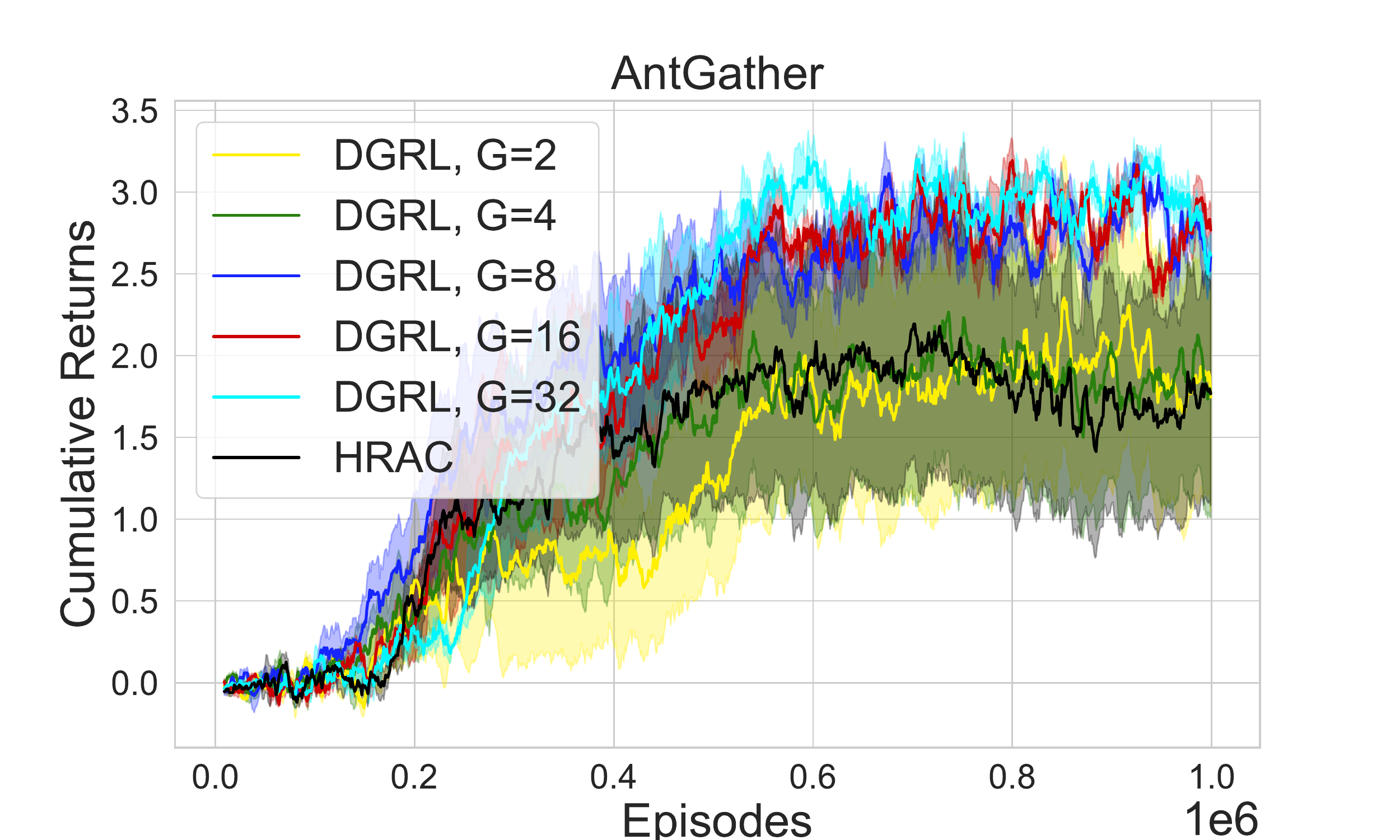} }}
    \caption{Ablation analysis on the Ant tasks. Standard goal conditioned HRL tasks, AntGather (shown on bottom left) and AntMaze, showing performance comparison of $\modelname$ with baseline HRAC \cite{HRAC} across 3 random environment seed. We include comparisons with all factors $G$ for all the Ant manipulation environments that we expeirmented on, given computation budget.}%
    \label{fig:supp_hrac_ant}%
\end{figure}

\subsection{Ant Navigation Tasks}
\label{sec:app_ant_navig_details}

Following the setup in RIS ~\cite{RIS}, we employ an ant robot with a continuous 8-dimensional action space for all four Ant navigation tasks. reasoning. Figure \ref{fig:app_AntU_result} shows an additional result on the U-shaped Ant maze navigation environment. All these environments have a continuous state space including the current position, orientation, the joint angles and the velocities. The goal is considered reached when the Euclidean distance from the target position of the environment is less than 0.5. The agent gets a $-1$ reward at each time step until the goal is reached. The $U$-shaped maze has a size of $7.5\times 18$, the $S$-shaped maze has a size of $12\times 12$, while the $\pi$-shaped maze and $\omega$-shaped maze share the same size of $16\times 16$. In the training stage, target and initial position of the agent are sampled randomly at the beginning of each episode. At evaluation, the initial state and the target position are fixed, as illustrated in Figure~\ref{fig:app_ant_navig_env}, to test the performance of the agent on challenging configurations that require temporally extended

\begin{figure}[!htbp]
\centering
\subfigure[U-shaped maze]{
{{\includegraphics[trim=0cm 0cm 0cm 0cm, clip=true, width=3cm]{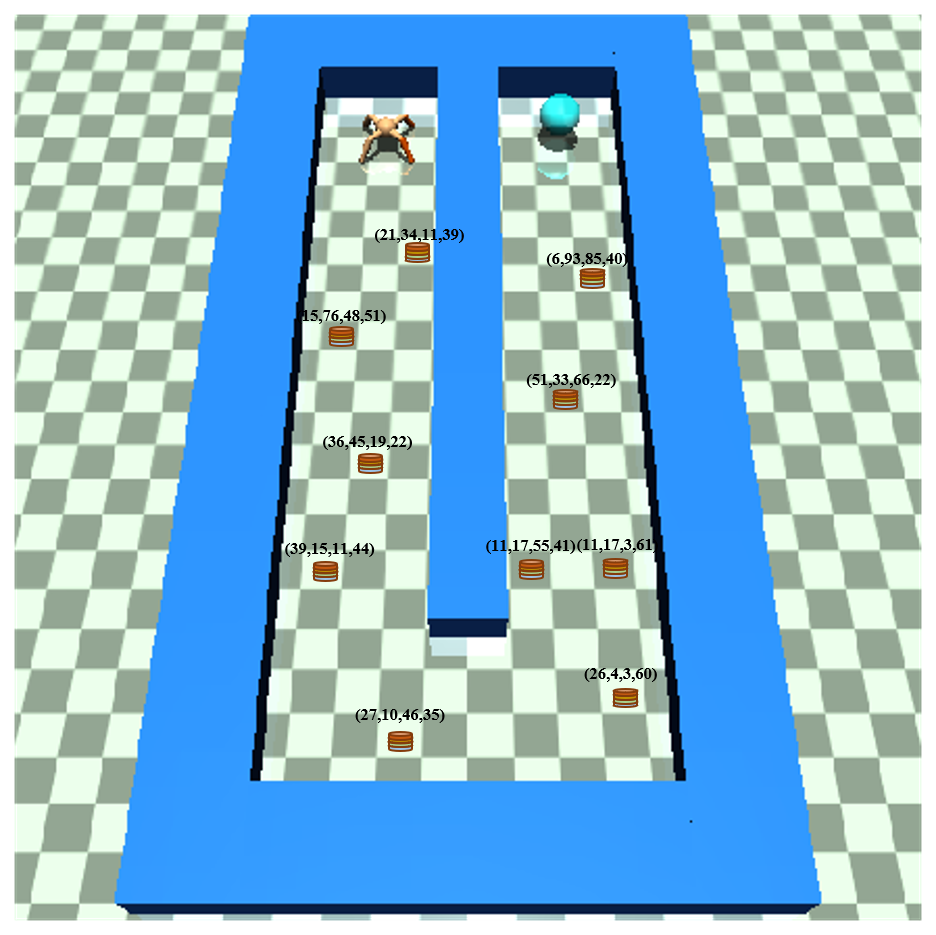} }}%
\hspace{-0.4cm}

}
\subfigure[$\Pi$-shaped maze]{
{{\includegraphics[trim=0cm 0cm 0cm 0cm, clip=true,width=3cm]{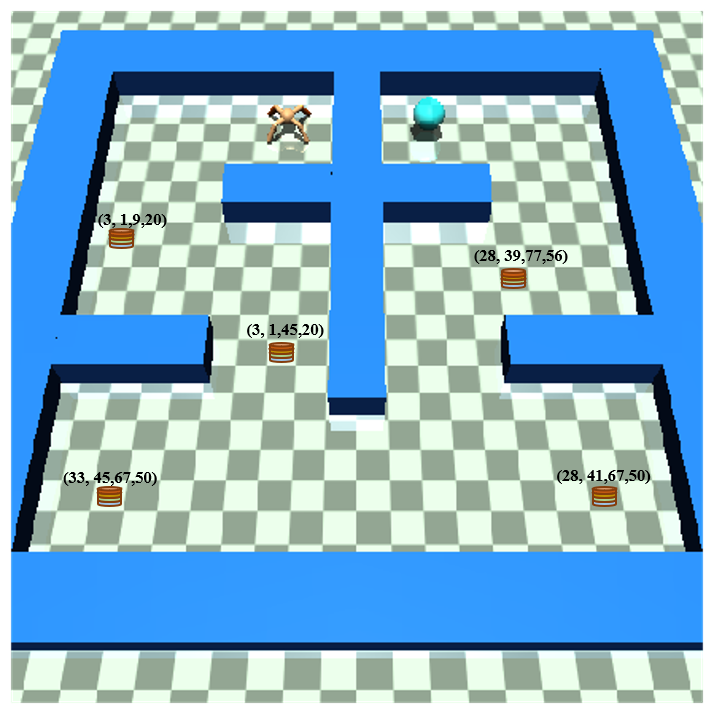} }}%

}
\subfigure[ Experiment results comparing different factors $G$ for \modelname\ integrated with RIS, and compared with a RIS baseline (baseline data provided by authors)]{
{{\includegraphics[trim=1cm 0cm 1cm 1cm, clip=true,width=5.5cm]{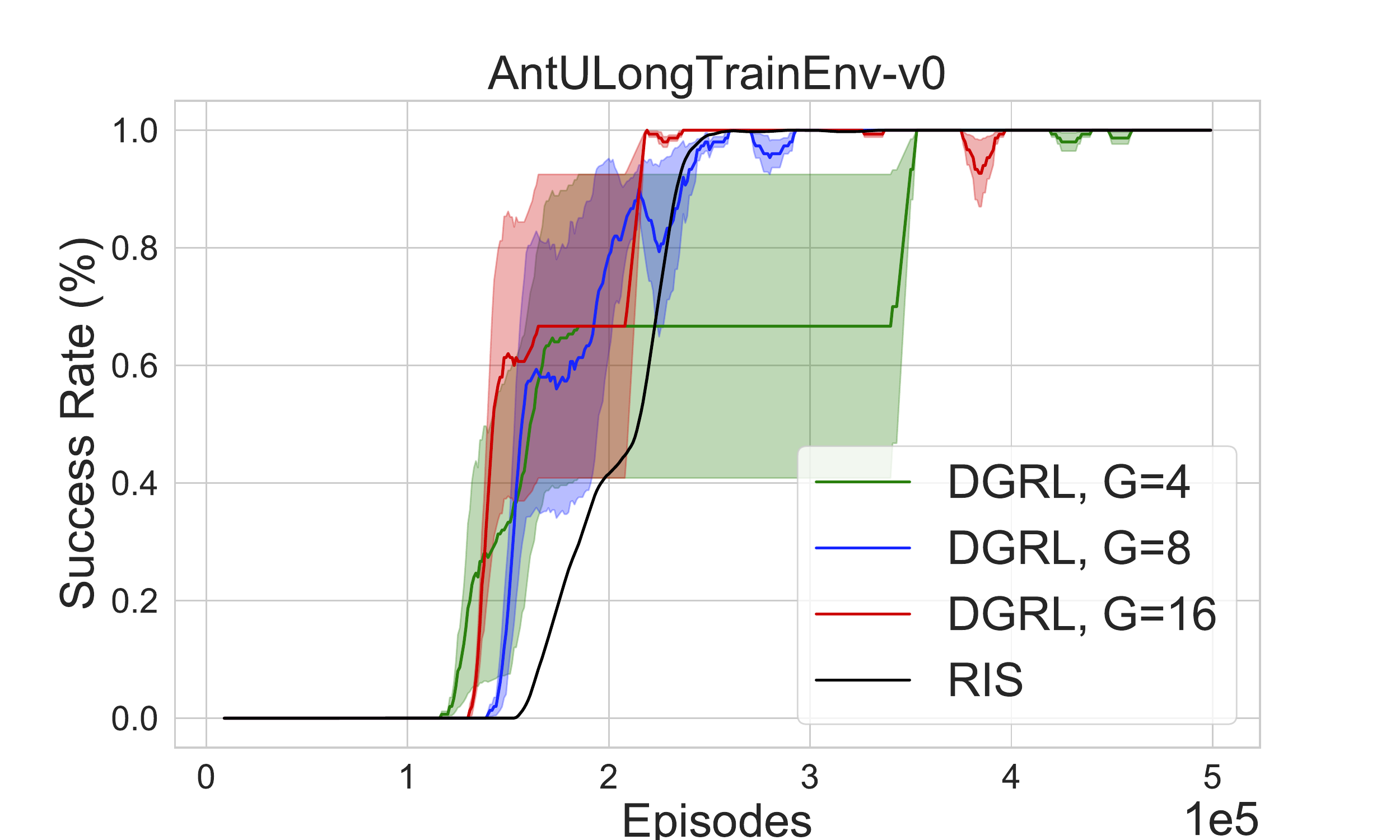} }}%
\label{fig:app_AntU_result}
}
\caption{Visualization of Ant navigation environment, which are standard goal conditioned HRL tasks. We integrate $\modelname$ on top of the existing state of the art RIS baseline \cite{RIS}. Performance comparison of success rate on AntU-shaped environment (right). We compare $\modelname$ with different factors $G$ and the RIS baseline \cite{RIS} which has already been shown to outperform other goal conditioned baselines such as LEAP \cite{LEAP} on the Ant navigation tasks. }
\label{fig:app_ant_navig_env}
\end{figure}

\subsection{Robotic Manipulation Task}
\label{sec:app_sawyer_details}

We consider the visual robot manipulation task Sawyer, from the multiworld environments of \cite{LEAP}. Our setup is based entirely on the experiment details and code provided in the open-source codebase of RIS~\cite{RIS}.
The goal of the agent is to operate a 2D position control and manipulation task. The observations of the environment are based on $84 \times 84$ RGB image of the environment. $\modelname$ is trained using a discrete bottleneck based on the learnt representation of the pixel based observations of the environment. We compare the Sawyer environment based on an existing RIS \cite{RIS} baseline which is already shown to outperform other baselines on this task. For more details on the suite of multiworld environments, including the Sawyer manipulation task, see \cite{LEAP}. In this setting, we can additionally check for generalization. At test time, the cumulative returns of the agent are provided when evaluated on a slightly different task, such as a hard configuration where we evaluate the policy and bottleneck based on learnt representations in the given task. 

\begin{wrapfigure}{r}{0.53\textwidth}
  \begin{center}
    \includegraphics[width=\textwidth]{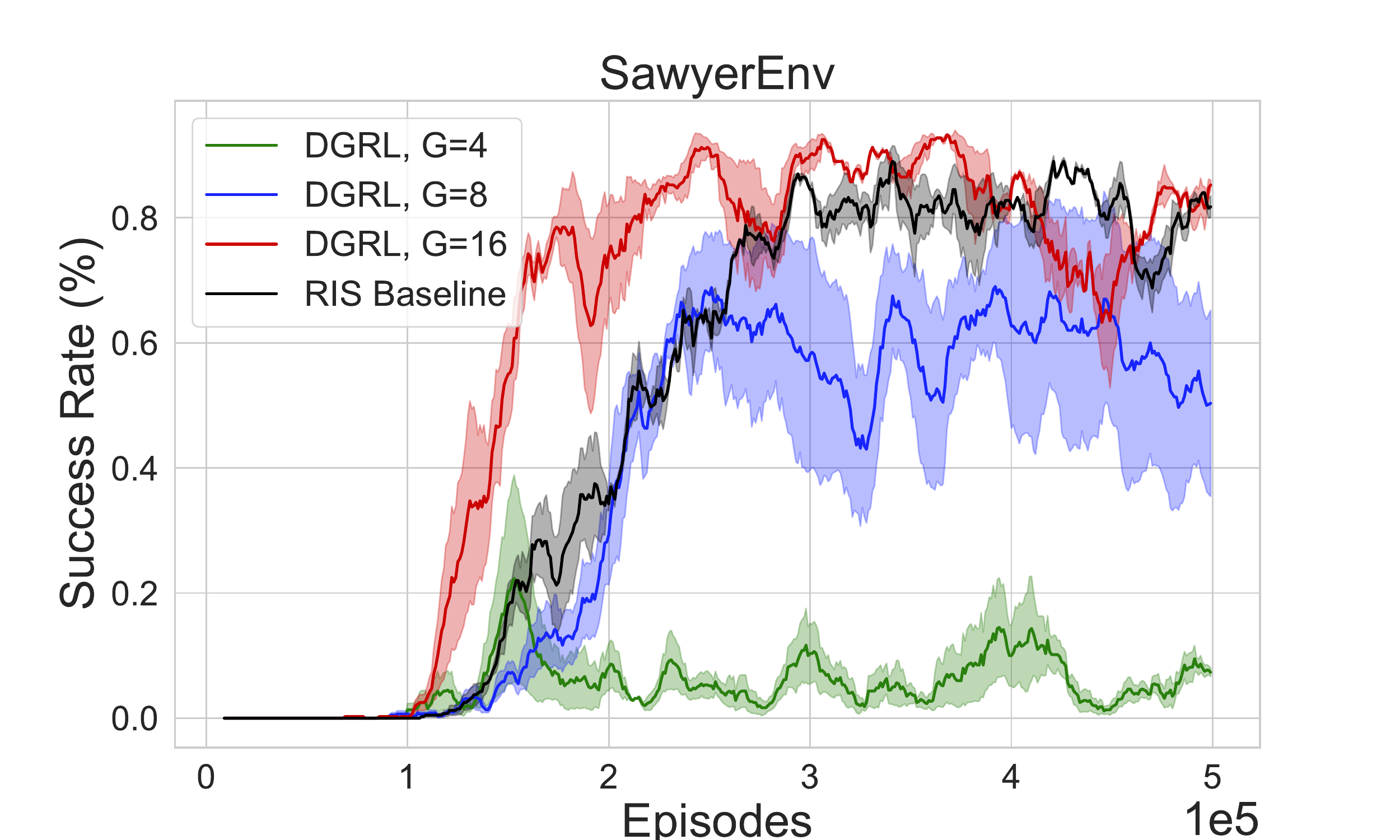}
  \end{center}
  \vspace{-1.5mm}
  \caption{Performance comparison of $\modelname$ with RIS baseline on a complex vision-based robot manipulation task}
  \label{fig:supp_sawyer_success}
  \vspace{-3mm}
\end{wrapfigure}

\section{Proof of Theorem \ref{thm:1}}
\label{sec:app_theory}

We consider a  goal-conditioned Markov decision process, defined by states $s \in \Scal$, goals $g \in \Gcal$, actions $a \in \Acal$, a reward function $r(s, a,g)$, a transition dynamics $p(s'|s, a)$,  a  maximum horizon $T$, the initial state distribution $\rho_0$, and the goal distribution $\rho_g$. The objective in goal-conditioned RL is to obtain  a policy $\pi(a|s,g)$ to maximize the expected sum of rewards $\EE_{g \sim \rho_g,((s_{t},a_{t}))_{t=1}^T }[ \sum_{t=1}^T r(s_t,a_t,g)]$ where the sequence of state-action pairs  $((s_{t},a_{t}))_{t=1}^T$ is sampled according to  $s_0\sim\rho_0$, $a_t \sim \pi(a_t | s_t,  g)$, and $s_{t+1}\sim p(s_{t+1}|s_t, a_t)$. To study the phenomenon of  the goal observations, we define 
$$
\varphi_\theta(g)=\EE_{((s_{t},a_{t}))_{t=1}^T}\left[\sum_{t=1}^T r(s_t,a_t,g) \right] ,
$$ 
where $\theta \in \RR^m$ is the vector containing model parameters learned through goals observed during training phase, $g_1,\dots,g_n$. Define $\one\{a=b\}=1$ if $a=b$ and $\one\{a=b\}=0$ if $a\neq b$. Moreover, $\varphi_\theta \circ \varsigma$ represents the composition of functions $\varphi_\theta$ and $\varsigma$.

In our proof, we will use the following previous result:

\begin{lemma}[Bretagnolle-Huber-Carol inequality] \citep[Proposition A.6.6]{van1996} \label{lemma:1}
        If $X_1, \dots, X_K$ are multinomially distributed with parameters $m$ and $p_1, \dots, p_K$, then for $\bar M > 0$,
        \[
        \PP\left( \sum_{k=1}^K \left|\frac{X_k}{m} - p_k \right| \geq \bar M \right) \leq 2^K \exp\left( -\frac{m\bar M^2}{2} \right).
        \]
\end{lemma}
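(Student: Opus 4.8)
The plan is to exploit the partition of the goal space induced by the discretization and to recognize $\omega(\theta)$ as precisely the \emph{within-cell} discrepancy that the discretized estimator removes. Assuming, as in the setup, that the training goals $g_1,\dots,g_n$ are i.i.d. draws from $\rho_g$, write $p_k=\rho_g(\Gcal_k)$ for the true mass of cell $\Gcal_k$, $\hat p_k=|\Ical_k|/n$ for its empirical frequency, and $\mu_k(\theta)=\EE_{g\sim\rho_g}[\varphi_\theta(g)\mid g\in\Gcal_k]$ for the conditional mean. Since $\hat p_k\cdot\frac{1}{|\Ical_k|}\sum_{i\in\Ical_k}\varphi_\theta(g_i)=\frac1n\sum_{i\in\Ical_k}\varphi_\theta(g_i)$, summing over $k\in\Ical_Q$ shows that $\omega(\theta)=\frac1n\sum_{i=1}^n\varphi_\theta(g_i)-\sum_{k\in\Ical_Q}\hat p_k\,\mu_k(\theta)$; equivalently, subtracting $\omega(\theta)$ from the training average replaces each empirical cell mean by the true conditional mean, leaving $\frac1n\sum_i\varphi_\theta(g_i)-\omega(\theta)=\sum_k\hat p_k\,\mu_k(\theta)$. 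This identity is the crux, after which everything reduces to comparing $\hat p_k$ with $p_k$.

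Next I would handle the two choices of $\varsigma$. For $\varsigma=\id$, using $\EE_{g\sim\rho_g}[\varphi_\theta(g)]=\sum_k p_k\mu_k(\theta)$,
\[
\EE_{g\sim\rho_g}[\varphi_\theta(g)]-\left(\frac1n\sum_{i=1}^n\varphi_\theta(g_i)-\omega(\theta)\right)=\sum_k (p_k-\hat p_k)\,\mu_k(\theta).
\]
For $\varsigma=q$, observe that $q$ is constant equal to $\Qcal_k$ on $\Gcal_k$ (indeed $g\in\Gcal_k\iff q(g)=\Qcal_k$, since $\hat d$ vanishes only on equal arguments), so $\varphi_\theta\circ q$ takes the single value $\nu_k:=\varphi_\theta(\Qcal_k)$ on $\Gcal_k$; there is no within-cell variation, no $\omega$ term appears, and the analogous gap is $\sum_k(p_k-\hat p_k)\nu_k$. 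In both cases the gap has the form $\sum_k(p_k-\hat p_k)c_k$ with cell-constants $c_k\in\{\mu_k(\theta),\nu_k\}$ obeying $|c_k|\le c_0$, where $c_0$ is the uniform value bound $|\varphi_\theta(\cdot)|\le c_0=T\,R_{\max}$ coming from bounded rewards, independent of $\theta$ and of the goal.

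For the concentration term I would note that the counts $(|\Ical_k|)_{k\in[|\Qcal|]}$ are multinomial with parameters $n$ and $(p_k)$, since each i.i.d. goal lands in exactly one cell. Applying Lemma~\ref{lemma:1} with $K=|\Qcal|$ and choosing $\bar M$ so that $2^{|\Qcal|}e^{-n\bar M^2/2}=\delta$ gives, with probability at least $1-\delta$, $\sum_k|\hat p_k-p_k|\le\bar M=\sqrt{2(|\Qcal|\ln2+\ln(1/\delta))/n}\le\sqrt{|\Qcal|}\,\sqrt{2\ln(2/\delta)/n}$. Crucially this event involves neither $\theta$ nor $\varsigma$, so on it I can bound, for every $\theta$ and both choices of $\varsigma$ at once, $\big|\sum_k(p_k-\hat p_k)c_k\big|\le c_0\sum_k|p_k-\hat p_k|\le c_0\sqrt{|\Qcal|}\,\sqrt{2\ln(2/\delta)/n}$. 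Absorbing $c_0\sqrt{|\Qcal|}$ into the constant $c$ yields the stated inequality; that the event is free of $\theta$ is exactly what lets the bound hold for the data-dependent learned parameter $\theta$.

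Finally, for the uniform vanishing of $\omega$ I would use the identity of the first paragraph to write $\omega(\theta)$ as the difference of two terms each compared with $\EE_{g\sim\rho_g}[\varphi_\theta(g)]$: the overall empirical mean $\frac1n\sum_i\varphi_\theta(g_i)$, and the weighted mean $\sum_k\hat p_k\mu_k(\theta)$. The first tends to $\EE_{g\sim\rho_g}[\varphi_\theta(g)]$ uniformly over the compact $\Theta$ by the uniform law of large numbers; the second differs from $\EE_{g\sim\rho_g}[\varphi_\theta(g)]=\sum_k p_k\mu_k(\theta)$ by $\sum_k(\hat p_k-p_k)\mu_k(\theta)$, which is bounded by $\big(\max_k\sup_{\theta}|\mu_k(\theta)|\big)\sum_k|\hat p_k-p_k|\xrightarrow{P}0$, using $\sum_k|\hat p_k-p_k|\xrightarrow{P}0$ and $\sup_\theta|\mu_k(\theta)|\le\EE[\chi\mid g\in\Gcal_k]<\infty$. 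Subtracting gives $\sup_\theta|\omega(\theta)|\xrightarrow{P}0$. I expect the main obstacle to be this last uniform law of large numbers: upgrading pointwise to uniform convergence from only continuity-almost-everywhere and an integrable envelope $\chi$ (rather than Lipschitzness) requires the classical compactness argument — cover $\Theta$ by finitely many small balls, control the oscillation of $\varphi_\theta$ on each ball by a $\chi$-integrable modulus, apply the pointwise law of large numbers at the centers, and take a union — and this is the one place where the regularity hypotheses are genuinely used.
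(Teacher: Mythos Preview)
Your proposal does not address the stated lemma at all. Lemma~\ref{lemma:1} is the Bretagnolle--Huber--Carol inequality for multinomial counts; it is a cited external result (Proposition~A.6.6 in \citep{van1996}) and the paper does not prove it. What you have written is a proof of Theorem~\ref{thm:1}, which \emph{uses} Lemma~\ref{lemma:1} as a black box. You even invoke ``Lemma~\ref{lemma:1}'' inside your argument, so you are applying the very statement you were asked to establish.

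If your intent was to prove Theorem~\ref{thm:1}, then your argument is essentially correct and close to the paper's. Both proofs partition $\Gcal$ into the cells $\Gcal_k$, identify the within-cell discrepancy with $\omega(\theta)$ (vanishing when $\varsigma=q$ since $\varphi_\theta\circ q$ is constant on each cell), and bound the between-cell term $\sum_k(p_k-\hat p_k)c_k$ via Lemma~\ref{lemma:1}. For the convergence $\sup_\theta|\omega(\theta)|\xrightarrow{P}0$ you take a somewhat different route: the paper bounds $\omega(\theta)\le\max_{k\in\Ical_\Qcal}\big(\frac{1}{|\Ical_k|}\sum_{i\in\Ical_k}\varphi_\theta(g_i)-\mu_k(\theta)\big)$ and applies the uniform law of large numbers cell by cell (together with a union bound over the finitely many cells), whereas you split $\omega(\theta)$ as the global empirical mean minus $\sum_k\hat p_k\mu_k(\theta)$ and control each piece against $\EE_g[\varphi_\theta(g)]$ using one global ULLN plus the consistency of $\hat p_k$. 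Both routes work under the stated hypotheses; yours avoids the per-cell ULLN at the cost of needing the global one.
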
  

\begin{proof}
We decompose the goal probability $p(g)$ into $p(g)=\sum_k p(g|g\in \Gcal_k)p(g \in \Gcal_k)$ with the neighborhood set $\{\Gcal_k\}_k$ and  analyze the concentrations of random variables in terms of  $p(g|g\in \Gcal_k)$ and $p(g \in \Gcal_k)$. Let  $\varsigma \in \{\id, q\}$. We define $M=\sup_{g \in \Gcal}\varphi_\theta(g) $. Then, \begin{align*}
\EE_{g \sim \rho_g}[ (\varphi_\theta \circ \varsigma)(g))]  =\sum_{k}  \EE_{g \sim \rho_g}[(\varphi_\theta \circ \varsigma)(g)|g\in  \Gcal_{k}]\Pr(g_{}\in\Gcal_{k}^{}) ,
\end{align*}
Using this, we  decompose the difference as 
\begin{align} \label{eq:1}
& \frac{1}{n} \sum_{i=1}^n  (\varphi_\theta \circ \varsigma)(g_{i})-\EE_{g \sim \rho_g}[ (\varphi_\theta \circ \varsigma)(g))]  
\\ \nonumber & =\sum_{k} \EE_{g \sim \rho_g}[(\varphi_\theta \circ \varsigma)(g)|g\in  \Gcal_{k}]\left( \frac{|\Ical_{k}^{}|}{n}-\Pr(g\in  \Gcal_{k}^{})\right)
\\ \nonumber & \quad +\left( \frac{1}{n} \sum_{i=1}^n (\varphi_\theta \circ \varsigma)(g_{i})-\sum_{k}^{} \EE_{g \sim \rho_g}[(\varphi_\theta \circ \varsigma)(g)|g\in  \Gcal_{k}]\frac{|\Ical_{k}^{}|}{n}\right). 
\end{align}
Since 
$
\frac{1}{n} \sum_{i=1}^n (\varphi_\theta \circ \varsigma)(g_{i})=\frac{1}{n}\sum_{k}^{}  \sum_{i \in \Ical_{k}}(\varphi_\theta \circ \varsigma)(g_{i})$, 
\begin{align*}
&  \frac{1}{n} \sum_{i=1}^n (\varphi_\theta \circ \varsigma)(g_{i}
)-\sum_{k}^{} \EE_{g \sim \rho_g}[(\varphi_\theta \circ \varsigma)(g)|g\in  \Gcal_{k}]\frac{|\Ical_{k}^{}|}{n}\\ & =\frac{1}{n}\sum_{k}^{}|\Ical_{k}^{}|\left(\frac{1}{|\Ical_{k}^{}|}\sum_{i \in \Ical_{k}}(\varphi_\theta \circ \varsigma)(g_{i})-\EE_{g \sim \rho_g}[(\varphi_\theta \circ \varsigma)(g)|g\in  \Gcal_{k}] \right).
\
\end{align*}
Substituting these into equation \eqref{eq:1} yields
\begin{align} \label{eq:2} 
& \frac{1}{n} \sum_{i=1}^n  (\varphi_\theta \circ \varsigma)(g_{i})-\EE_{g \sim \rho_g}[ (\varphi_\theta \circ \varsigma)(g))]   
\\ \nonumber & =\sum_{k}^{} \EE_{g \sim \rho_g}[(\varphi_\theta \circ \varsigma)(g)|g\in  \Gcal_{k}]\left( \frac{|\Ical_{k}^{}|}{n} - \Pr(g\in    \Gcal_{k}^{})\right)
 \\ \nonumber & \quad +\frac{1}{n}\sum_{k}^{}|\Ical_{k}^{}|\left(\frac{1}{|\Ical_{k}^{}|}\sum_{i \in \Ical_{k}}(\varphi_\theta \circ \varsigma)(g_{i})-\EE_{g \sim \rho_g}[(\varphi_\theta \circ \varsigma)(g)|g\in  \Gcal_{k}] \right)
\\ \nonumber & \le M\sum_{k}^{}\left| \frac{|\Ical_{k}^{}|}{n}-\Pr(z\in  \Gcal_{k}^{})\right| +\frac{1}{n}\sum_{k}^{}|\Ical_{k}^{}|\left(\frac{1}{|\Ical_{k}^{}|}\sum_{i \in \Ical_{k}}(\varphi_\theta \circ \varsigma)(g_{i})-\EE_{g \sim \rho_g}[(\varphi_\theta \circ \varsigma)(g)|g\in  \Gcal_{k}] \right)
\end{align}
By using Lemma \ref{lemma:1} by setting $\delta= 2^K \exp\left( -\frac{m \bar M^2}{2} \right)$ and solving for $M$, we have that for any $\delta>0$, with probability at least $1-\delta$,
$\sum_{k}^{}\left| \frac{|\Ical_{k}^{}|}{n}-\Pr(z\in  \Gcal_{k}^{})\right| \le \sqrt{\frac{2\ln(2^{|\Qcal|}/\delta)}{n}}\le \sqrt{\frac{2|\Qcal|\ln(2/\delta)}{n}}$, where the last inequality follows from the fact that $1/\delta \le 1/\delta^{|Q|}$ as $\delta \in (0,1)$ and $|Q| \ge 1$.  Here, notice that the term of $\sum_{k}^{}\left| \frac{|\Ical_{k}^{}|}{n}-\Pr(z\in  \Gcal_{k}^{y})\right|$ does not depend on $\theta$. Moreover, note that for any $(f,h,M)$ such that $M>0$ and $B\ge0$ for all $X$, we have that 
$
\PP(f(X)\ge M)\ge \PP(f(X)>M) \ge  \PP(Bf(X)+h(X)>BM+h(X)),
$
where the probability is with respect to the randomness of  $X$.
Thus,  by combining this and equation \eqref{eq:2}, we have that for any $\delta>0$, with probability at least $1-\delta$,
 the following holds for all $\theta$,
\begin{align} \label{eq:4}
&\frac{1}{n} \sum_{i=1}^n  (\varphi_\theta \circ \varsigma)(g_{i})-\EE_{g \sim \rho_g}[ (\varphi_\theta \circ \varsigma)(g))] 
\\ \nonumber & \le \frac{1}{n}\sum_{k=1}^{|\Qcal|}|\Ical_{k}^{}|\left(\frac{1}{|\Ical_{k}^{}|}\sum_{i \in \Ical_{k}}(\varphi_\theta \circ \varsigma)(g_{i})-\EE_{g \sim \rho_g}[(\varphi_\theta \circ \varsigma)(g)|g\in  \Gcal_{k}] \right)+c  \sqrt{\frac{2\ln(2/\delta)}{n}}
\\ \nonumber & =\frac{1}{n}\sum_{k\in \Ical_{\Qcal}}|\Ical_{k}^{}|\left(\frac{1}{|\Ical_{k}^{}|}\sum_{i \in \Ical_{k}}(\varphi_\theta \circ \varsigma)(g_{i})-\EE_{g \sim \rho_g}[(\varphi_\theta \circ \varsigma)(g)|g\in  \Gcal_{k}] \right)
+c  \sqrt{\frac{2\ln(2/\delta)}{n}}
\end{align}
If $\varsigma=\id$, then 
\begin{align*}
&\frac{1}{n}\sum_{k\in \Ical_{\Qcal}}|\Ical_{k}^{}|\left(\frac{1}{|\Ical_{k}^{}|}\sum_{i \in \Ical_{k}}(\varphi_\theta \circ \varsigma)(g_{i})-\EE_{g \sim \rho_g}[(\varphi_\theta \circ \varsigma)(g)|g\in  \Gcal_{k}] \right) 
\\ & = \frac{1}{n}\sum_{k\in \Ical_{\Qcal}}|\Ical_{k}^{}|\left(\frac{1}{|\Ical_{k}^{}|}\sum_{i \in \Ical_{k}}\varphi_\theta (g_{i})-\EE_{g \sim \rho_g}[\varphi_\theta (g)|g\in  \Gcal_{k}] \right) = \omega(\theta).
\end{align*}
If $\varsigma=q$, then 
\begin{align*}
&\frac{1}{n}\sum_{k\in \Ical_{\Qcal}}|\Ical_{k}^{}|\left(\frac{1}{|\Ical_{k}^{}|}\sum_{i \in \Ical_{k}}(\varphi_\theta \circ \varsigma)(g_{i})-\EE_{g \sim \rho_g}[(\varphi_\theta \circ \varsigma)(g)|g\in  \Gcal_{k}] \right)
\\ & =\frac{1}{n}\sum_{k\in \Ical_{\Qcal}}|\Ical_{k}^{}|\left(\frac{1}{|\Ical_{k}^{}|}\sum_{i \in \Ical_{k}}(\varphi_\theta \circ q)(g_{i})-\EE_{g \sim \rho_g}[(\varphi_\theta \circ q)(g)|g\in  \Gcal_{k}] \right)
\\ & =\frac{1}{n}\sum_{k\in \Ical_{\Qcal}}|\Ical_{k}^{}|\left(\frac{1}{|\Ical_{k}^{}|}\sum_{i \in \Ical_{k}}\varphi_\theta (\Qcal_k)-\varphi_\theta (\Qcal_k) \right) =0
\end{align*}
Therefore, for any $\delta>0$,  with  probability at least $1-\delta$, the following holds for any $\theta \in \RR^m$ and $\varsigma \in \{\id, q\}$: \begin{align*}
\EE_{g \sim \rho_g}[ (\varphi_\theta \circ \varsigma)(g))]  \ge \frac{1}{n} \sum_{i=1}^n  (\varphi_\theta \circ \varsigma)(g_{i})-  c  \sqrt{\frac{2\ln(2/\delta)}{n}} - \one\{\varsigma=\id\}\omega(\theta).
\end{align*} 
Define $u_{n}(S,\theta)=\max_{k \in \Ical_\Qcal}\frac{1}{|\Ical_{k}^{}|}\sum_{i \in \Ical_{k}}\varphi_\theta (g_{i})-\EE_{g \sim \rho_g}[\varphi_\theta (g)|g\in  \Gcal_{k}]$. Then, since $\sum_{k\in \Ical_{\Qcal}}|\Ical_{k}|=n$,
\begin{align*}
\omega(\theta) \le u_{n}(S,\theta)\frac{1}{n}\sum_{k\in \Ical_{\Qcal}}|\Ical_{k}|=u_{n}(S,\theta). 
\end{align*}
For each $k \in [|\Qcal|]$, if $p(g \in \Gcal_k) = 0$, then the probability of the event of $|\Ical_k|\ge1$ is zero. Thus, by taking union bounds, with probability one, for all $k \in \Ical_{\Qcal}$, $|\Ical_k|\rightarrow \infty$ as $n \rightarrow \infty$. Therefore, by using the uniform law of large numbers (Theorem 2 of \citep{jennrich1969asymptotic}) and union bounds over  $k \in \Ical_{\Qcal}$ (noticing that $|\Ical_{\Qcal}|\le |\Qcal|$ is finite), we have that $\sup_{\theta \in \Theta} |u_{n}(S,\theta)| {\xrightarrow  {P}}\ 0$ when $n \rightarrow \infty$. Thus, 
$$
0\le \sup_{\theta \in \Theta}\left|\omega(\theta) \right| \le\sup_{\theta \in \Theta} |u_{n}(S,\theta)| {\xrightarrow  {P}}\ 0 \ \ \ \text{ when } \ \ \ n \rightarrow \infty.   
$$

\end{proof}

\section{Algorithm in Goal Conditioned RL}
\label{sec:app_algo}
We present the entire algorithm of DGRL built on top of RIS in algoirithm \ref{algo:algo}.
\begin{algorithm}[t]
    \caption{RIS with DGRL (changes to RIS in blue)}
   \label{algo:algo}
\begin{algorithmic}[1]
  \STATE Initialize replay buffer $D$
  \STATE Initialize $Q_\phi$, $\pi_\theta$, $\pi^H_\psi$
  \FOR {k = 1, 2, ...}
  \STATE Collect experience in $D$ using $\pi_\theta$ in the environment
  \STATE Sample batch $(s_t, a_t, r_t, s_{t+1}, g) \sim D$ with HER
  \STATE Sample batch of subgoal candidates $z_e \sim D$
  \STATE Update $Q_\phi$ using  Policy Evaluation
  \STATE Update $\pi^H_\psi$ using  High-Level Policy Improvement
  \STATE Output subgoal $z_e$ using $z_e\sim\pi_{\psi_{k+1}}^{H}(\cdot|s,g)$
  \STATE \textcolor{blue}{Output discrete goal embedding $z_q$ using Eq. \ref{discrete_codes} \textit{ (Discretization module)}}
  \STATE Compute prior policy with \textcolor{blue}{discrete} sub-goal embeddings 
  \begin{equation}
      \pi_{k}^{\text {prior }}(a \mid s, g):=\mathbb{E}_{z_{e} \sim \pi^{H}(. \mid s, g)}\left[\pi_{\theta_{k}^{\prime}}\left(a \mid s, \textcolor{blue}{z_{q}}\right)\right]
  \end{equation}
  \STATE Update $\pi_\theta$ using  Policy Improvement with Imagined Subgoals (Eq.~9 in RIS~\cite{RIS})
  \STATE \textcolor{blue}{Update discretization module using $\mathcal{L}_{\mathrm{discretization}} = \frac{\beta}{G}  \sum^{G}_{i}||c_{i}-\sg(e_{o_i})||^2_2$ }
  \ENDFOR
\end{algorithmic}
\end{algorithm}

\clearpage
\section{Code Snippet of $\modelname$}
\label{sec:app_code}

We show a simple code snippet of algorithm \ref{algo:algo} below. 

\lstinputlisting[language=Python]{code_snippet.py}

\end{document}